\documentclass{article}
\usepackage[utf8]{inputenc}

\usepackage[preprint]{neurips_2026}
\usepackage{subcaption}
\usepackage{graphicx}
\usepackage{enumitem}

\usepackage[utf8]{inputenc} 
\usepackage[T1]{fontenc}    
\usepackage{hyperref}       
\usepackage{url}            
\usepackage{booktabs}       
\usepackage{amsfonts}       
\usepackage{nicefrac}       
\usepackage{microtype}      
\usepackage{xcolor}         
\usepackage{dblfloatfix}

\usepackage{multirow}
\usepackage{makecell}
\usepackage{amsmath}
\usepackage{graphicx}
\usepackage{amsthm}
\usepackage{amssymb}
\usepackage{subcaption}
\usepackage[most]{tcolorbox}
\usepackage{comment}

\usepackage{tabularx}

\newtcolorbox{promptbox}[2][]{
  enhanced,
  breakable,
  colback=gray!10,
  colframe=black,
  colbacktitle=gray!25,
  coltitle=black,
  fonttitle=\bfseries,
  title={#2},
  boxrule=0.5pt,
  arc=0pt,
  left=10pt,
  right=10pt,
  top=8pt,
  bottom=8pt,
  boxsep=2pt,
  attach boxed title to top left={yshift=-2mm},
  boxed title style={
    boxrule=0.5pt,
    colframe=black,
    colback=gray!25,
    arc=0pt,
    left=4pt,
    right=4pt,
    top=2pt,
    bottom=2pt
  },
  #1
}

\newtheorem{theorem}{Theorem}
\newtheorem{proposition}[theorem]{Proposition}
\newtheorem{corollary}[theorem]{Corollary}

\title{From Correctness to Preference: A Framework for Personalized Agentic Reinforcement Learning
}

\author{%
  Ranxu Zhang$^{1}$\thanks{Equal contribution. Email: zkd\_zrx@mail.ustc.edu.cn} \quad 
  Zeyang Li$^{1}$\footnotemark[1] \quad 
  Jiacheng Huang$^{2}$ \quad 
  Rui Zhang$^{2}$ \quad \\
  \textbf{Xiaozhou Xu}$^{2}$ \quad 
  \textbf{Zhe Sun}$^{2}$ \quad 
  \textbf{Yanyong Zhang}$^{1}$ \quad 
  \textbf{Chao Wang}$^{1}$\thanks{Corresponding author.} \\
  $^{1}$University of Science and Technology of China \quad 
  $^{2}$Alibaba Group
}

\begin{document}

\maketitle

\begin{abstract}

Agentic reinforcement learning (Agentic RL) has achieved strong progress in tasks with clear success signals. 
However, many real-world agent applications require user-conditioned behavior: the same query may call for different planning strategies and tool-use decisions across users. 
This setting raises key challenges: generic rewards cannot capture heterogeneous user preferences, observed behaviors are entangled with conformity effects, and flat memories cannot support personalized skill retrieval. 
To this end, we propose a unified personalized Agentic RL framework that embeds personalization into training-time optimization. 
At its core is \emph{Personalized Anchor Reward-Decoupled Policy Optimization} (\textbf{PARPO}), which decouples generic task-quality rewards from personalized preference rewards and uses user-specific anchors to stabilize learning under heterogeneous reward scales. 
We further introduce a two-stage preference-disentangled reward model and \emph{Preference-Aligned Skill Evolution Graph Memory} (\textbf{PSGM}) for personalized supervision and preference-aligned skill retrieval. 
Together, they form a closed loop of preference identification, policy optimization, and structured skill accumulation. 
Experiments on ETAPP, ETAPP-Hard, and SJAgent show that our framework consistently outperforms strong memory and RL baselines. 
Code and data are included in the supplementary materials.

\end{abstract}

\section{Introduction}
\label{sec:1}

Large language model (LLM)-based agentic reinforcement learning (Agentic RL) has emerged as a powerful paradigm for optimizing LLM agents, achieving strong performance in code generation~\citep{gehring2025rlef, le2022coderl, yang2024swe}, web navigation~\citep{nakano2021webgpt, qi2025webrl}, tool use~\citep{toolrl2025, feng2026retool, skillrl2026}, and long-horizon planning~\citep{tang2026alphaagentevo, peng2026hiper, xi2026agentgymrl}. 
However, most of this progress has been made in \emph{verifiable} settings~\citep{toolrl2025,jin2025searchr,deepseekmath2024}, where policy optimization can rely on a unique ground-truth answer. 
In contrast, in many real-world agent applications---such as e-commerce assistance, travel planning, and daily scheduling~\citep{skarlinski2024language,schmidgall2024agentclinic,ning2025deeptravel,xie2024travelplanner,personalalign2026}---this verifiability breaks down because optimal behavior is user-dependent: the same query may admit multiple plausible trajectories, with the preferred one determined by the user's preferences, habits, and constraints.

Recent work has begun to extend LLM optimization beyond strictly verifiable tasks. 
Non-verifiable or open-ended optimization methods use LLM-based evaluation~\citep{ye2025justice,chan2024chateval,liu2023geval,zheng2023judging}, rubric-based rewards~\citep{RaR2025,liu2025openrubrics}, and other reward construction schemes~\citep{ou2026serl,ye2025self,tan2025process,xu2025direct} to provide supervision when exact answers are unavailable. 
However, these methods typically optimize generic objectives, such as overall quality, helpfulness, or rubric satisfaction, rather than user-conditioned preferences. 
In parallel, personalized agent methods incorporate user profiles, prompting strategies, agent designs, and memory retrieval over historical interactions~\citep{zhang2025personaagent,liang2026learning,cai2025large,wang2025mem,personalalign2026,su2026beyond}. 
While effective for improving user alignment, they largely personalize behavior at inference time and do not directly optimize policies for user-contingent trajectories. 
Therefore, existing methods lack a native training-time optimization framework for personalized agent behavior.

Personalization fundamentally changes the optimization target of Agentic RL. 
For the same query, users may prefer different planning strategies and tool-use decisions; for example, ``plan a one-day trip in Tokyo'' may call for a museum-centered route for one user but an anime-themed route for another. 
Therefore, the agent must move beyond learning a single average-optimal policy and instead learn user-contingent tool-use trajectories. 
As illustrated in Figure~\ref{fig:challenge_overview}, this setting introduces three core challenges. 
\textbf{(C1) Personalized reward ambiguity}: generic rewards mainly capture correctness, task completion, or overall helpfulness, but cannot express how a specific user evaluates the same trajectory, nor can they handle heterogeneous reward scales across users. 
\textbf{(C2) Personalized preference disentanglement}: observed user behaviors are often shaped by both intrinsic interests and external conformity or contextual effects, making individualized preference signals noisy and difficult to identify accurately. 
\textbf{(C3) User-aware memory and skill organization}: existing agent memories are often flat and query-centric, and thus cannot explicitly model or retrieve the structured relations among users, intents, skills, tools, scenarios, and trajectories.

To bridge this gap, we propose a unified personalized Agentic RL framework that embeds personalization into the training-time optimization loop. 
At its core is \emph{Personalized Anchor Reward-Decoupled Policy Optimization} (\textbf{PARPO}), which decouples generic task-quality rewards from personalized preference rewards and uses user-specific anchors to stabilize learning under heterogeneous reward scales. 
PARPO preserves general task competence while enabling user-contingent policy improvement. 
To provide cleaner personalized supervision, we develop a two-stage preference-disentangled reward model that separates intrinsic interests from conformity and contextual effects. 
To support personalized rollout context, we introduce \emph{Preference-Aligned Skill Evolution Graph Memory} (\textbf{PSGM}), an evolving heterogeneous graph memory that organizes users, skills, tools, scenarios, and trajectories for preference-aligned skill retrieval. 
Together, these components form a closed loop of preference identification, personalized policy optimization, and structured skill accumulation. 
Our contributions are summarized as follows:
\begin{itemize}[leftmargin=1.2em,labelsep=0.4em,itemsep=0.05em,topsep=0pt,parsep=0pt,partopsep=0pt]
    \item We formulate personalized Agentic RL for user-conditioned agent tasks, where optimal behavior depends on individual preferences.
    \item We propose \textbf{PARPO}, an anchor-stabilized and reward-decoupled policy optimization method for learning personalized policies under heterogeneous user reward scales.
    \item We introduce a preference-disentangled reward model and \textbf{PSGM} to provide reliable personalized supervision and preference-aligned skill retrieval.
    \item We evaluate our framework on ETAPP~\citep{ETAPP}, ETAPP-Hard, and SJAgent (a real-world industrial agent training scenario from a large Chinese e-commerce platform), showing gains in personalization and procedural quality while maintaining factual and logical quality. 
\end{itemize}

\begin{figure}[t]
    \centering
    \includegraphics[width=0.9\linewidth]{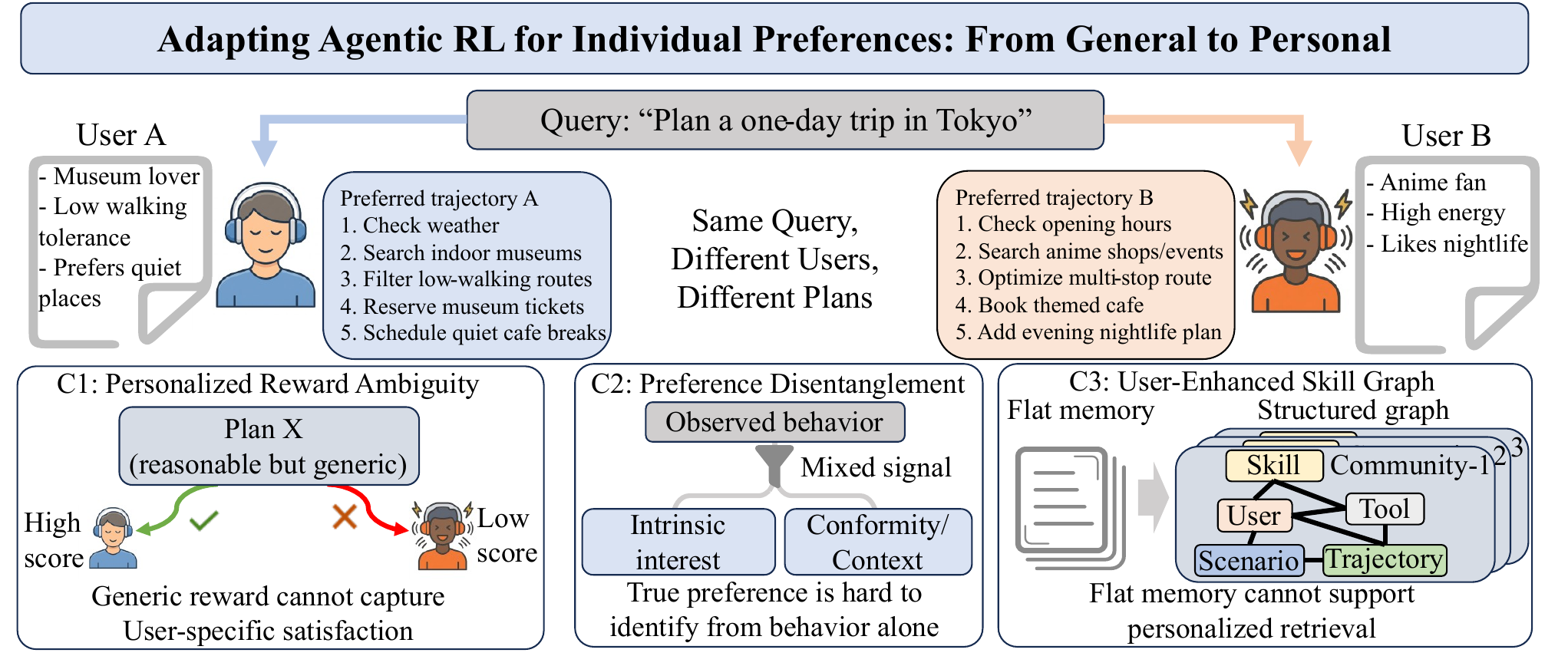}
    \caption{
    Personalization in Agentic RL changes the notion of optimal behavior: the same query may require different plans for different users.
    }
    \label{fig:challenge_overview}

\end{figure}

\section{Related Work}

\noindent\textbf{Agentic RL in Verifiable Settings.}
RL has proven effective for improving LLM agents on verifiable tasks, including Retool, ToolRL, AutoWebGLM, and Search-R1~\citep{feng2026retool,toolrl2025,jin2025searchr,lai2024autowebglm}. GRPO-style methods such as GRPO, DAPO, GSPO, GiGPO, and GDPO improve scalability, sequence-level optimization, and multi-reward stability~\citep{deepseekmath2024,dapo2025,gspo2025,feng2026groupingroup,gdpo2026}. However, these advances remain centered on verifiable settings with correctness or task-success signals, rather than user-specific alignment~\citep{jin2025searchr,skillrl2026,feng2026groupingroup}.

\noindent\textbf{Non-verifiable and Open-ended Optimization.}
RL has also been extended beyond verifiable tasks using LLM-based evaluation, rubric supervision, and learned rewards, e.g., OpenRubrics and Rubrics as Rewards~\citep{liu2025openrubrics,RaR2025}. While these methods support open-ended outputs, they still optimize generic objectives, such as quality, rubric satisfaction, rather than personalized behavior.

\noindent\textbf{Personalization, Preference, and Memory.}
Prior work explores personalization through profiles, memory, and personalized agents, including PersonaAgent, O-Mem, Preference-Aware Memory Update, and Learning Personalized Agents from Human Feedback~\citep{zhang2025personaagent,wang2025mem,prefmemory2025,liang2026learning}. CoPD studies true-interest/conformity entanglement in user behavior~\citep{copd}, while memory- and skill-based agents retrieve skills and user context~\citep{memrl2026,skillrl2026,zhou2026mem,liu2026simplemem}. Most focus on inference-time personalization or omit personalized policy in user-conditioned environments. Our work instead unifies training-time user-conditioned policy optimization, true-preference reward modeling, and personalized skill retrieval.

\noindent A more detailed discussion of relevant prior work is provided in Appendix~\ref{app:related_work}.

\section{Problem Definition}
\label{sec:3}

As discussed in the Introduction, personalization is important in real-world scenarios such as e-commerce and daily planning, since user satisfaction depends not only on whether the task is completed, but also on whether the result aligns with user preferences and whether the overall decision-making experience is satisfactory. Personalized agent behavior can be naturally described as a user-conditioned Markov decision process (MDP):
\[\small
\mathcal{M}=(\mathcal{S},\mathcal{A},\mathcal{P},\mathcal{Q},T,R,\gamma),
\]

where $\mathcal{S}$ denotes the state space, $\mathcal{A}$ denotes the action space, $\mathcal{P}$ denotes the user profile space, $\mathcal{Q}$ denotes the user query space, $T:\mathcal{S}\times\mathcal{A}\to\mathcal{S}$ is the transition function, $R$ is the reward function, and $\gamma$ is the discount factor. For each instance, the user profile $p_u\in\mathcal{P}$ and the user query $q\in\mathcal{Q}$ jointly define the task condition, and the initial state $s_0$ is determined by $(p_u,q)$. The agent then selects actions according to the policy $\pi_\theta(a_t \mid s_t, p_u, q)$, which induces a trajectory $\tau=(s_0,a_0,s_1,a_1,\dots,s_T,a_T)$.

The training objective is to maximize the expected trajectory reward:
\begin{equation}\small
\max_\theta \mathbb{E}_{(u,q)\sim\mathcal{D},\,\tau\sim\pi_\theta(\cdot\mid p_u,q)}\bigl[R(\tau,p_u,q)\bigr].
\end{equation}

The reward consists of two conceptually distinct components: a general-quality reward and a personalized preference reward. The general-quality reward evaluates task completion, logical coherence, and procedural correctness according to the benchmark-specific evaluation protocol, while the personalized preference reward measures how well a trajectory aligns with the target user's preferences. Therefore, the central challenge is that the same query may correspond to different optimal trajectories for different users. Accordingly, a personalized agent needs to optimize both general task quality and preference alignment, while also handling differences in the scale and distribution of personalized rewards across users.



\section{Method}
\label{sec:4}
To address the challenge of personalized decision-making in user-conditioned tasks, we propose a unified Agentic RL framework. As illustrated in Figure~\ref{fig:framework}, the framework operates as a closed loop of personalized retrieval, generation, evaluation, and refinement. Given a user query and profile, the framework first retrieves relevant historical skills from a graph-based memory to form a personalized rollout context. Conditioned on this enriched context, the LLM-based policy interacts with the environment to generate decision-making trajectories. These trajectories are then evaluated by a dual-reward system: a general task-quality reward and a personalized preference reward. Finally, the policy is updated using these decoupled signals, and high-value trajectories are consolidated back into memory as reusable skills.

In the following, we first introduce the core of our framework—the policy optimization algorithm (PARPO, \S\ref{sec:parpo}). We then detail the two supporting modules that make this optimization possible: the personalized reward model that provides user-conditioned signals (\S\ref{sec:reward}), and the skill graph memory that structures the rollout context (\S\ref{sec:psgm}).


\begin{figure}[t]
    \centering
    \includegraphics[width=\linewidth]{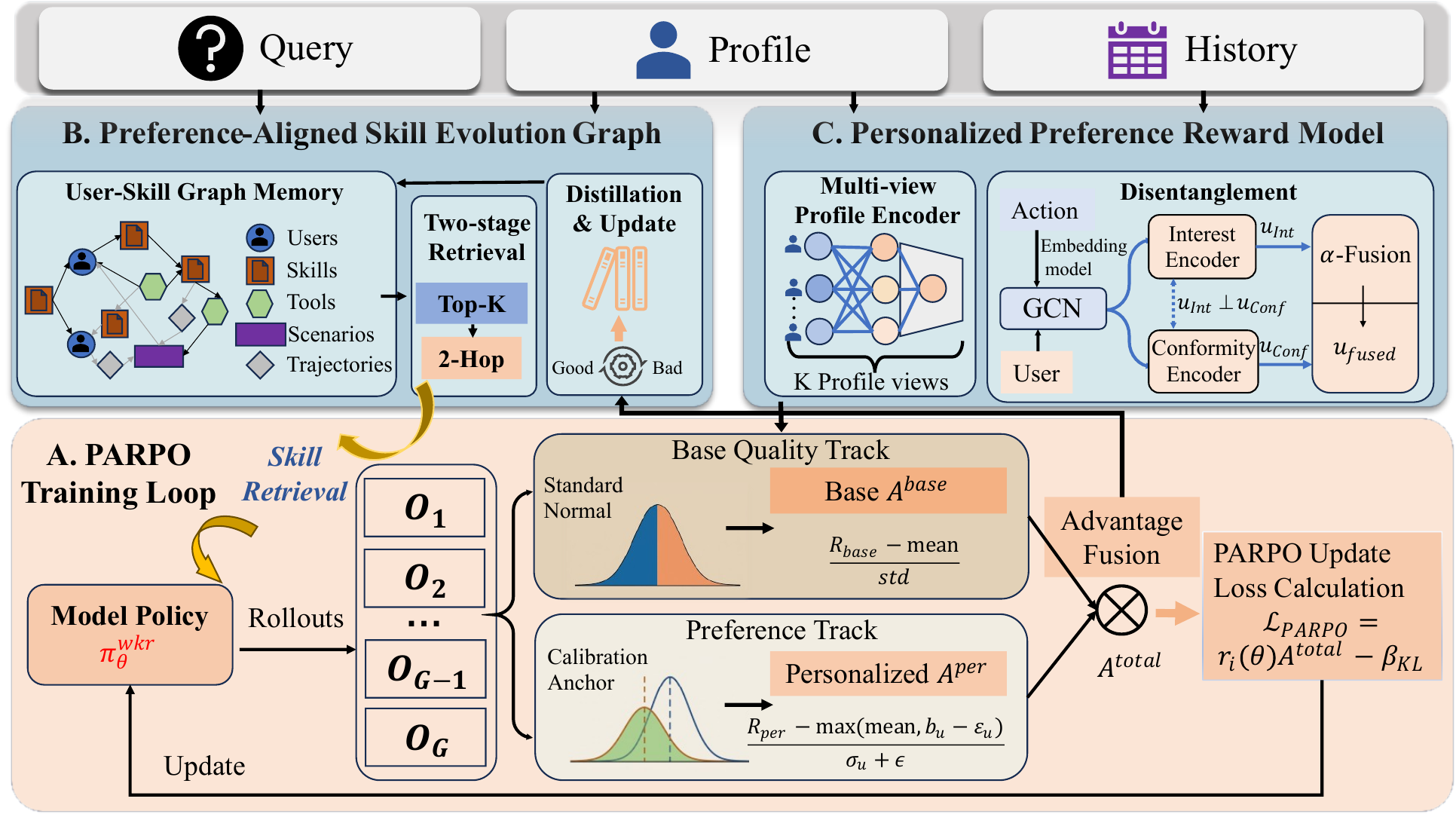}
    \caption{
    Overview of the proposed personalized Agentic RL framework. 
    }
    \label{fig:framework}
\end{figure}
\label{sec:framework}

\subsection{PARPO: Personalized Anchor Reward-Decoupled Policy Optimization}
\label{sec:parpo}

We now introduce the policy optimizer in our framework. PARPO separates task quality from personalized alignment, allowing the policy to improve user-specific behavior without entangling it with shared supervision. In implementation, PARPO is instantiated as a dual-track GRPO-style advantage estimator built on the personalized reward model in Section~\ref{sec:reward} and the graph memory in Section~\ref{sec:psgm}.

\paragraph{Theoretical justification.}
Appendix~\ref{app:theory} shows that under heterogeneous user preferences, personalized optimization is preferable to user-agnostic optimization, while standard GRPO incurs structural bias from pooled baselines and normalization. The analysis yields three conclusions. 
First, under heterogeneous user preferences, user-aware optimization is never worse than user-agnostic optimization: when different users prefer different trajectories for the same query, a single average-optimal policy necessarily compromises across users. Second, standard GRPO introduces structural bias in personalized settings because it uses pooled baselines and pooled normalization statistics. As a result, its advantage estimate can deviate from the true user-specific advantage due to both reward-center mismatch and reward-scale mismatch, with the dominant error controlled by cross-user preference heterogeneity. Third, PARPO reduces this bias by decoupling generic task-quality rewards from personalized preference rewards and calibrating the personalized branch with user-specific anchors.

In our implementation and experiments, PARPO reduces this bias primarily through reward decomposition and user-specific anchor calibration. In particular, for a fixed user $u$, its personalized advantage estimation error satisfies
\begin{equation}\small
\left|
\bar A_{\mathrm{PARPO}}(\tau\mid u,q)-\bar A_{\mathrm{pers}}^*(\tau\mid u,q)
\right|
\le
\frac{\delta_u+\epsilon_u}{\sigma_u(q)+\epsilon},
\label{eq:parpo_anchor_only_bound}
\end{equation}
where $\delta_u$ measures the estimation error of the user-specific historical anchor and $\epsilon_u$ is a conservative margin term. In expectation over users, this yields
\begin{equation}\small
\mathbb{E}_u\!\left[\left|\bar A_{\mathrm{PARPO}}(\tau\mid u,q)-\bar A_{\mathrm{pers}}^*(\tau\mid u,q)\right|\right]
\le
\frac{\bar\delta+\bar\epsilon}{\sigma_{\min}+\epsilon}.
\label{eq:parpo_anchor_only_bound_expectation}
\end{equation}
This shows that the practical benefit of PARPO in our setting comes from individual-specific baseline calibration: when a user's historical anchor is a better approximation to that user's true preference center than the pooled baseline, PARPO yields a tighter personalized advantage estimate for that user. A more general extension that also incorporates local grouping is deferred to Appendix~\ref{app:theory}.



Guided by this analysis, PARPO uses two explicit optimization tracks: a base track for generic task quality and a personalized track for user-contingent preference improvement.

\paragraph{Base advantage.}
For a sampled group of trajectories $\{\tau_i\}_{i=1}^G$ under the same prompt group, the base advantage follows standard within-group relative normalization:
\begin{equation}\small
A^{\mathrm{base}}_i =
\frac{R_{\mathrm{base}}(\tau_i)-\bar R_{\mathrm{base}}^{(g)}}
{\mathrm{Std}(\{R_{\mathrm{base}}(\tau_j)\}_{j\in g})+\epsilon}.
\label{eq:parpo_base_adv}
\end{equation}

\paragraph{Personalized advantage with user-anchor calibration.}
To stabilize optimization across heterogeneous users, PARPO maintains a persistent user-specific anchor for personalized rewards. For user $u$, let $m_u^{(t)}$ and $v_u^{(t)}$ denote the running mean and variance of personalized rewards. Given the current batch personalized rewards for that user, the anchor is updated by exponential moving average:
\begin{equation}\small
m_u^{(t+1)}=\rho m_u^{(t)} + (1-\rho)\bar R_{\mathrm{pers}}^{u,t}, \qquad
v_u^{(t+1)}=\rho v_u^{(t)} + (1-\rho)\mathrm{Var}(R_{\mathrm{pers}}^{u,t}).
\label{eq:parpo_anchor_update}
\end{equation}
The personalized branch then uses the following user-aware baseline:
\begin{equation}\small
b_{u,g}=\max\!\left(\bar R_{\mathrm{pers}}^{(g)},\; m_u-\gamma_p\sqrt{v_u}\right),
\label{eq:parpo_user_baseline}
\end{equation}
which prevents the personalized baseline from drifting too far above the user’s historical personalized reward center and provides a stable individual-specific calibration signal under heterogeneous reward scales. The personalized advantage is defined as
\begin{equation}\small
A^{\mathrm{pers}}_i=
\frac{R_{\mathrm{pers}}(\tau_i)-b_{u_i,g}}
{\sqrt{v_{u_i}}+\epsilon}.
\label{eq:parpo_pers_adv}
\end{equation}

\paragraph{Advantage fusion and policy update.}
The trajectory advantage is the weighted sum of the two:
\begin{equation}\small
A^{\mathrm{total}}_i
=
w_{\mathrm{base}}A^{\mathrm{base}}_i
+
w_{\mathrm{pers}}A^{\mathrm{pers}}_i.
\label{eq:parpo_total_adv}
\end{equation}
This fused advantage is then broadcast to the token level and used in a standard PPO-style clipped policy objective:
\begin{equation}\small
\mathcal{L}_{\mathrm{PARPO}}
=
\frac{1}{B}\sum_i
\max\!\left(
-r_i(\theta)A^{\mathrm{total}}_i,\;
-\mathrm{clip}(r_i(\theta),1-\eta,1+\eta)A^{\mathrm{total}}_i
\right),
\label{eq:parpo_loss}
\end{equation}
where $r_i(\theta)$ is the token-level policy ratio and $\eta$ the clipping coefficient. KL regularization, when enabled, is handled separately in the actor update loop rather than absorbed into the advantage.

In this way, PARPO explicitly separates objective task quality from user-contingent preference improvement, while using user-specific running statistics to mitigate cross-user reward-scale mismatch and stabilize personalized policy learning through individual-specific baseline calibration.

\subsection{Personalized Preference Reward Model with Two-Stage Preference Disentanglement}
\label{sec:reward}

As introduced in Section 4.1, PARPO requires a reliable personalized preference reward $R_{\mathrm{pers}}$. To provide this signal, we build a personalized preference reward model that provides user-conditioned neural preference signals for policy optimization through learned user representations and action compatibility scores.

\paragraph{Stage 1: Multi-view profile representation learning.}

To alleviate cold-start issues, we construct a profile representation from multiple semantic views. Given profile views $\{x_u^{(k)}\}_{k=1}^{K}$ for user $u$, we encode each view as $\mathbf{h}_u^{(k)} = E(x_u^{(k)})$ and compute attention weights
\begin{equation}
\small
\alpha_u^{(k)}=
\frac{
\exp\!\left(\mathbf{w}^{\top}\tanh(\mathbf{W}_{\mathrm{attn}}\mathbf{h}_u^{(k)}+\mathbf{b}_{\mathrm{attn}})\right)
}{
\sum_{k'=1}^{K}
\exp\!\left(\mathbf{w}^{\top}\tanh(\mathbf{W}_{\mathrm{attn}}\mathbf{h}_u^{(k')}+\mathbf{b}_{\mathrm{attn}})\right)
}.
\end{equation}
The fused profile representation is
\begin{equation}
\small
\mathbf{u}_{\mathrm{profile}}
=
\mathrm{LayerNorm}\!\left(
\mathbf{W}_{\mathrm{out}}
\sum_{k=1}^{K}\alpha_u^{(k)}\mathbf{h}_u^{(k)}
\right).
\end{equation}
To preserve view-specific information, we reconstruct each view embedding by $\hat{\mathbf{h}}_u^{(k)} = \mathbf{W}_{\mathrm{rec}}^{(k)}\mathbf{u}_{\mathrm{profile}}+\mathbf{b}_{\mathrm{rec}}^{(k)}$ and minimize
\begin{equation}
\small
\mathcal{L}_{\mathrm{recon}}
=
\sum_{u}\sum_{k=1}^{K}
\left\|
\hat{\mathbf{h}}_u^{(k)}-\mathbf{h}_u^{(k)}
\right\|_2^2.
\end{equation}

\paragraph{Stage 2: Collaborative preference disentanglement.}
We further incorporate collaborative preference signals from the user--item interaction graph. Specifically, LightGCN propagates embeddings as $\mathbf{E}^{(\ell+1)}=\hat{\mathbf{A}}\mathbf{E}^{(\ell)},$
and obtains the final collaborative representation by layer-wise averaging: $\mathbf{E}_{\mathrm{final}}=\frac{1}{L+1}\sum_{\ell=0}^{L}\mathbf{E}^{(\ell)}.$
Given the collaborative user representation $\mathbf{u}_{\mathrm{cf}}$, we learn two branches to capture interest and conformity signals:$
\mathbf{u}_{\mathrm{int}}=\mathrm{InterestEncoder}(\mathbf{u}_{\mathrm{cf}}),
\mathbf{u}_{\mathrm{conf}}=\mathrm{ConformityEncoder}(\mathbf{u}_{\mathrm{cf}}).
$
The two normalized branch embeddings are then fused as
$
\mathbf{u}_{\mathrm{fused}}=s\!\left(\alpha_{\mathrm{int}}\hat{\mathbf{u}}_{\mathrm{int}}+\alpha_{\mathrm{conf}}\hat{\mathbf{u}}_{\mathrm{conf}}\right),
$
where $\hat{\mathbf{u}}_{\mathrm{int}}$ and $\hat{\mathbf{u}}_{\mathrm{conf}}$ denote the normalized interest and conformity embeddings, respectively.

This branch structure alone does \emph{not} guarantee true separation of intrinsic preference from popularity-, conformity-, or group-level bias. We therefore treat disentanglement operationally and impose it through branch-specific objectives. The interest branch upweights less popular items:
\begin{equation}
\footnotesize
\mathcal{L}_{\mathrm{int}}
=
\frac{1}{B}\sum_{(u,i^{+})}
\left[
-\log\!\left(\omega_{i^{+}}^{\mathrm{int}}+\epsilon\right)
-
\frac{
\mathbf{u}_{\mathrm{int}}^{\top}\mathbf{i}_{\mathrm{cf}}^{+}
}{\tau}
+
\log \sum_{j}
\exp\!\left(
\frac{
\mathbf{u}_{\mathrm{int}}^{\top}\mathbf{i}_{\mathrm{cf}}^{(j)}
}{\tau}
\right)
\right],
\qquad
\omega_{i}^{\mathrm{int}}=\exp(1-\tilde p_i).
\end{equation}
Here $\tilde p_i\in[0,1]$ denotes normalized popularity. The conformity branch uses the same objective with opposite weighting, i.e., $\omega_i^{\mathrm{conf}}=\exp(\tilde p_i)$. We further regularize both branches by

\begin{equation}
\small
\mathcal{L}_{\mathrm{orth}}
=
\frac{1}{B}
\sum_u
\left(
\hat{\mathbf{u}}_{\mathrm{int}}^{\top}\hat{\mathbf{u}}_{\mathrm{conf}}
\right)^2,
\end{equation}
which discourages overlap but is not, by itself, evidence of full causal disentanglement.

For an action text $a$, we encode it as $\mathbf{a}_{\mathrm{proj}} = \mathrm{ActionEncoder}(E(a))$ and compute the personalized score by $r_{\mathrm{fused}}(u,a) = \mathbf{u}_{\mathrm{fused}}^{\top}\mathbf{a}_{\mathrm{proj}}$. The resulting neural preference score combines collaborative structure, branch-specific supervision, and regularization, and is later calibrated and integrated with LLM-based evaluation in the environment-level reward pipeline.
\subsection{Preference-Aligned Skill Evolution Graph Memory}
\label{sec:psgm}
Beyond the reward signal, personalized policy learning also requires a structured behavioral context during rollout. To this end, we maintain a heterogeneous graph memory $\mathcal{G}=(\mathcal{V},\mathcal{E})$ over users, skills, tools, scenarios, and trajectories, with typed edges encoding ownership, applicability, complementarity, conflict, execution history, and scenario triggers.

Rather than using a flat retrieval index, PSGM organizes skills with both node embeddings and graph structure. Skill and user semantic embeddings are stored on nodes, while graph connectivity provides signals such as community membership, complementary neighbors, and conflicting neighbors. To capture multi-granularity structure, we perform hierarchical community detection with Leiden when available and Louvain otherwise. At each level, communities are obtained by maximizing modularity:
\begin{equation}\small
Q=\frac{1}{2m}\sum_{i,j}
\left(A_{ij}-\frac{k_i k_j}{2m}\right)\delta(c_i,c_j).
\end{equation}

At inference, retrieval has two stages. We first retrieve semantic candidates from the skill set:
\begin{equation}\small
\mathcal{S}_{\mathrm{init}}(q)
=
\operatorname{TopM}_{s\in\mathcal{S}}
\operatorname{sim}(q,s).
\end{equation}
We then expand the candidates with a 2-hop traversal: from each retrieved skill to its owner user, and then to sibling skills of that user, injecting personalized local structure into the pool.

Each candidate is ranked by the following graph-aware score:
\begin{equation}\small
\operatorname{score}(q,s,p_u)
=
f_{\mathrm{sem}}(q,s)\,
\bigl(\alpha+\beta f_{\mathrm{user}}(p_u,s)\bigr)\,
\bigl(1+\gamma f_{\mathrm{comm}}(p_u,s)\bigr)\,
f_{\mathrm{comp}}(s)\,
\bigl(1-\delta f_{\mathrm{conf}}(s)\bigr).
\end{equation}
Here, $f_{\mathrm{sem}}$ is query--skill similarity, $f_{\mathrm{user}}$ user--skill similarity, $f_{\mathrm{comm}}$ community relevance, $f_{\mathrm{comp}}$ complement boost, and $f_{\mathrm{conf}}$ a conflict penalty; $\alpha,\beta,\gamma,\delta$ are fixed graph-level hyperparameters.

The top-ranked skills are inserted into the rollout context as structured personalized memory. Thus, PSGM improves rollout-time personalization by exposing preference-relevant and graph-consistent skills before each decision step, without changing policy parameters.

\section{Experiments}
\label{sec:5}
\subsection{Experimental Setup}
\subsubsection{Baselines}
We compare against prompting, memory, RL, and memory-RL baselines, including \textsc{ReAct}, \textsc{Mem0}, \textsc{GRPO}, \textsc{DAPO}, \textsc{GSPO}, \textsc{GiGPO}, \textsc{MemRL}, and \textsc{SkillRL}. We also report GPT-4o and Claude Sonnet 4 as closed-source references. All open-source methods are evaluated under the same model scales, task settings, and tool interfaces whenever applicable. We further include a GDPO-style variant in the ablation study, which removes PARPO’s user-anchor calibration while keeping the same personalized reward and memory components. For baselines in personalized environments, we additionally provide the native personalization reward to ensure fair comparison.

\subsubsection{Benchmarks, Evaluation, and Training Details}
We evaluate on ETAPP and SJAgent. ETAPP is a public benchmark for personal assistant agents, covering user behaviors in daily-life scenarios; we further construct a more challenging split, ETAPP-Hard. SJAgent is a realistic environment for merchant decision-making and recommendation, built from merchant data on a major Chinese e-commerce platform. Environment details are provided in Appendix~\ref{app:env}, and the construction pipeline of ETAPP-Hard is given in Appendix~\ref{app:etapp-hard}.

For ETAPP, we report Judge, Personal, Proactive, and Procedure. For SJAgent, we report Reward, Data Authenticity, Business Logic, Merchant Profile Match, Task Completion, and Market Analysis Depth. Metric definitions, official evaluation prompts, and scoring criteria are provided in Appendix~\ref{app:eval}, and training hyperparameters are listed in Appendix~\ref{app:train}.
\subsection{Main Results on Personalized Decision-Making Benchmarks}
\begin{table*}[t]
  \centering
  \scriptsize
  \caption{Main Results on ETAPP, ETAPP-Hard and SJAgent. Ours significantly outperforms the strongest baseline, SkillRL, under the paired t-test (p < 0.005)}
  \label{tab:merged-results-final-structured}
  \setlength{\tabcolsep}{3pt}
  \renewcommand{\arraystretch}{1.0}
  \begin{tabular}{lccccccccccc}
    \toprule
    \multirow{2}{*}{\textbf{Metric}} & \multicolumn{2}{c}{\textbf{Closed-source}} & \multicolumn{9}{c}{\textbf{Open-source Models}} \\
    \cmidrule(lr){2-3} \cmidrule(lr){4-12}
    & \textbf{GPT-4o} & \textbf{Claude S4} & \textbf{ReAct} & \textbf{Mem0} & \textbf{GRPO} & \textbf{DAPO} & \textbf{GSPO} & \textbf{GiGPO} & \textbf{MemRL} & \textbf{SkillRL} & \textbf{Ours} \\
    \midrule

    \multicolumn{12}{l}{\textbf{Scale: Qwen3-4B Models}} \\
    \midrule
    \textit{ETAPP-Original} & & & & & & & & & & & \\
    \quad Personal & 2.5882 & 3.9705 & 2.3765 & 2.5294 & 3.5000 & 3.7406 & 3.8094 & 3.9375 & 3.1832 & 4.0312 & \textbf{4.2344} \\
    \quad Proactive & 2.1529 & 3.1871 & 1.5176 & 1.7882 & 3.1250 & 3.2125 & 3.4062 & 3.4156 & 2.3105 & 3.3662 & \textbf{3.4844} \\
    \quad Procedure & 2.6353 & 3.5365 & 3.4941 & 3.5882 & 3.4688 & 3.3125 & 3.2250 & 3.3906 & 2.9031 & 3.7188 & \textbf{3.8438} \\
    \quad \textbf{Judge} & 0.4918 & 0.7129 & 0.4925 & 0.5270 & 0.6729 & 0.6844 & 0.6960 & 0.7162 & 0.5598 & 0.7411 & \textbf{0.7708} \\
    \addlinespace[2pt]
    \textit{ETAPP-Hard} & & & & & & & & & & & \\
    \quad Personal & 2.1560 & 3.6081 & 1.6156 & 2.6823 & 3.4875 & 3.6250 & 3.7187 & 3.7188 & 3.2875 & 3.8656 & \textbf{4.0469} \\
    \quad Proactive & 2.3449 & 3.5550 & 0.8688 & 1.8470 & 2.8750 & 2.9719 & 3.2938 & 3.3375 & 2.6344 & 3.2406 & \textbf{3.3313} \\
    \quad Procedure & 2.1750 & 3.0488 & 1.9156 & 3.8588 & 3.1031 & 3.4156 & 3.0500 & 3.0531 & 2.8281 & 3.4094 & \textbf{3.5344} \\
    \quad \textbf{Judge} & 0.4451 & 0.6808 & 0.2933 & 0.4318 & 0.6310 & 0.6675 & 0.6708 & 0.6740 & 0.5833 & 0.7010 & \textbf{0.7275} \\
    \addlinespace[2pt]
    \textit{SJAgent} & & & & & & & & & & & \\
    \quad Data Auth. & 1.0000 & 1.1350 & 1.8540 & 2.1450 & 2.6142 & 2.7480 & 2.9412 & 2.9550 & 2.2150 & 3.0523 & \textbf{3.3790} \\
    \quad Business Logic & 2.9170 & 2.8290 & 1.6215 & 1.9520 & 2.3085 & 2.6515 & 2.7210 & 2.7845 & 2.0450 & 2.8703 & \textbf{3.0830} \\
    \quad Profile Match & 3.8860 & 3.7920 & 1.9840 & 2.2140 & 2.5241 & 2.8845 & 2.8640 & 2.8941 & 2.3120 & 2.8543 & \textbf{2.8960} \\
    \quad Task Compl. & 3.9030 & 3.2610 & 2.1450 & 2.4580 & 2.8120 & 3.1415 & 3.1613 & 3.1842 & 2.5480 & 3.2442 & \textbf{3.4150} \\
    \quad Market Depth & 2.8060 & 2.7080 & 2.0155 & 2.3110 & 2.9292 & 2.9025 & 3.2345 & 3.2442 & 2.4200 & 3.2269 & \textbf{3.4450} \\
    \quad \textbf{Reward $\uparrow$} & 0.7256 & 0.6862 & 0.4810 & 0.5540 & 0.6594 & 0.7164 & 0.7461 & 0.7531 & 0.5770 & 0.7625 & \textbf{0.8109} \\

    \midrule
    \multicolumn{12}{l}{\textbf{Scale: Qwen3-8B Models}} \\
    \midrule
    \textit{ETAPP-Original} & & & & & & & & & & & \\
    \quad Personal & 2.5882 & 3.9705 & 2.1529 & 2.6823 & 3.9531 & 4.0000 & 4.0469 & 3.9375 & 3.1406 & 4.0938 & \textbf{4.2344} \\
    \quad Proactive & 2.1529 & 3.1871 & 1.4706 & 1.8470 & 3.5000 & 3.5094 & 3.4531 & 3.5625 & 2.7656 & 3.7656 & \textbf{4.0938} \\
    \quad Procedure & 2.6353 & 3.5365 & 3.4000 & 3.8588 & 3.7344 & 3.7469 & 3.9688 & 3.9375 & 2.9062 & 3.8281 & \textbf{4.1719} \\
    \quad \textbf{Judge} & 0.4918 & 0.7129 & 0.4682 & 0.5592 & 0.7458 & 0.7504 & 0.7646 & 0.7625 & 0.5875 & 0.7792 & \textbf{0.8333} \\
    \addlinespace[2pt]
    \textit{ETAPP-Hard} & & & & & & & & & & & \\
    \quad Personal & 2.1560 & 3.6081 & 1.7187 & 2.3312 & 3.8193 & 3.9219 & 3.9625 & 3.9531 & 2.9969 & 4.1188 & \textbf{4.3187} \\
    \quad Proactive & 2.3449 & 3.5550 & 0.9000 & 1.4656 & 3.2854 & 3.2938 & 3.3031 & 3.2781 & 2.2860 & 3.4094 & \textbf{3.7844} \\
    \quad Procedure & 2.1750 & 3.0488 & 1.8813 & 2.6812 & 3.3721 & 3.4625 & 3.5156 & 3.5688 & 2.5578 & 3.4625 & \textbf{3.8719} \\
    \quad \textbf{Judge} & 0.4451 & 0.6808 & 0.3000 & 0.4318 & 0.6995 & 0.7119 & 0.7188 & 0.7200 & 0.5227 & 0.7327 & \textbf{0.7983} \\
    \addlinespace[2pt]
    \textit{SJAgent} & & & & & & & & & & & \\
    \quad Data Auth. & 1.0000 & 1.1350 & 1.9420 & 2.3120 & 2.9105 & 3.0540 & 3.0410 & 3.1250 & 2.4580 & 3.1016 & \textbf{3.4120} \\
    \quad Business Logic & 2.9170 & 2.8290 & 1.7850 & 2.0450 & 2.6450 & 2.9120 & 2.6845 & 2.9410 & 2.1850 & 2.9884 & \textbf{3.1080} \\
    \quad Profile Match & 3.8860 & 3.7920 & 2.0120 & 2.3580 & 2.8142 & 3.1050 & 2.7425 & 3.1020 & 2.5140 & 3.0243 & \textbf{3.1415} \\
    \quad Task Compl. & 3.9030 & 3.2610 & 2.3140 & 2.6140 & 3.0453 & 3.2840 & 3.1040 & 3.3150 & 2.7840 & 3.3687 & \textbf{3.4240} \\
    \quad Market Depth & 2.8060 & 2.7080 & 2.2470 & 2.4710 & 3.0850 & 3.2450 & 3.3480 & 3.3570 & 2.6590 & 3.3670 & \textbf{3.4545} \\
    \quad \textbf{Reward $\uparrow$} & 0.7256 & 0.6862 & 0.5150 & 0.5900 & 0.7250 & 0.7800 & 0.7461 & 0.7920 & 0.6300 & 0.7925 & \textbf{0.8270} \\
    \bottomrule
  \end{tabular}
\end{table*}

Table~\ref{tab:merged-results-final-structured} reports results on ETAPP, ETAPP-Hard, and SJAgent. Our method achieves the best overall performance under both 4B and 8B settings and outperforms all open-source baselines on most metrics. On ETAPP and ETAPP-Hard, it obtains the highest judge scores, with larger gains on ETAPP-Hard, indicating stronger robustness in challenging personalized scenarios. It also performs best on Personal, Proactive, and Procedure, showing better preference alignment and decision quality. On SJAgent, our method again achieves the highest Reward and leads on key dimensions including Data Authenticity, Business Logic, Task Completion, and Market Analysis Depth, demonstrating strong cross-domain generalization.

\subsection{Ablation Study}
\begin{table*}[t]
\centering
\small
\setlength{\tabcolsep}{6pt}
\renewcommand{\arraystretch}{1.0}
\caption{Ablation results on ETAPP. 
For each variant, we report the judge score, and list the corresponding \textbf{personal}, \textbf{proactive}, \textbf{procedure}, and \textbf{judge} scores. 
$\Delta$ denotes the absolute difference in \textbf{judge\_best} compared with the full model. }
\label{tab:ablation_full_metrics}
\begin{tabular}{llccccc}
\toprule
\textbf{Group} & \textbf{Variant} & \textbf{Personal $\uparrow$} & \textbf{Proactive $\uparrow$} & \textbf{Procedure $\uparrow$} & \textbf{Judge $\uparrow$} & \textbf{$\Delta$ Judge} \\
\midrule
-- & Full (baseline)         & \textbf{4.2344} & 3.4844 & \textbf{3.8438} & \textbf{0.7708} & 0.0000 \\
\midrule
A & A1 no\_memory      & 3.6562 & 3.1938 & 3.6594 & 0.7006 & -0.0702 \\
A & A2 skill\_original             & 4.1562 & 3.4188 & 3.7625 & 0.7558 & -0.0150 \\
\midrule
B & B1 no\_community               & 4.1625 & 3.4031 & 3.7906 & 0.7571 & -0.0137 \\
B & B2 louvain                     & 4.0625 & \textbf{3.6719} & 3.7656 & 0.7667 & -0.0041 \\
B & B3 no\_dynupd                  & 3.9688 & 3.3438 & 3.7969 & 0.7406 & -0.0302 \\
B & B4 bm25                        & 4.1094 & 3.4656 & 3.8125 & 0.7592 & -0.0116 \\
\midrule
C & C1 grpo\_only                  & 3.9250 & 3.3625 & 3.5500 & 0.7225 & -0.0483 \\
C & C2 no\_base                    & 3.7656 & 3.3594 & 3.6719 & 0.7198 & -0.0510 \\
C & C3 no\_pers                    & 3.7500 & 3.3281 & 3.7344 & 0.7208 & -0.0500 \\
C & C4 GDPO-style (w/o anchor)     & 3.9033 & 3.3694 & 3.5971 & 0.7247 & -0.0461 \\
C & C5 no\_rm\_model               & 4.0000 & 3.5312 & 3.8750 & 0.7604 & -0.0104 \\
C & C6 no\_interest                & 3.8464 & 3.2986 & 3.6372 & 0.7188 & -0.0588 \\
C & C7 no\_conformity              & 3.7215 & 3.4108 & 3.6884 & 0.7214 & -0.0494 \\
\bottomrule
\end{tabular}
\end{table*}
Table~\ref{tab:ablation_full_metrics} reports ablation results on ETAPP. A1 removes skill memory, and A2 replaces PSGM with the original flat skill pipeline. Within PSGM, B1 removes hierarchical communities, B2 replaces Leiden with Louvain, B3 disables dynamic graph updates, and B4 replaces dense retrieval with BM25. Within reward learning, C1 replaces PARPO with GRPO, C2/C3 remove the base/personalized reward branch, C4 adopts a GDPO-style variant without user-anchor calibration, C5 disables the personalized neural reward model, and C6/C7 remove the interest/conformity branch.

All ablations reduce performance. Removing skill memory causes the largest Judge drop (0.7708 → 0.7006), showing that memory is central to personalized decision-making. Weakening PSGM also consistently hurts performance, validating structured retrieval and dynamic graph updates. In reward learning, all variants underperform PARPO; notably, the GDPO-style variant remains worse than PARPO, showing that reward decoupling alone is insufficient without user-anchor calibration.

\subsection{Rollout Evaluation with Human and LLM Judges on ETAPP}
To evaluate personalized behavior on user-conditioned tasks, we conduct a blinded rollout study on 20 ETAPP tasks with rich user-specific contexts. For each task, PARPO, SkillRL, Claude 3.5 Sonnet, and GPT-4o independently generate responses, which are then anonymized and scored by 15 human experts and 4 LLM judges (GPT-5.4, GLM-5.1, Kimi-K2.6, and Gemini-3.1-Pro) along \textit{Question Relevance}, \textit{User Relevance}, and \textit{Readability}. 

Figures~\ref{fig:main_eval_a} and \ref{fig:main_eval_b} show that PARPO achieves the highest average score under both human and LLM evaluation, and all five judge groups rank it above all baselines. The largest gain appears on \textit{User Relevance}, indicating stronger personalization rather than merely better fluency. Figure~\ref{fig:human_expert_agreement} further shows that the gains are consistent across individual human experts, indicating robust preference rather than isolated wins. A representative case further illustrating this behavior is shown in Appendix~\ref{app:case_shopping_cart}.

\begin{figure*}[!tb]
\centering
\captionsetup[subfigure]{justification=centering, singlelinecheck=false, skip=0pt}

\begin{subfigure}[t]{0.24\textwidth}
    \centering
    \includegraphics[width=\linewidth,height=6.0cm,keepaspectratio]{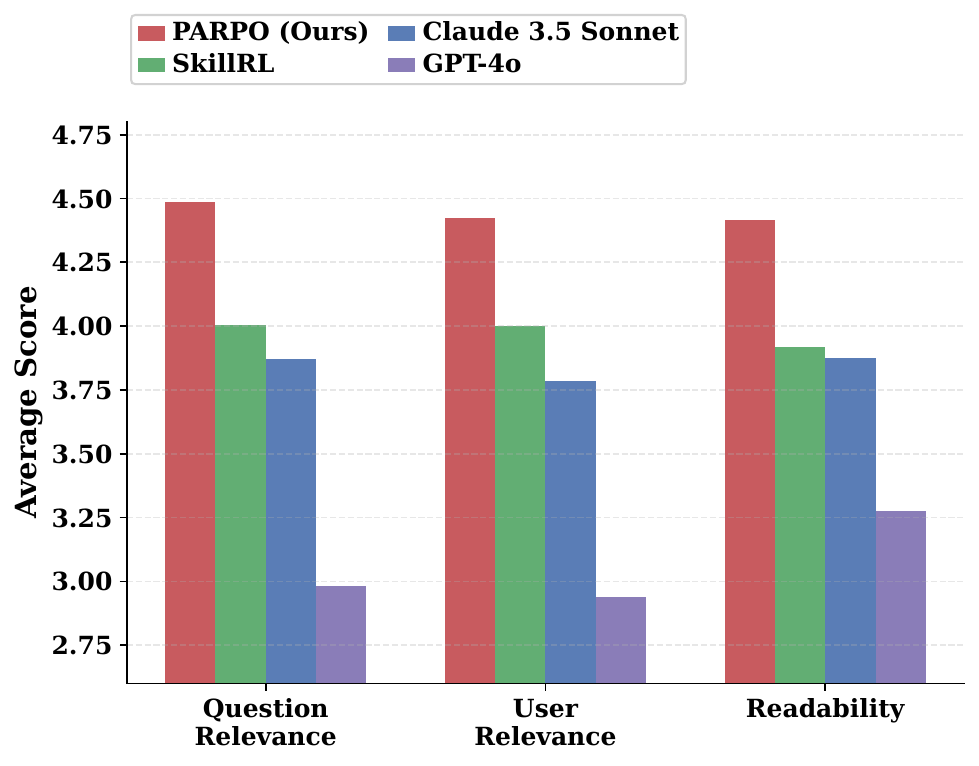}
    \caption{Human scores by dimension.}
    \label{fig:main_eval_a}
\end{subfigure}
\hfill
\begin{subfigure}[t]{0.24\textwidth}
    \centering
    \includegraphics[width=\linewidth,height=6.0cm,keepaspectratio]{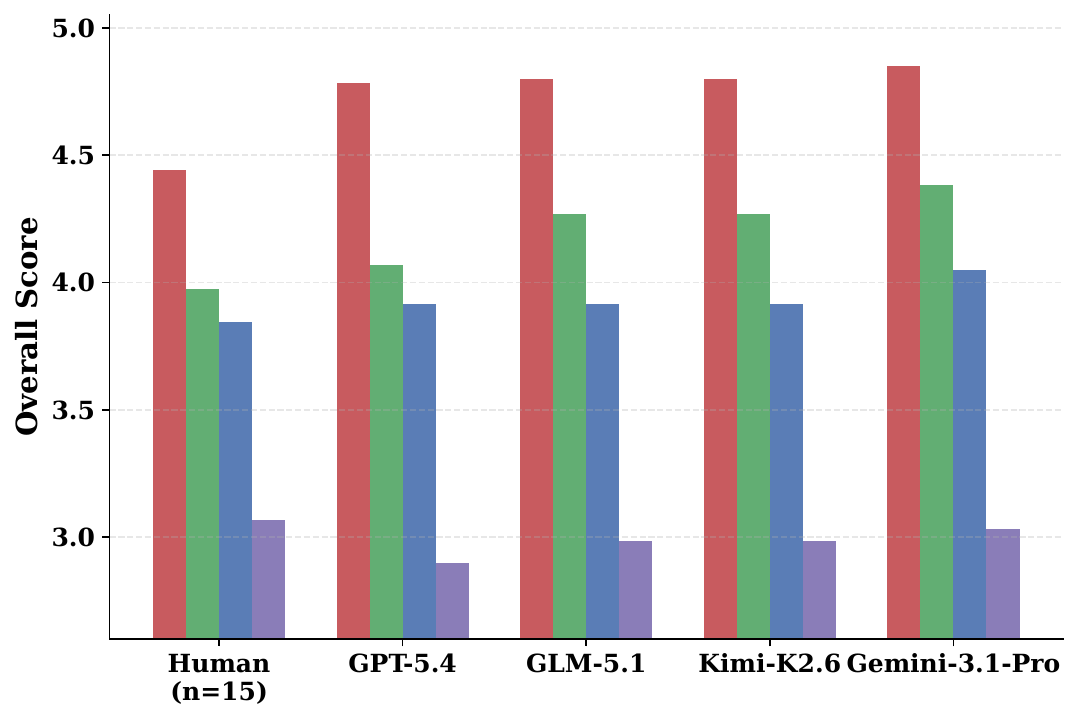}
    \caption{Overall scores from judges.}
    \label{fig:main_eval_b}
\end{subfigure}
\hfill
\begin{subfigure}[t]{0.50\textwidth}
    \centering
    \includegraphics[width=\linewidth,height=2.3cm,keepaspectratio]{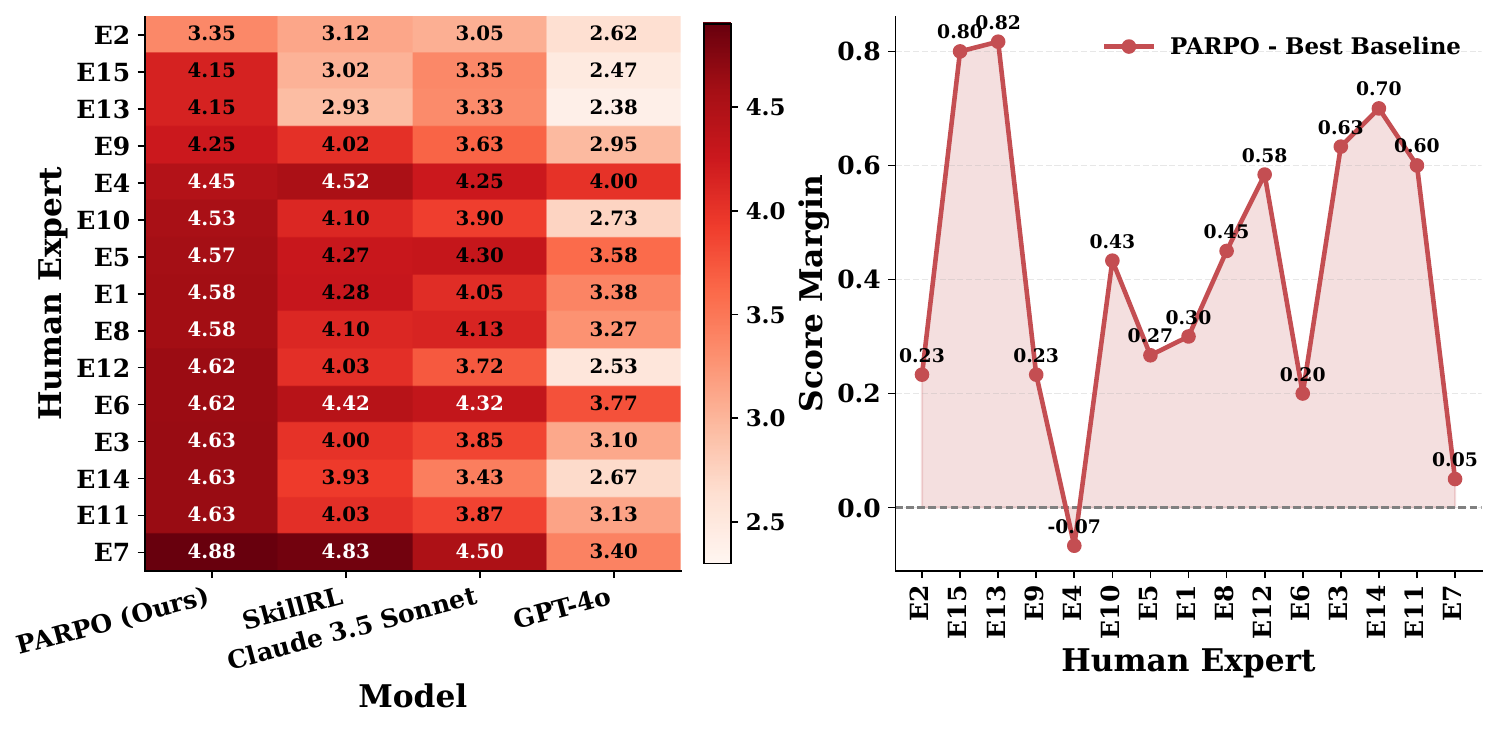}
    \caption{Expert agreement and consistency.}
    \label{fig:human_expert_agreement}
\end{subfigure}
\caption{Blinded evaluation on 20 personalized ETAPP tasks. Left: human scores by dimension. Middle: overall scores from human and LLM judges. Right: expert-level agreement, including the expert-by-model heatmap and PARPO's margin over the strongest baseline for each human expert.}
\label{fig:combined_eval}
\end{figure*}

\subsection{Training Dynamics and Skill Evolution Analysis}
\begin{figure*}[t]
    \centering

    \begin{subfigure}[t]{0.96\textwidth}
        \centering
        \includegraphics[width=\textwidth]{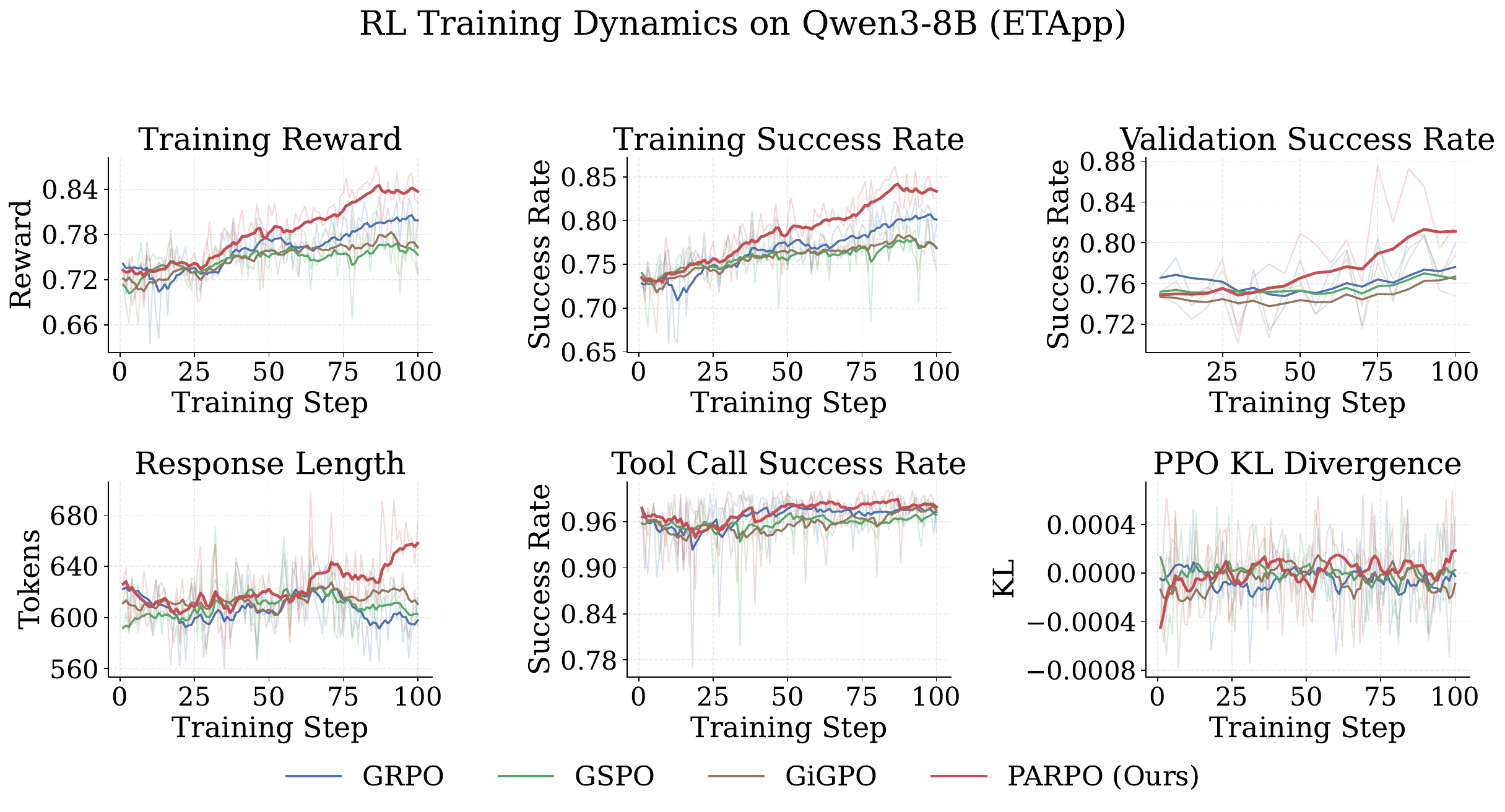}
        \caption{RL training dynamics on ETAPP.}
        \label{fig:rl_training_dynamics_etapp}
    \end{subfigure}


    \begin{subfigure}[t]{0.26\textwidth}
        \centering
        \includegraphics[width=\textwidth]{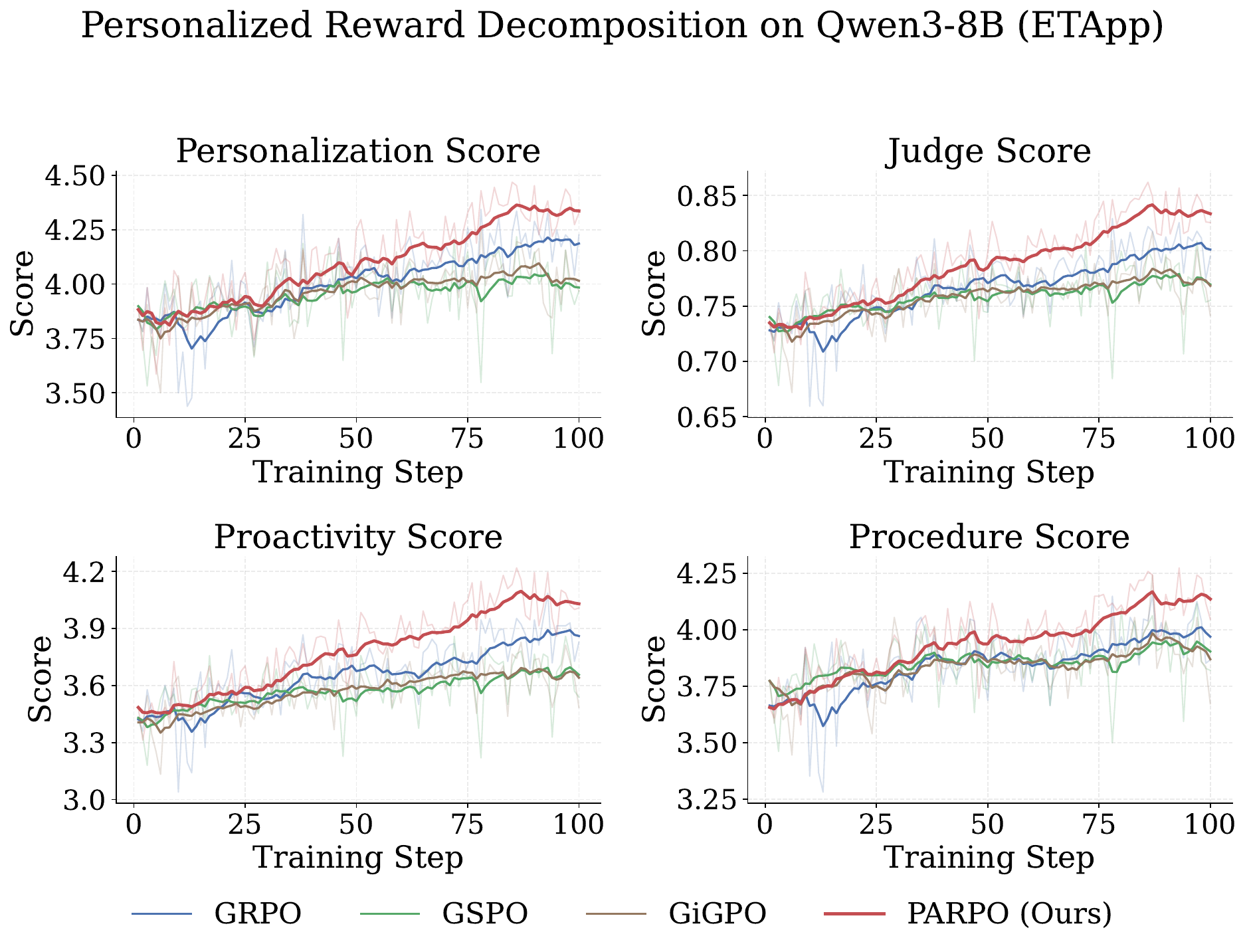}
        \caption{Personalized reward decomposition on ETAPP.}
        \label{fig:reward_decomposition_etapp}
    \end{subfigure}
    \hfill
    \begin{subfigure}[t]{0.41\textwidth}
        \centering
        \includegraphics[width=\textwidth]{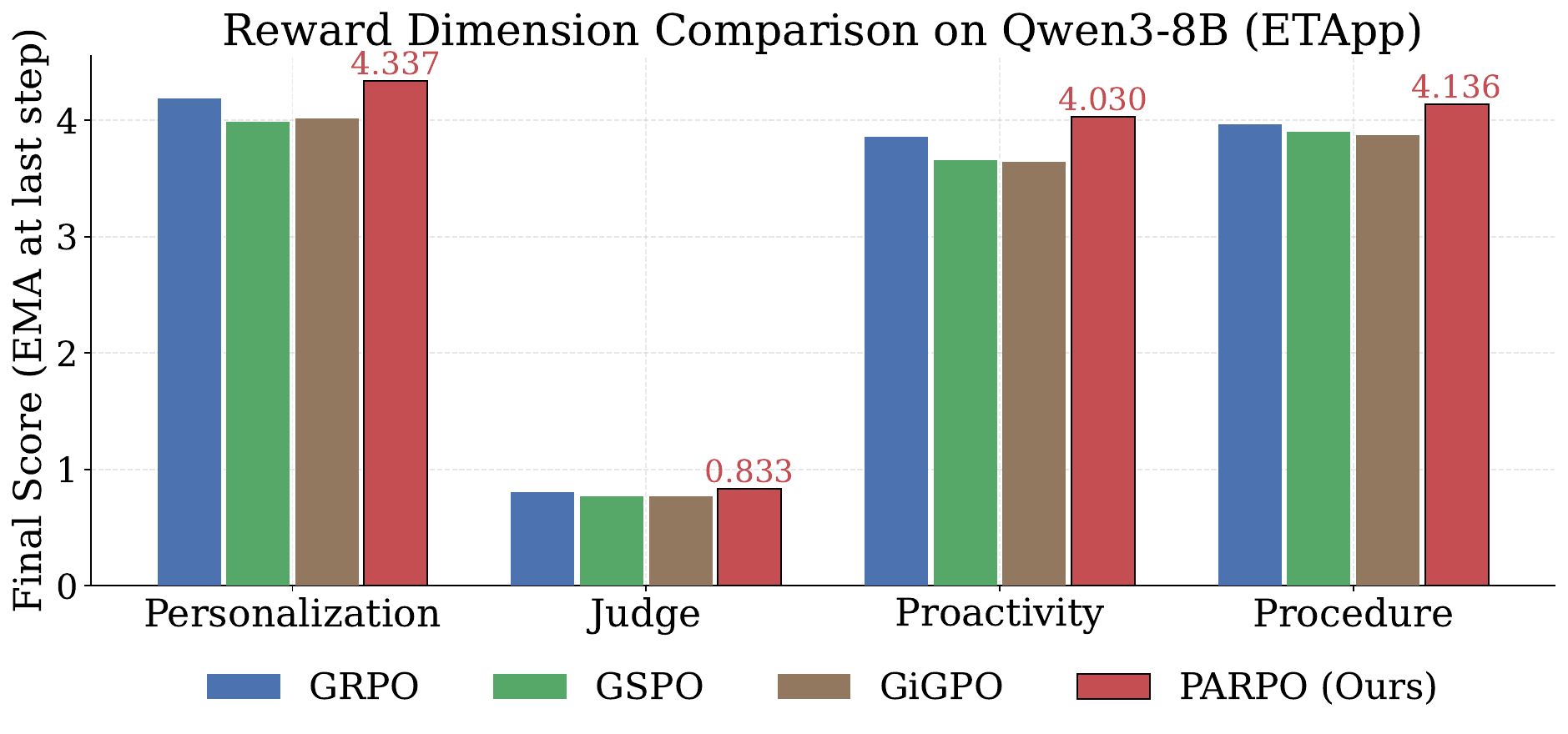}
        \caption{Final EMA scores across reward dimensions on ETAPP.}
        \label{fig:reward_dimension_comparison_etapp}
    \end{subfigure}
    \hfill
    \begin{subfigure}[t]{0.31\textwidth}
        \centering
        \includegraphics[width=\textwidth]{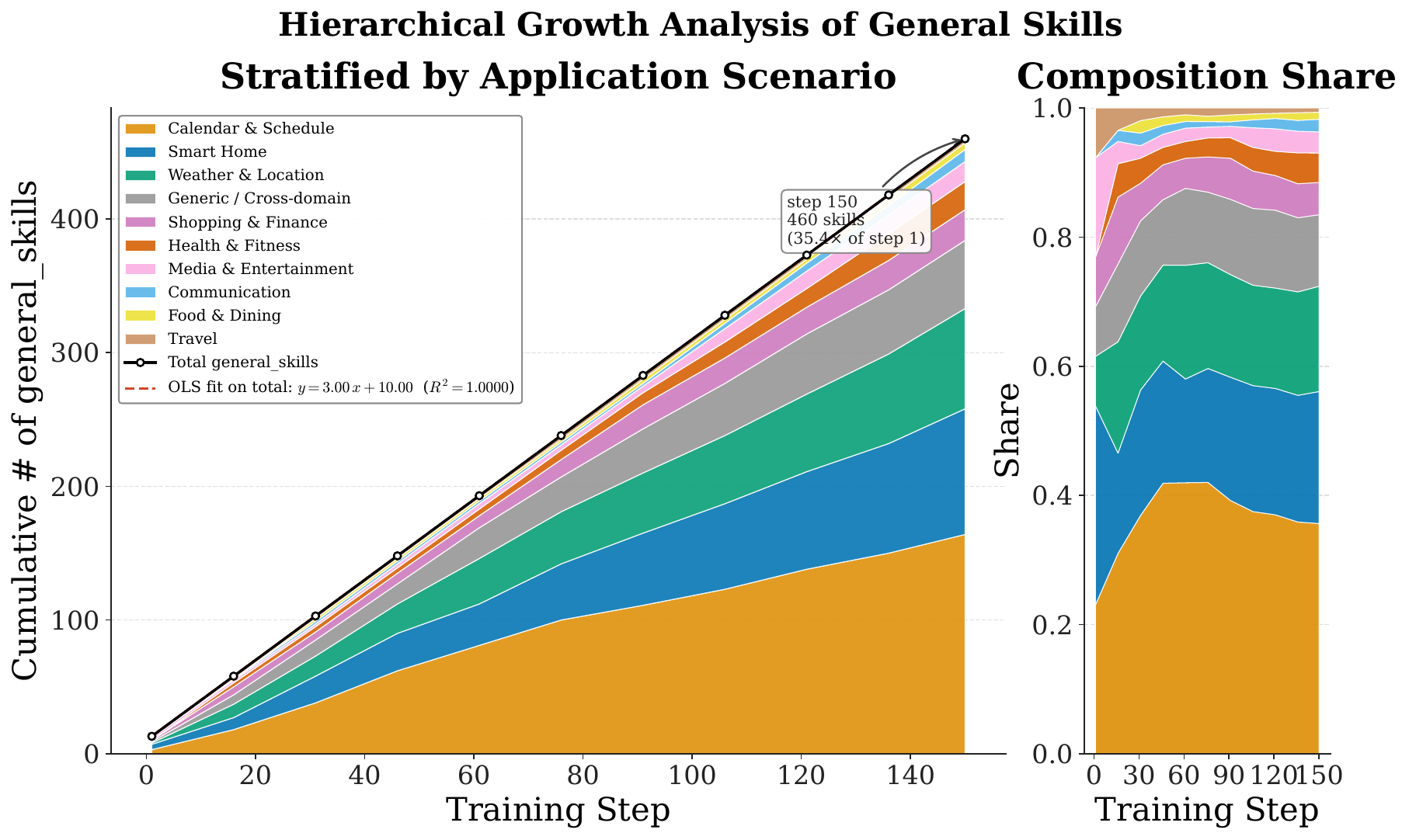}
        \caption{Hierarchical growth of general skills during training.}
        \label{fig:general_skill_growth}
    \end{subfigure}

    \caption{Training dynamics and skill evolution analysis of Qwen3-8B on ETAPP. Top: RL training dynamics. Bottom-left: reward decomposition during training. Bottom-middle: final EMA scores by reward dimension. Bottom-right: hierarchical growth of general skills during training. All experiments exclude the personalized reward model and Skill Graph.}
    \label{fig:rl_analysis_etapp}
\end{figure*}
All experiments in this section are conducted without the personalized reward model or Skill Graph to isolate the effect of RL optimization.
\subsubsection{Comparison of RL Training Strategies}
We compare GRPO, GSPO, GiGPO, and PARPO on ETAPP. Figure~\ref{fig:rl_training_dynamics_etapp} shows that PARPO yields the strongest training dynamics, with higher reward, better training/validation success rates, stronger tool-call success, and stable KL divergence. Similar trends hold on ETAPP-Hard (Appendix~\ref{app:hard_rl_training}).
\subsubsection{Effect of Personalized Reward Optimization and Skill Evolution}
We further analyze the four ETAPP reward dimensions: personalization, judge, proactivity, and procedure. Figures~\ref{fig:reward_decomposition_etapp} and \ref{fig:reward_dimension_comparison_etapp} show that PARPO improves all dimensions during training and achieves the best EMA scores, especially on personalization, indicating that explicit personalized reward optimization provides a stronger learning signal. Moreover, Figure~\ref{fig:general_skill_growth} shows that the cumulative number of general skills grows steadily with training steps, while the composition across application scenarios remains broadly stable. This suggests that training continuously consolidates reusable cross-domain skills rather than improving only a narrow subset of scenarios. ETAPP-Hard results are deferred to Appendix~\ref{app:hard_reward_optimization}.

\section{Conclusion and Limitations}
\label{sec:6}

This work studied personalized Agentic RL for user-conditioned tasks, where optimal behavior depends on user preferences rather than a single correct answer. We proposed a unified framework centered on PARPO, which decouples general-quality and personalized rewards and stabilizes training with user-specific anchors. Experiments on ETAPP, ETAPP-Hard, and SJAgent showed consistent and significant gains in personalization while preserving overall task quality, highlighting the value of training-time personalized optimization for user-centric agents.
A key limitation of this work is the scale of the human evaluation: due to annotation cost and limited expert availability, we reported judgments from only 15 experts on 20 randomly sampled examples. Although these results broadly agree with the LLM evaluations, future work should expand the human study to a larger and more diverse pool of annotators and evaluation instances to better assess robustness, generalizability, and real-world validity.



\newpage
\bibliographystyle{unsrtnat}
\bibliography{refs}

@inproceedings{
    gehring2025rlef,
    title={{RLEF}: Grounding Code {LLM}s in Execution Feedback with Reinforcement Learning},
    author={Jonas Gehring and Kunhao Zheng and Jade Copet and Vegard Mella and Taco Cohen and Gabriel Synnaeve},
    booktitle={Forty-second International Conference on Machine Learning},
    year={2025},
    url={https://openreview.net/forum?id=PzSG5nKe1q}
}

@article{le2022coderl,
  title={Coderl: Mastering code generation through pretrained models and deep reinforcement learning},
  author={Le, Hung and Wang, Yue and Gotmare, Akhilesh Deepak and Savarese, Silvio and Hoi, Steven Chu Hong},
  journal={Advances in Neural Information Processing Systems},
  volume={35},
  pages={21314--21328},
  year={2022}
}

@article{yang2024swe,
  title={Swe-agent: Agent-computer interfaces enable automated software engineering},
  author={Yang, John and Jimenez, Carlos E and Wettig, Alexander and Lieret, Kilian and Yao, Shunyu and Narasimhan, Karthik and Press, Ofir},
  journal={Advances in Neural Information Processing Systems},
  volume={37},
  pages={50528--50652},
  year={2024}
}

@article{nakano2021webgpt,
  title={Webgpt: Browser-assisted question-answering with human feedback},
  author={Nakano, Reiichiro and Hilton, Jacob and Balaji, Suchir and Wu, Jeff and Ouyang, Long and Kim, Christina and Hesse, Christopher and Jain, Shantanu and Kosaraju, Vineet and Saunders, William and others},
  journal={arXiv preprint arXiv:2112.09332},
  year={2021}
}

@inproceedings{
    qi2025webrl,
    title={Web{RL}: Training {LLM} Web Agents via Self-Evolving Online Curriculum Reinforcement Learning},
    author={Zehan Qi and Xiao Liu and Iat Long Iong and Hanyu Lai and Xueqiao Sun and Jiadai Sun and Xinyue Yang and Yu Yang and Shuntian Yao and Wei Xu and Jie Tang and Yuxiao Dong},
    booktitle={The Thirteenth International Conference on Learning Representations},
    year={2025},
    url={https://openreview.net/forum?id=oVKEAFjEqv}
}

@inproceedings{lai2024autowebglm,
  title={Autowebglm: A large language model-based web navigating agent},
  author={Lai, Hanyu and Liu, Xiao and Iong, Iat Long and Yao, Shuntian and Chen, Yuxuan and Shen, Pengbo and Yu, Hao and Zhang, Hanchen and Zhang, Xiaohan and Dong, Yuxiao and others},
  booktitle={Proceedings of the 30th ACM SIGKDD Conference on Knowledge Discovery and Data Mining},
  pages={5295--5306},
  year={2024}
}

@inproceedings{
    feng2026retool,
    title={ReTool: Reinforcement Learning for Strategic Tool Use in {LLM}s},
    author={Jiazhan Feng and Shijue Huang and Xingwei Qu and Ge Zhang and Yujia Qin and Baoquan Zhong and Chengquan Jiang and Jinxin Chi and Wanjun Zhong},
    booktitle={The Fourteenth International Conference on Learning Representations},
    year={2026},
    url={https://openreview.net/forum?id=tRk1nofSmz}
}

@inproceedings{
    xi2026agentgymrl,
    title={AgentGym-{RL}: An Open-Source Framework to Train {LLM} Agents for Long-Horizon Decision Making via Multi-Turn {RL}},
    author={Zhiheng Xi and Jixuan Huang and Chenyang Liao and Baodai Huang and Jiaqi Liu and Honglin Guo and yajie yang and Rui Zheng and Junjie Ye and Jiazheng Zhang and Wenxiang Chen and Wei He and Yiwen Ding and Guanyu Li and Zehui Chen and Zhengyin Du and Xuesong Yao and Yufei Xu and Jiecao Chen and Tao Gui and Zuxuan Wu and Qi Zhang and Xuanjing Huang and Yu-Gang Jiang},
    booktitle={The Fourteenth International Conference on Learning Representations},
    year={2026},
    url={https://openreview.net/forum?id=ZgCCDwcGwn}
}

@article{peng2026hiper,
  title={HiPER: Hierarchical Reinforcement Learning with Explicit Credit Assignment for Large Language Model Agents},
  author={Peng, Jiangweizhi and Liu, Yuanxin and Zhou, Ruida and Fleming, Charles and Wang, Zhaoran and Garcia, Alfredo and Hong, Mingyi},
  journal={arXiv preprint arXiv:2602.16165},
  year={2026}
}

@inproceedings{
    tang2026alphaagentevo,
    title={AlphaAgentEvo: Evolution-Oriented Alpha Mining via Self-Evolving Agentic Reinforcement Learning},
    author={Ziyi Tang and Xuexiong Yin and Weixing Chen and Zechuan Chen and Yongsen Zheng and Wenxuan Ye and Keze Wang and Liang Lin},
    booktitle={The Fourteenth International Conference on Learning Representations},
    year={2026},
    url={https://openreview.net/forum?id=lNmZrawUMu}
}

@article{skarlinski2024language,
  title={Language agents achieve superhuman synthesis of scientific knowledge},
  author={Skarlinski, Michael D and Cox, Sam and Laurent, Jon M and Braza, James D and Hinks, Michaela and Hammerling, Michael J and Ponnapati, Manvitha and Rodriques, Samuel G and White, Andrew D},
  journal={arXiv preprint arXiv:2409.13740},
  year={2024}
}

@article{schmidgall2024agentclinic,
  title={Agentclinic: a multimodal agent benchmark to evaluate ai in simulated clinical environments},
  author={Schmidgall, Samuel and Ziaei, Rojin and Harris, Carl and Reis, Eduardo and Jopling, Jeffrey and Moor, Michael},
  journal={arXiv preprint arXiv:2405.07960},
  year={2024}
}

@article{xie2024travelplanner,
  title={Travelplanner: A benchmark for real-world planning with language agents},
  author={Xie, Jian and Zhang, Kai and Chen, Jiangjie and Zhu, Tinghui and Lou, Renze and Tian, Yuandong and Xiao, Yanghua and Su, Yu},
  journal={arXiv preprint arXiv:2402.01622},
  year={2024}
}

@article{ning2025deeptravel,
  title={Deeptravel: An end-to-end agentic reinforcement learning framework for autonomous travel planning agents},
  author={Ning, Yansong and Liu, Rui and Wang, Jun and Chen, Kai and Li, Wei and Fang, Jun and Zheng, Kan and Tan, Naiqiang and Liu, Hao},
  journal={arXiv preprint arXiv:2509.21842},
  year={2025}
}

@inproceedings{
    jin2025searchr,
    title={Search-R1: Training {LLM}s to Reason and Leverage Search Engines with Reinforcement Learning},
    author={Bowen Jin and Hansi Zeng and Zhenrui Yue and Jinsung Yoon and Sercan O Arik and Dong Wang and Hamed Zamani and Jiawei Han},
    booktitle={Second Conference on Language Modeling},
    year={2025},
    url={https://openreview.net/forum?id=Rwhi91ideu}
}

@inproceedings{
    zheng2023judging,
    title={Judging {LLM}-as-a-Judge with {MT}-Bench and Chatbot Arena},
    author={Lianmin Zheng and Wei-Lin Chiang and Ying Sheng and Siyuan Zhuang and Zhanghao Wu and Yonghao Zhuang and Zi Lin and Zhuohan Li and Dacheng Li and Eric Xing and Hao Zhang and Joseph E. Gonzalez and Ion Stoica},
    booktitle={Thirty-seventh Conference on Neural Information Processing Systems Datasets and Benchmarks Track},
    year={2023},
    url={https://openreview.net/forum?id=uccHPGDlao}
}

@inproceedings{
    liu2023geval,
    title={G-Eval: {NLG} Evaluation using Gpt-4 with Better Human Alignment},
    author={Yang Liu and Dan Iter and Yichong Xu and Shuohang Wang and Ruochen Xu and Chenguang Zhu},
    booktitle={The 2023 Conference on Empirical Methods in Natural Language Processing},
    year={2023},
    url={https://openreview.net/forum?id=puMfaHb1hY}
}

@inproceedings{
    chan2024chateval,
    title={ChatEval: Towards Better {LLM}-based Evaluators through Multi-Agent Debate},
    author={Chi-Min Chan and Weize Chen and Yusheng Su and Jianxuan Yu and Wei Xue and Shanghang Zhang and Jie Fu and Zhiyuan Liu},
    booktitle={The Twelfth International Conference on Learning Representations},
    year={2024},
    url={https://openreview.net/forum?id=FQepisCUWu}
}

@inproceedings{
    ye2025justice,
    title={Justice or Prejudice? Quantifying Biases in {LLM}-as-a-Judge},
    author={Jiayi Ye and Yanbo Wang and Yue Huang and Dongping Chen and Qihui Zhang and Nuno Moniz and Tian Gao and Werner Geyer and Chao Huang and Pin-Yu Chen and Nitesh V Chawla and Xiangliang Zhang},
    booktitle={The Thirteenth International Conference on Learning Representations},
    year={2025},
    url={https://openreview.net/forum?id=3GTtZFiajM}
}

@article{RaR2025,
  publtype={informal},
  author={Anisha Gunjal and Anthony Wang and Elaine Lau and Vaskar Nath and Bing Liu and Sean Hendryx},
  title={Rubrics as Rewards: Reinforcement Learning Beyond Verifiable Domains},
  year={2025},
  month={July},
  cdate={1751328000000},
  journal={CoRR},
  volume={abs/2507.17746},
  url={https://doi.org/10.48550/arXiv.2507.17746}
}

@article{liu2025openrubrics,
  title={Openrubrics: Towards scalable synthetic rubric generation for reward modeling and llm alignment},
  author={Liu, Tianci and Xu, Ran and Yu, Tony and Hong, Ilgee and Yang, Carl and Zhao, Tuo and Wang, Haoyu},
  journal={arXiv preprint arXiv:2510.07743},
  year={2025}
}

@inproceedings{ou2026serl,
  title={Serl: Self-examining reinforcement learning on open-domain},
  author={Ou, Weixuan and Zheng, Yanzhao and Sun, Shuoshuo and Zhang, Wei and Dong, Baohua and Zhu, Hangcheng and Huang, Ruohui and Yu, Gang and Yan, Pengwei and Qiao, Yifan},
  booktitle={Proceedings of the AAAI Conference on Artificial Intelligence},
  volume={40},
  number={38},
  pages={32619--32627},
  year={2026}
}

@article{ye2025self,
  title={Self-Rewarding Rubric-Based Reinforcement Learning for Open-Ended Reasoning},
  author={Ye, Zhiling and Yue, Yun and Wang, Haowen and Han, Xudong and Jiang, Jiadi and Wei, Cheng and Fan, Lei and Liang, Jiaxin and Zhang, Shuowen and Li, Ji and others},
  journal={arXiv preprint arXiv:2509.25534},
  year={2025}
}

@article{xu2025direct,
  title={Direct Reasoning Optimization: Constrained RL with Token-Level Dense Reward and Rubric-Gated Constraints for Open-ended Tasks},
  author={Xu, Yifei and Chakraborty, Tusher and Sharma, Srinagesh and Nunes, Leonardo and Sharma, Swati and Demopulos, Kate Drakos and K{\i}c{\i}man, Emre and Lu, Songwu and Chandra, Ranveer},
  journal={arXiv preprint arXiv:2506.13351},
  year={2025}
}

@article{tan2025process,
  title={Process-Supervised Reinforcement Learning for Interactive Multimodal Tool-Use Agents},
  author={Tan, Weiting and Qu, Xinghua and Tu, Ming and Ge, Meng and Liu, Andy T and Koehn, Philipp and Lu, Lu},
  journal={arXiv preprint arXiv:2509.14480},
  year={2025}
}

@inproceedings{
    zhang2025personaagent,
    title={PersonaAgent: When Large Language Model Agents Meet Personalization at Test Time},
    author={Weizhi Zhang and Xinyang Zhang and Chenwei Zhang and Liangwei Yang and Jingbo Shang and Zhepei Wei and Henry Peng Zou and Zijie Huang and Zhengyang Wang and Yifan Gao and Xiaoman Pan and Lian Xiong and Jingguo Liu and Philip S. Yu and Xian Li},
    booktitle={First Workshop on Multi-Turn Interactions in Large Language Models},
    year={2025},
    url={https://openreview.net/forum?id=fgCOkyJG3f}
}

@inproceedings{
cai2025large,
title={Large Language Models Empowered Personalized Web Agents},
author={Hongru Cai and Yongqi Li and Wenjie Wang and Fengbin ZHU and Xiaoyu Shen and Wenjie Li and Tat-Seng Chua},
booktitle={THE WEB CONFERENCE 2025},
year={2025},
url={https://openreview.net/forum?id=kAzqfqsCC5}
}

@article{liang2026learning,
  title={Learning Personalized Agents from Human Feedback},
  author={Liang, Kaiqu and Kruk, Julia and Qian, Shengyi and Yang, Xianjun and Bi, Shengjie and Yao, Yuanshun and Nie, Shaoliang and Zhang, Mingyang and Liu, Lijuan and Fisac, Jaime Fern{\'a}ndez and others},
  journal={arXiv preprint arXiv:2602.16173},
  year={2026}
}

@inproceedings{ETAPP,
  title={Evaluating personalized tool-augmented llms from the perspectives of personalization and proactivity},
  author={Hao, Yupu and Cao, Pengfei and Jin, Zhuoran and Liao, Huanxuan and Chen, Yubo and Liu, Kang and Zhao, Jun},
  booktitle={Proceedings of the 63rd Annual Meeting of the Association for Computational Linguistics (Volume 1: Long Papers)},
  pages={21897--21935},
  year={2025}
}

@article{wang2025mem,
  title={O-mem: Omni memory system for personalized, long horizon, self-evolving agents},
  author={Wang, Piaohong and Tian, Motong and Li, Jiaxian and Liang, Yuan and Wang, Yuqing and Chen, Qianben and Wang, Tiannan and Lu, Zhicong and Ma, Jiawei and Jiang, Yuchen Eleanor and others},
  journal={arXiv preprint arXiv:2511.13593},
  year={2025}
}

@article{su2026beyond,
  title={Beyond Dialogue Time: Temporal Semantic Memory for Personalized LLM Agents},
  author={Su, Miao and Guo, Yucan and Hou, Zhongni and Bai, Long and Li, Zixuan and Zhang, Yufei and Yin, Guojun and Lin, Wei and Jin, Xiaolong and Guo, Jiafeng and others},
  journal={arXiv preprint arXiv:2601.07468},
  year={2026}
}

@article{chhikara2025mem0,
  title={Mem0: Building production-ready ai agents with scalable long-term memory},
  author={Chhikara, Prateek and Khant, Dev and Aryan, Saket and Singh, Taranjeet and Yadav, Deshraj},
  journal={arXiv preprint arXiv:2504.19413},
  year={2025}
}

@inproceedings{
zhou2026mem,
title={{MEM}1: Learning to Synergize Memory and Reasoning for Efficient Long-Horizon Agents},
author={Zijian Zhou and Ao Qu and Zhaoxuan Wu and Sunghwan Kim and Alok Prakash and Daniela Rus and Bryan Kian Hsiang Low and Paul Pu Liang},
booktitle={The Fourteenth International Conference on Learning Representations},
year={2026},
url={https://openreview.net/forum?id=XY8AaxDSLb}
}

@article{liu2026simplemem,
  title={SimpleMem: Efficient Lifelong Memory for LLM Agents},
  author={Liu, Jiaqi and Su, Yaofeng and Xia, Peng and Han, Siwei and Zheng, Zeyu and Xie, Cihang and Ding, Mingyu and Yao, Huaxiu},
  journal={arXiv preprint arXiv:2601.02553},
  year={2026}
}

@inproceedings{copd,
  title={Disentangling user interest and conformity for recommendation with causal embedding},
  author={Zheng, Yu and Gao, Chen and Li, Xiang and He, Xiangnan and Li, Yong and Jin, Depeng},
  booktitle={Proceedings of the web conference 2021},
  pages={2980--2991},
  year={2021}
}

@article{deepseekmath2024,
  title        = {DeepSeekMath: Pushing the Limits of Mathematical Reasoning in Open Language Models},
  author       = {Shao, Zhihong and Wang, Peiyi and Zhu, Qihao and Yang, Runxin and Xu, Junxiao and Wu, Ming and Li, Ning and Chen, Zhiyuan and Zhang, Kaiyue and Hu, Zhi and others},
  journal      = {arXiv preprint arXiv:2402.03300},
  year         = {2024},
  eprint       = {2402.03300},
  archivePrefix= {arXiv},
  primaryClass = {cs.CL}
}

@article{dapo2025,
  title={Dapo: An open-source llm reinforcement learning system at scale},
  author={Yu, Qiying and Zhang, Zheng and Zhu, Ruofei and Yuan, Yufeng and Zuo, Xiaochen and Yue, Yu and Dai, Weinan and Fan, Tiantian and Liu, Gaohong and Liu, Lingjun and others},
  journal={arXiv preprint arXiv:2503.14476},
  year={2025}
}

@article{gspo2025,
  title={Group sequence policy optimization},
  author={Zheng, Chujie and Liu, Shixuan and Li, Mingze and Chen, Xiong-Hui and Yu, Bowen and Gao, Chang and Dang, Kai and Liu, Yuqiong and Men, Rui and Yang, An and others},
  journal={arXiv preprint arXiv:2507.18071},
  year={2025}
}

@article{gdpo2026,
  title={Gdpo: Group reward-decoupled normalization policy optimization for multi-reward rl optimization},
  author={Liu, Shih-Yang and Dong, Xin and Lu, Ximing and Diao, Shizhe and Belcak, Peter and Liu, Mingjie and Chen, Min-Hung and Yin, Hongxu and Wang, Yu-Chiang Frank and Cheng, Kwang-Ting and others},
  journal={arXiv preprint arXiv:2601.05242},
  year={2026}
}

@inproceedings{
feng2026groupingroup,
title={Group-in-Group Policy Optimization for {LLM} Agent Training},
author={Lang Feng and Zhenghai Xue and Tingcong Liu and Bo An},
booktitle={The Thirty-ninth Annual Conference on Neural Information Processing Systems},
year={2026},
url={https://openreview.net/forum?id=QXEhBMNrCW}
}

@inproceedings{
    toolrl2025,
    title={Tool{RL}: Reward is All Tool Learning Needs},
    author={Cheng Qian and Emre Can Acikgoz and Qi He and Hongru WANG and Xiusi Chen and Dilek Hakkani-T{\"u}r and Gokhan Tur and Heng Ji},
    booktitle={The Thirty-ninth Annual Conference on Neural Information Processing Systems},
    year={2026},
    url={https://openreview.net/forum?id=eOLdGbXT6t}
}

@article{memrl2026,
  title={Memrl: Self-evolving agents via runtime reinforcement learning on episodic memory},
  author={Zhang, Shengtao and Wang, Jiaqian and Zhou, Ruiwen and Liao, Junwei and Feng, Yuchen and Li, Zhuo and Zheng, Yujie and Zhang, Weinan and Wen, Ying and Li, Zhiyu and others},
  journal={arXiv preprint arXiv:2601.03192},
  year={2026}
}

@article{skillrl2026,
  title={Skillrl: Evolving agents via recursive skill-augmented reinforcement learning},
  author={Xia, Peng and Chen, Jianwen and Wang, Hanyang and Liu, Jiaqi and Zeng, Kaide and Wang, Yu and Han, Siwei and Zhou, Yiyang and Zhao, Xujiang and Chen, Haifeng and others},
  journal={arXiv preprint arXiv:2602.08234},
  year={2026}
}

@article{personalalign2026,
  title={PersonalAlign: Hierarchical Implicit Intent Alignment for Personalized GUI Agent with Long-Term User-Centric Records},
  author={Lyu, Yibo and Chen, Gongwei and Shao, Rui and Guan, Weili and Nie, Liqiang},
  journal={arXiv preprint arXiv:2601.09636},
  year={2026}
}

@article{prefmemory2025,
  title={Preference-aware memory update for long-term llm agents},
  author={Sun, Haoran and Zhang, Zekun and Zeng, Shaoning},
  journal={arXiv preprint arXiv:2510.09720},
  year={2025}
}

\appendix
\section{Extended Related Work}
\label{app:related_work}

\subsection{RL Optimization for LLMs and Agentic Decision Making}

\paragraph{GDPO.}
GDPO~\citep{gdpo2026} is particularly relevant to our work from the perspective of multi-reward reinforcement learning. It shows that directly applying GRPO-style normalization to the \emph{summed} reward in multi-reward settings can lead to training signal collapse, where distinct reward combinations are mapped to nearly identical advantages, thereby reducing optimization resolution and harming convergence. To address this issue, GDPO decouples group-wise normalization across individual reward components before aggregation, leading to more fine-grained advantage estimates and substantially improved training stability.

This insight directly informs our design. In personalized Agentic RL, the reward is naturally multi-faceted: a trajectory should be evaluated not only by its generic task quality, but also by how well it aligns with a specific user’s preferences. Our PARPO follows the same high-level principle that reward components should not be naively pooled before normalization. However, we extend this idea to a more challenging personalized setting, where the reward components are not only heterogeneous in semantics, but also heterogeneous across users. This motivates our reward decoupling between generic quality and personalized preference, as well as our user-anchor calibration for stabilizing optimization under cross-user reward-scale mismatch.

\paragraph{GiGPO.}
GiGPO~\citep{feng2026groupingroup} extends group-relative policy optimization by introducing a group-in-group comparison structure for LLM agent training. Its key insight is that finer-grained grouping can improve optimization when trajectories are diverse and difficult to compare globally. This perspective is particularly relevant to personalized settings, where user heterogeneity makes pooled comparisons unreliable. Our work shares the motivation that structured comparison is important, but goes further by explicitly decoupling generic quality from personalized preference and introducing user-anchor calibration for cross-user reward heterogeneity.

\paragraph{SkillRL.}
SkillRL~\citep{skillrl2026} demonstrates that successful trajectories of LLM agents can be distilled into reusable skills and recursively integrated back into future policy learning. This work is central to our memory design because it shows that long-horizon agent improvement benefits from behavioral abstraction, rather than relying only on flat trajectory replay or prompt concatenation. Our PSGM is inspired by this principle of reusable skill evolution, but differs in that it organizes skills with explicit user-, scenario-, and trajectory-level structure. In this sense, SkillRL motivates why skills should be retained and reused, while our work studies how such skills should be personalized and retrieved under heterogeneous user preferences.

\paragraph{MemRL.}
MemRL~\citep{memrl2026} studies how memory retrieval and memory maintenance can themselves be optimized as part of agent behavior. Instead of treating memory as a passive storage backend, it casts memory access as an active component of sequential decision making. This is closely aligned with our perspective that personalization depends not only on what the agent currently observes, but also on what user-relevant history it can recover during rollout. However, MemRL mainly optimizes memory for general task utility, whereas our work focuses on structured personalized memory and on how such memory should support user-contingent policy optimization.

\subsection{Non-verifiable and Open-ended Reward Design}

\paragraph{OpenRubrics.}
OpenRubrics~\citep{liu2025openrubrics} addresses a major bottleneck in non-verifiable RL: how to obtain structured and scalable supervision when direct correctness signals are unavailable. By automatically synthesizing high-quality rubrics for open-ended tasks, it provides a more interpretable and decomposable supervision interface than scalar scores alone. This is relevant to our work because it reinforces the broader premise that open-ended agent tasks require richer reward design than verifiable domains do. However, OpenRubrics remains largely centered on generic evaluation criteria, whereas our work requires rewards that can differ across users even for the same action.

\subsection{Personalized Agents, Preference Modeling, and User-aware Memory}

\paragraph{PersonaAgent.}
PersonaAgent~\citep{zhang2025personaagent} is one of the most directly relevant prior works on personalized agents. It shows that test-time personalization can substantially improve agent behavior by combining personalized memory with user-specific action adaptation, demonstrating the practical importance of explicitly modeling user context rather than adopting a one-size-fits-all policy. Our work shares the same high-level motivation, but differs fundamentally in where personalization enters the system: PersonaAgent primarily personalizes at inference time, whereas we study how user specificity should be incorporated into the \emph{training-time} policy optimization objective. Thus, PersonaAgent motivates the need for personalization, while our framework turns personalization into a native RL optimization problem.

\paragraph{Mem0.}
Mem0~\citep{chhikara2025mem0} emphasizes scalable long-term memory for practical AI agents and shows that memory extraction, consolidation, and retrieval are crucial for maintaining useful cross-session context. Its core value for our work lies in the observation that user alignment depends on persistent historical information, rather than only on the current prompt. This insight directly supports our use of structured long-term memory for personalized agents. However, Mem0 is primarily a general-purpose memory infrastructure, while PSGM is explicitly designed as a preference-aligned skill memory that feeds into policy optimization, rather than serving only as a memory backend.

\paragraph{O-Mem.}
O-Mem~\citep{zhou2026mem} pushes personalized memory further by proposing an omni memory system for personalized, long-horizon, self-evolving agents. A particularly important contribution of O-Mem is its emphasis on active user profiling and hierarchical retrieval, which helps distinguish salient user characteristics from topical or contextual interaction records. This is closely related to our motivation for PSGM, since personalization requires not just storing more history, but organizing history in a way that supports targeted retrieval under long-horizon interaction. The key difference is that O-Mem mainly improves personalized consistency and memory quality at inference time, whereas our framework integrates structured personalized retrieval directly into rollout-time policy optimization.

\paragraph{Preference-Aware Memory Update.}
Preference-Aware Memory Update for Long-Term LLM Agents~\citep{prefmemory2025} focuses on a challenge that is highly relevant to our setting: user preferences are non-stationary, and long-term memories must be updated in a preference-sensitive way rather than simply accumulated. By explicitly modeling evolving user tendencies and refining memory representations over time, this work shows that static memory is insufficient for personalized agents. This insight informs our own design philosophy in two ways. First, it supports the view that user-aware memory must be adaptive rather than passive. Second, it motivates our preference-disentangled reward modeling, since changing or noisy user behavior should not be naively treated as a direct and stationary supervision signal.

\paragraph{CoPD.}
CoPD~\citep{copd} is particularly important to our work from the perspective of preference modeling. It shows that observed user behavior is often a mixture of genuine interest and conformity effects, implying that naive behavior-based learning can recover distorted preference signals. This observation directly inspires our two-stage preference-disentangled reward model: we similarly view user trajectories as noisy behavioral evidence in which true preference may be entangled with external, contextual, or socially induced factors. In our setting, this is critical because personalized policy optimization depends on the quality of user-conditioned rewards; if those rewards are derived from conflated behavior, the policy will optimize the wrong target.

\subsection{Summary of Positioning}

Taken together, these prior works each address an important but partial aspect of the problem. GDPO and GiGPO improve the stability and effectiveness of RL optimization for LLMs, especially in settings with heterogeneous or multi-component rewards; OpenRubrics extends reward design beyond strictly verifiable tasks; SkillRL and MemRL demonstrate the value of reusable skills and active memory in long-horizon agents; PersonaAgent, Mem0, O-Mem, and Preference-Aware Memory Update highlight the importance of personalization, long-term memory, and adaptive user modeling; and CoPD reveals that observed user behavior must be disentangled before it can serve as a reliable preference signal.

\section{Method Implementation Details}
\label{sec:B}
\subsection{Implementation-Aligned Formulation of the Personalized Preference Reward Model}
\label{app:reward_detail}

This appendix provides the implementation-aligned training objectives of the personalized preference reward model in Section~\ref{sec:reward}. The model is trained in two stages: multi-view profile representation learning and collaborative preference disentanglement.

\paragraph{Stage 1: Multi-view profile representation learning.}
For each user $u$, we decompose the profile into $K$ semantic views $\{x_u^{(k)}\}_{k=1}^{K}$ and encode them with a frozen sentence encoder:
\begin{equation}
\mathbf{h}_u^{(k)} = E(x_u^{(k)}).
\end{equation}
An attention-based fusion module produces the profile embedding:
\begin{equation}
\alpha_u^{(k)}=
\frac{
\exp\!\left(\mathbf{w}^{\top}\tanh(\mathbf{W}_{\mathrm{attn}}\mathbf{h}_u^{(k)}+\mathbf{b}_{\mathrm{attn}})\right)
}{
\sum_{k'=1}^{K}
\exp\!\left(\mathbf{w}^{\top}\tanh(\mathbf{W}_{\mathrm{attn}}\mathbf{h}_u^{(k')}+\mathbf{b}_{\mathrm{attn}})\right)
},
\end{equation}
\begin{equation}
\mathbf{u}_{\mathrm{profile}}
=
\mathrm{LayerNorm}\!\left(
\mathbf{W}_{\mathrm{out}}
\sum_{k=1}^{K}\alpha_u^{(k)}\mathbf{h}_u^{(k)}
\right).
\end{equation}

To make user embeddings discriminative, we optimize a user-level InfoNCE loss:
\begin{equation}
\mathcal{L}_{\mathrm{user\text{-}InfoNCE}}
=
-\sum_{u}
\log
\frac{
\exp\!\left(
\operatorname{sim}(\mathbf{u}_{\mathrm{profile}},\mathbf{u}^{+}_{\mathrm{profile}})/\tau_c
\right)
}{
\sum_{v}
\exp\!\left(
\operatorname{sim}(\mathbf{u}_{\mathrm{profile}},\mathbf{u}^{(v)}_{\mathrm{profile}})/\tau_c
\right)
}.
\end{equation}
To preserve view information, we reconstruct each encoded view from the fused profile representation:
\begin{equation}
\hat{\mathbf{h}}_u^{(k)}
=
\mathbf{W}_{\mathrm{rec}}^{(k)}\mathbf{u}_{\mathrm{profile}}
+
\mathbf{b}_{\mathrm{rec}}^{(k)},
\end{equation}
\begin{equation}
\mathcal{L}_{\mathrm{recon}}
=
\sum_u \sum_{k=1}^{K}
\left\|
\hat{\mathbf{h}}_u^{(k)}-\mathbf{h}_u^{(k)}
\right\|_2^2.
\end{equation}
The stage-1 objective is
\begin{equation}
\mathcal{L}_{\mathrm{stage1}}
=
\mathcal{L}_{\mathrm{user\text{-}InfoNCE}}
+
\lambda_{\mathrm{recon}}\mathcal{L}_{\mathrm{recon}}.
\end{equation}

\paragraph{Stage 2: LightGCN-based collaborative preference disentanglement.}
We build a user--item interaction graph and apply LightGCN:
\begin{equation}
\mathbf{E}^{(\ell+1)}=\hat{\mathbf{A}}\mathbf{E}^{(\ell)},
\qquad
\mathbf{E}_{\mathrm{final}}
=
\frac{1}{L+1}\sum_{\ell=0}^{L}\mathbf{E}^{(\ell)}.
\end{equation}
After splitting $\mathbf{E}_{\mathrm{final}}$, we obtain user and item collaborative embeddings $\mathbf{u}_{\mathrm{cf}}$ and $\mathbf{i}_{\mathrm{cf}}$.

The collaborative user embedding is disentangled into two branches:
\begin{equation}
\mathbf{u}_{\mathrm{int}}=\mathrm{InterestEncoder}(\mathbf{u}_{\mathrm{cf}}),\qquad
\mathbf{u}_{\mathrm{conf}}=\mathrm{ConformityEncoder}(\mathbf{u}_{\mathrm{cf}}).
\end{equation}
Let $\tilde p_i\in[0,1]$ denote the normalized popularity of item $i$. The implementation uses two popularity-weighted contrastive objectives.

For the interest branch, let
\begin{equation}
\omega_i^{\mathrm{int}}=\exp(1-\tilde p_i).
\end{equation}
Then
\begin{equation}
\mathcal{L}_{\mathrm{int}}
=
\frac{1}{B}\sum_{(u,i^{+})}
\left[
-\log\!\left(\omega_{i^{+}}^{\mathrm{int}}+\epsilon\right)
-
\frac{
\mathbf{u}_{\mathrm{int}}^{\top}\mathbf{i}_{\mathrm{cf}}^{+}
}{\tau}
+
\log \sum_{j}
\exp\!\left(
\frac{
\mathbf{u}_{\mathrm{int}}^{\top}\mathbf{i}_{\mathrm{cf}}^{(j)}
}{\tau}
\right)
\right].
\end{equation}

For the conformity branch, let
\begin{equation}
\omega_i^{\mathrm{conf}}=\exp(\tilde p_i).
\end{equation}
Then
\begin{equation}
\mathcal{L}_{\mathrm{conf}}
=
\frac{1}{B}\sum_{(u,i^{+})}
\left[
-\log\!\left(\omega_{i^{+}}^{\mathrm{conf}}+\epsilon\right)
-
\frac{
\mathbf{u}_{\mathrm{conf}}^{\top}\mathbf{i}_{\mathrm{cf}}^{+}
}{\tau}
+
\log \sum_{j}
\exp\!\left(
\frac{
\mathbf{u}_{\mathrm{conf}}^{\top}\mathbf{i}_{\mathrm{cf}}^{(j)}
}{\tau}
\right)
\right].
\end{equation}

\paragraph{Branch fusion and recommendation loss.}
The two branch embeddings are normalized and fused by branch attention:
\begin{equation}
[\alpha_{\mathrm{int}},\alpha_{\mathrm{conf}}]
=
\mathrm{softmax}\!\left(
\frac{
\mathbf{W}_2\,\mathrm{ReLU}\!\bigl(\mathbf{W}_1[\hat{\mathbf{u}}_{\mathrm{int}};\hat{\mathbf{u}}_{\mathrm{conf}}]\bigr)
}{
T
}
\right),
\end{equation}
\begin{equation}
\mathbf{u}_{\mathrm{fused}}
=
s\left(
\alpha_{\mathrm{int}}\hat{\mathbf{u}}_{\mathrm{int}}
+
\alpha_{\mathrm{conf}}\hat{\mathbf{u}}_{\mathrm{conf}}
\right).
\end{equation}
Using positive and negative items $(i^+,i^-)$, the BPR-style recommendation loss is
\begin{equation}
\mathcal{L}_{\mathrm{rec}}
=
\frac{1}{B}\sum_{(u,i^+,i^-)}
\operatorname{softplus}
\left(
\mathbf{u}_{\mathrm{fused}}^{\top}\mathbf{i}_{\mathrm{cf}}^{-}
-
\mathbf{u}_{\mathrm{fused}}^{\top}\mathbf{i}_{\mathrm{cf}}^{+}
\right).
\end{equation}

\paragraph{Orthogonality, user contrast, and regularization.}
To encourage branch specialization, we use an orthogonality penalty:
\begin{equation}
\mathcal{L}_{\mathrm{orth}}
=
\frac{1}{B}
\sum_{u}
\left(
\hat{\mathbf{u}}_{\mathrm{int}}^{\top}\hat{\mathbf{u}}_{\mathrm{conf}}
\right)^2.
\end{equation}

For user contrast, let $\{\mathbf{g}_m\}_{m=1}^{M}$ be the fused embeddings of the unique users in the current batch, normalized as $\hat{\mathbf{g}}_m$. We form the similarity matrix
\begin{equation}
S_{mn}
=
\frac{
\hat{\mathbf{g}}_m^{\top}\hat{\mathbf{g}}_n
}{\tau},
\end{equation}
and optimize
\begin{equation}
\mathcal{L}_{\mathrm{user}}
=
\mathrm{CE}(S,\mathbf{y}),
\end{equation}
where $\mathbf{y}=[1,2,\dots,M]$ denotes the identity target.

The $\ell_2$ regularizer is
\begin{equation}
\mathcal{L}_{\mathrm{reg}}
=
\frac{1}{2B}
\left(
\|\mathbf{u}_{\mathrm{cf}}\|_2^2
+
\|\mathbf{i}_{\mathrm{cf}}^{+}\|_2^2
+
\|\mathbf{i}_{\mathrm{cf}}^{-}\|_2^2
\right).
\end{equation}

\paragraph{Action alignment loss.}
When item text embeddings are available, the implementation additionally trains the action encoder to stay aligned with the collaborative filtering space. Let
\begin{equation}
\mathbf{q}^{+}=\mathrm{ActionEncoder}(E(i^{+})),\qquad
\mathbf{q}^{-}=\mathrm{ActionEncoder}(E(i^{-})).
\end{equation}
We first align the projected positive item text to the positive collaborative item embedding:
\begin{equation}
\mathcal{L}_{\mathrm{align\text{-}cos}}
=
\frac{1}{B}\sum_{(u,i^+)}
\left(
1-
\cos(\mathbf{q}^{+},\operatorname{sg}(\mathbf{i}_{\mathrm{cf}}^{+}))
\right),
\end{equation}
where $\operatorname{sg}(\cdot)$ denotes stop-gradient. We further add an action-space BPR term:
\begin{equation}
\mathcal{L}_{\mathrm{align\text{-}bpr}}
=
\frac{1}{B}\sum_{(u,i^+,i^-)}
\operatorname{softplus}
\left(
\mathbf{u}_{\mathrm{fused}}^{\top}\mathbf{q}^{-}
-
\mathbf{u}_{\mathrm{fused}}^{\top}\mathbf{q}^{+}
\right).
\end{equation}
Thus,
\begin{equation}
\mathcal{L}_{\mathrm{align}}
=
\mathcal{L}_{\mathrm{align\text{-}cos}}
+
\mathcal{L}_{\mathrm{align\text{-}bpr}}.
\end{equation}

\paragraph{Total stage-2 objective.}
The full stage-2 loss is
\begin{equation}
\mathcal{L}_{\mathrm{stage2}}
=
\mathcal{L}_{\mathrm{rec}}
+
\lambda_{\mathrm{int}}\mathcal{L}_{\mathrm{int}}
+
\lambda_{\mathrm{conf}}\mathcal{L}_{\mathrm{conf}}
+
\lambda_{\mathrm{orth}}\mathcal{L}_{\mathrm{orth}}
+
\lambda_{\mathrm{user}}\mathcal{L}_{\mathrm{user}}
+
\lambda_{\mathrm{reg}}\mathcal{L}_{\mathrm{reg}}
+
\lambda_{\mathrm{align}}\mathcal{L}_{\mathrm{align}}.
\end{equation}
In the current implementation, the default coefficients are
$\lambda_{\mathrm{int}}=0.2$,
$\lambda_{\mathrm{conf}}=0.2$,
$\lambda_{\mathrm{orth}}=0.1$,
$\lambda_{\mathrm{user}}=3.0$,
$\lambda_{\mathrm{reg}}=10^{-4}$,
and
$\lambda_{\mathrm{align}}=0.5$.

\paragraph{Implementation-time action alignment for reward inference.}
At inference time, the action text embedding is not used directly. Instead, the model constructs an aligned collaborative representation by nearest-neighbor retrieval over known item text embeddings. Given an action text embedding $\mathbf{e}_a$, let $\mathcal{N}_K(a)$ be its top-$K$ nearest items in text space, with weights
\begin{equation}
\pi_j
=
\mathrm{softmax}\!\left(
\frac{\operatorname{sim}(\mathbf{e}_a,\mathbf{e}_j)}{0.1}
\right).
\end{equation}
The aligned collaborative action embedding is
\begin{equation}
\mathbf{a}_{\mathrm{cf}}
=
\sum_{j\in\mathcal{N}_K(a)}
\pi_j \mathbf{i}_{\mathrm{cf}}^{(j)}.
\end{equation}
The final action embedding is
\begin{equation}
\mathbf{a}_{\mathrm{final}}
=
0.5\,\widehat{\mathbf{a}}_{\mathrm{cf}}
+
0.5\,\widehat{\mathbf{a}}_{\mathrm{proj}},
\end{equation}
where $\widehat{\mathbf{a}}_{\mathrm{cf}}$ and $\widehat{\mathbf{a}}_{\mathrm{proj}}$ are normalized embeddings. The deployed branch and fused scores are then
\begin{equation}
r_{\mathrm{int}}(u,a)=\hat{\mathbf{u}}_{\mathrm{int}}^{\top}\hat{\mathbf{a}}_{\mathrm{final}},\qquad
r_{\mathrm{conf}}(u,a)=\hat{\mathbf{u}}_{\mathrm{conf}}^{\top}\hat{\mathbf{a}}_{\mathrm{final}},
\end{equation}
\begin{equation}
r_{\mathrm{fused}}(u,a)=\hat{\mathbf{u}}_{\mathrm{fused}}^{\top}\hat{\mathbf{a}}_{\mathrm{final}}.
\end{equation}

\subsection{Implementation-Aligned Retrieval and Scoring in PSGM}
\label{app:psgm_detail}

This appendix formalizes the graph-based retrieval mechanism in Section~\ref{sec:psgm} according to the current implementation.

\paragraph{Semantic initialization and 2-hop expansion.}
Given query embedding $\mathbf{q}$, PSGM first retrieves an initial candidate set:
\begin{equation}
\mathcal{S}_{\mathrm{init}}(q)
=
\operatorname{TopM}_{s\in\mathcal{S}}
\cos(\mathbf{q},\mathbf{s}).
\end{equation}
For each candidate skill, the system performs a 2-hop expansion:
\begin{equation}
\text{skill} \rightarrow \text{owner user} \rightarrow \text{sibling skills},
\end{equation}
where the first hop traverses an incoming \textsc{OWNS} edge and the second hop traverses outgoing \textsc{OWNS} edges from the owner user node.

\paragraph{Scoring components.}
For a user node $u$ and candidate skill node $s$, the implementation uses
\begin{equation}
f_{\mathrm{sem}}(q,s)=\cos(\mathbf{q},\mathbf{s}),
\end{equation}
\begin{equation}
f_{\mathrm{user}}(u,s)=\cos(\mathbf{u},\mathbf{s}),
\end{equation}
\begin{equation}
f_{\mathrm{comm}}(u,s)=
\begin{cases}
1.0, & \text{if } u \text{ and } s \text{ belong to the same selected community},\\
0.3, & \text{if both are assigned but belong to different communities},\\
0, & \text{otherwise},
\end{cases}
\end{equation}
\begin{equation}
f_{\mathrm{comp}}(s)
=
1+\kappa \sum_{e\in\mathcal{E}_{\mathrm{comp}}(s)} w_e,
\end{equation}
\begin{equation}
f_{\mathrm{conf}}(s)
=
\min\!\left(
\sum_{e\in\mathcal{E}_{\mathrm{conf}}(s)} w_e,\;1.0
\right),
\end{equation}
where $\kappa$ is the complement boost factor from the graph configuration.

\paragraph{Final graph-aware score.}
The final score is
\begin{equation}
\operatorname{score}(q,s,u)
=
f_{\mathrm{sem}}(q,s)
\cdot
\bigl(\alpha+\beta f_{\mathrm{user}}(u,s)\bigr)
\cdot
\bigl(1+\gamma f_{\mathrm{comm}}(u,s)\bigr)
\cdot
f_{\mathrm{comp}}(s)
\cdot
\bigl(1-\delta f_{\mathrm{conf}}(s)\bigr).
\end{equation}
The hyperparameters $\alpha,\beta,\gamma,\delta$ are fixed in the graph configuration. In the current default implementation, they are set to $\alpha=0.3$, $\beta=0.3$, $\gamma=0.2$, and $\delta=0.7$.

\paragraph{Implementation note.}
The current code includes tool- and scenario-validation hooks in the retrieval pipeline, but these checks are implemented as permissive placeholders and therefore do not yet impose additional filtering beyond the graph-aware score above.

\subsection{Implementation-Aligned Dual-Track Optimization}
\label{app:parpo_detail}

This appendix formalizes the current dual-track policy optimization logic corresponding to Section~\ref{sec:parpo}.

\paragraph{Evaluation-time fused reward.}
Given an episode, the judge score is$R_{\mathrm{judge}}$
Let $r_{\mathrm{int}}$ and $r_{\mathrm{conf}}$ denote the aggregated branch-specific neural rewards for that episode. Using the reward statistics from the training distribution,
\begin{equation}
\tilde r_{\mathrm{int}}=\sigma\!\left(\frac{r_{\mathrm{int}}-\mu_{\mathrm{int}}}{\sigma_{\mathrm{int}}}\right),\qquad
\tilde r_{\mathrm{conf}}=\sigma\!\left(\frac{r_{\mathrm{conf}}-\mu_{\mathrm{conf}}}{\sigma_{\mathrm{conf}}}\right).
\end{equation}

\paragraph{Training-time dual-track decomposition.}
For PARPO-style policy optimization, we use $R_{\mathrm{base}}$ and $R_{\mathrm{pers}}$, which denote the generic reward and the personalized reward, respectively.

\paragraph{Base advantage.}
For prompt group $g$,
\begin{equation}
A_i^{\mathrm{base}}
=
\frac{
R_{\mathrm{base}}(\tau_i)-\bar R_{\mathrm{base}}^{(g)}
}{
\operatorname{Std}\!\bigl(\{R_{\mathrm{base}}(\tau_j)\}_{j\in g}\bigr)+\epsilon
}.
\end{equation}

\paragraph{User-anchor update.}
For each user $u$, the implementation stores a persistent anchor
\begin{equation}
\mathcal{A}_u = \{m_u, v_u, c_u\},
\end{equation}
where $m_u$ is the EMA mean, $v_u$ is the EMA variance, and $c_u$ counts the number of updates. Given the current batch personalized rewards for that user, the anchor is updated as
\begin{equation}
m_u \leftarrow
\begin{cases}
\bar R_{u}^{\mathrm{pers}}, & c_u=0,\\
\rho m_u + (1-\rho)\bar R_{u}^{\mathrm{pers}}, & c_u>0,
\end{cases}
\end{equation}
\begin{equation}
v_u \leftarrow
\begin{cases}
\max(\operatorname{Var}(\mathcal{R}_{u}^{\mathrm{pers}}),10^{-6}), & c_u=0,\\
\rho v_u + (1-\rho)\operatorname{Var}(\mathcal{R}_{u}^{\mathrm{pers}}), & c_u>0.
\end{cases}
\end{equation}

\paragraph{Personalized advantage.}
Let $\bar R_{\mathrm{pers}}^{(g)}$ be the within-group mean personalized reward. The personalized baseline is
\begin{equation}
b_{u,g}
=
\max\!\left(
\bar R_{\mathrm{pers}}^{(g)},
\,
m_u-\gamma_p\sqrt{v_u}
\right).
\end{equation}
The personalized advantage is
\begin{equation}
A_i^{\mathrm{pers}}
=
\frac{
R_{\mathrm{pers}}(\tau_i)-b_{u_i,g}
}{
\sqrt{v_{u_i}}+\epsilon
}.
\end{equation}

\paragraph{Fused advantage and PPO-style update.}
The total trajectory-level advantage is
\begin{equation}
A_i^{\mathrm{total}}
=
w_{\mathrm{base}}A_i^{\mathrm{base}}
+
w_{\mathrm{pers}}A_i^{\mathrm{pers}}.
\end{equation}
This advantage is broadcast to the token level and used in the standard PPO-style clipped policy loss:
\begin{equation}
\mathcal{L}_{\mathrm{policy}}
=
\frac{1}{B}\sum_i
\max\!\left(
-r_i(\theta)A_i^{\mathrm{total}},
-
\operatorname{clip}\!\bigl(r_i(\theta),1-\eta,1+\eta\bigr)A_i^{\mathrm{total}}
\right).
\end{equation}
When enabled in training, KL regularization is applied separately by the actor update loop rather than being absorbed into the advantage definition.
\section{Theoretical Analysis of PARPO}
\label{app:theory}

In this section, we provide a theoretical analysis of PARPO. Our goal is not to establish global convergence guarantees for the full PPO/GRPO training dynamics, but rather to address three questions: (1) why personalized optimization is preferable to user-agnostic optimization under heterogeneous user preferences; (2) why standard GRPO incurs structural bias in personalized settings; and (3) why PARPO can reduce such bias through reward decomposition, user-aware grouping, and user-specific anchor calibration.

Our analysis follows the same problem formulation as in the main text, but focuses on trajectory-level rewards and advantage estimation errors in order to highlight the key structure of personalized policy optimization.

Given a user profile $p_u \in \mathcal{P}$, a user query $q \in \mathcal{Q}$, and a trajectory $\tau$ generated by policy $\pi$, the total reward is defined as
\begin{equation}
R(\tau,p_u,q)
=
\alpha R_{\text{base}}(\tau,q)
+
(1-\alpha)R_{\text{pers}}(\tau,p_u,h_u),
\end{equation}
where $R_{\text{base}}$ denotes the general-quality reward, $R_{\text{pers}}$ denotes the personalized preference reward, and $h_u$ denotes the historical interaction records of user $u$.

For a fixed user $u$ and query $q$, we define the user-specific value function as
\begin{equation}
V_u(q)
=
\mathbb{E}_{\tau\sim\pi(\cdot\mid p_u,q)}
\left[
R(\tau,p_u,q)
\right].
\end{equation}
If user identities are ignored and rewards are pooled across users, we define the pooled value function as
\begin{equation}
V_{\mathrm{pool}}(q)
=
\mathbb{E}_{u,\tau}
\left[
R(\tau,p_u,q)\mid q
\right].
\end{equation}
Accordingly, the true personalized advantage of trajectory $\tau$ for user $u$ under query $q$ is
\begin{equation}
A_u^*(\tau\mid q)
=
R(\tau,p_u,q)-V_u(q).
\end{equation}

Since both GRPO and PARPO use normalized relative advantages, we further define the normalized user-specific advantage as
\begin{equation}
\bar A_u^*(\tau\mid q)
=
\frac{
R(\tau,p_u,q)-V_u(q)
}{
\sigma_u(q)+\epsilon
},
\end{equation}
where $\sigma_u(q)$ denotes the standard deviation of trajectory rewards for user $u$ under query $q$, and $\epsilon>0$ is a numerical stabilizer. Correspondingly, the pooled normalized advantage used by standard GRPO is written as
\begin{equation}
\bar A^{\mathrm{GRPO}}(\tau\mid q)
=
\frac{
R(\tau,p_u,q)-V_{\mathrm{pool}}(q)
}{
\sigma_{\mathrm{pool}}(q)+\epsilon
},
\end{equation}
where $\sigma_{\mathrm{pool}}(q)$ denotes the pooled reward standard deviation across users.

To characterize the heterogeneity induced by personalized preferences, we further define the mean personalized reward of user $u$ under query $q$ as
\begin{equation}
\mu_u(q)
=
\mathbb{E}_{\tau}
\left[
R_{\text{pers}}(\tau,p_u,h_u)\mid u,q
\right],
\end{equation}
the pooled mean personalized reward as
\begin{equation}
\mu_{\mathrm{pool}}(q)
=
\mathbb{E}_{u,\tau}
\left[
R_{\text{pers}}(\tau,p_u,h_u)\mid q
\right],
\end{equation}
and the mean personalized reward within the similar-user group $G(u)$ as
\begin{equation}
\mu_{G(u)}(q)
=
\mathbb{E}_{u'\in G(u),\tau}
\left[
R_{\text{pers}}(\tau,p_{u'},h_{u'})\mid q
\right].
\end{equation}

We also define the global heterogeneity measure
\begin{equation}
\mathcal H(q)
=
\mathbb{E}_u\left[
\left(
\mu_u(q)-\mu_{\mathrm{pool}}(q)
\right)^2
\right],
\end{equation}
and the local within-group heterogeneity measure
\begin{equation}
\mathcal H_G(q)
=
\mathbb{E}_u\left[
\left(
\mu_u(q)-\mu_{G(u)}(q)
\right)^2
\right].
\end{equation}

In PARPO, the personalized advantage does not directly use the true user-specific mean $\mu_u(q)$ as the baseline. Instead, it constructs an approximate user-specific baseline by combining similar-user group statistics and a user-specific historical anchor:
\begin{equation}
\tilde\mu_u(q)=\max(\mu_{G(u)}(q),\, b_u(q)-\epsilon_u),
\end{equation}
where $b_u(q)$ denotes the historical reward anchor of user $u$, and $\epsilon_u$ denotes an adaptive margin term. In the following analysis, we assume that there exists $\sigma_{\min}>0$ such that
\begin{equation}
\sigma_u(q)\ge \sigma_{\min},\qquad \sigma_{\mathrm{pool}}(q)\ge \sigma_{\min}.
\end{equation}

\subsection{Necessity of Personalized Optimization}

We first show that under heterogeneous user preferences, personalized decision-making is never worse than user-agnostic decision-making, and that the gain can be explicitly characterized by preference heterogeneity.

Consider a fixed query $q$ and two candidate trajectories $\tau_1,\tau_2$. For each user $u$, define
\begin{equation}
z_u = \mathbb{P}(\tau_1 \succ \tau_2 \mid u, q),
\end{equation}
namely, the probability that user $u$ prefers trajectory $\tau_1$ over $\tau_2$ under query $q$.

If the policy is user-agnostic, it must choose the same trajectory for all users. Its optimal value is therefore
\begin{equation}
V_{\mathrm{avg}}(q)
=
\max\big(\mathbb{E}_u[z_u],\, 1-\mathbb{E}_u[z_u]\big).
\end{equation}
If the policy is user-aware, it may choose different trajectories for different users, yielding
\begin{equation}
V_{\mathrm{pers}}(q)
=
\mathbb{E}_u\big[\max(z_u,1-z_u)\big].
\end{equation}

\begin{theorem}
\label{thm:pers_necessity}
For any fixed query $q$, we have
\begin{equation}
V_{\mathrm{pers}}(q)\ge V_{\mathrm{avg}}(q).
\end{equation}
Moreover, the gain can be written as
\begin{equation}
\Delta_{\mathrm{pers}}(q)
:=
V_{\mathrm{pers}}(q)-V_{\mathrm{avg}}(q)
=
\mathbb{E}_u\left[\left|z_u-\frac12\right|\right]
-
\left|\mathbb{E}_u[z_u]-\frac12\right|.
\end{equation}
Hence, when user preferences are heterogeneous, $\Delta_{\mathrm{pers}}(q)$ is typically positive and increases with the degree of preference disagreement.
\end{theorem}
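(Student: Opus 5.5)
The whole argument rests on one elementary identity for numbers in $[0,1]$:
\[
\max(x,1-x)=\tfrac12+\left|x-\tfrac12\right| \qquad (0\le x\le 1),
\]
which follows because the two arguments are symmetric about $\tfrac12$. I will apply it twice. Pointwise, with $x=z_u$, it gives $\max(z_u,1-z_u)=\tfrac12+|z_u-\tfrac12|$. Taking expectations over $u$, which is legitimate since $z_u\in[0,1]$ is bounded, yields
\[
V_{\mathrm{pers}}(q)=\tfrac12+\mathbb{E}_u\!\left[\left|z_u-\tfrac12\right|\right].
\]
Next, with $x=\mathbb{E}_u[z_u]$, which also lies in $[0,1]$, it gives $V_{\mathrm{avg}}(q)=\tfrac12+\left|\mathbb{E}_u[z_u]-\tfrac12\right|$. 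Subtracting the two expressions cancels the constant $\tfrac12$ and gives exactly the stated formula for $\Delta_{\mathrm{pers}}(q)$.

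For the inequality $V_{\mathrm{pers}}(q)\ge V_{\mathrm{avg}}(q)$, I will invoke Jensen's inequality for the convex function $x\mapsto|x-\tfrac12|$:
\[
\left|\mathbb{E}_u[z_u]-\tfrac12\right|=\left|\mathbb{E}_u\!\left[z_u-\tfrac12\right]\right|\le \mathbb{E}_u\!\left[\left|z_u-\tfrac12\right|\right],
\]
so $\Delta_{\mathrm{pers}}(q)\ge 0$. As a sanity check, the same conclusion follows directly without the identity: for every $u$ we have $\max(z_u,1-z_u)\ge z_u$ and $\max(z_u,1-z_u)\ge 1-z_u$, and taking expectations of both gives $V_{\mathrm{pers}}(q)\ge\max(\mathbb{E}_u[z_u],1-\mathbb{E}_u[z_u])=V_{\mathrm{avg}}(q)$.

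The qualitative clause is the only step needing care, because "typically positive" is not a precise statement. I would make it concrete through the equality case of the triangle inequality. Equality in Jensen's step holds if and only if $z_u-\tfrac12$ has almost surely constant sign, that is, if and only if almost all users weakly agree on which trajectory is better. Hence $\Delta_{\mathrm{pers}}(q)>0$ exactly when a positive mass of users strictly prefers $\tau_1$ and a positive mass strictly prefers $\tau_2$. To show that the gain grows with disagreement, I would write $z_u-\tfrac12$ as a positive part $P_+$ minus a negative part $P_-$, each taken in expectation, giving
\[
\Delta_{\mathrm{pers}}(q)=(P_++P_-)-|P_+-P_-|=2\min(P_+,P_-).
\]
This is an explicit measure of disagreement that increases as the minority side's mass grows. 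I would present this as the rigorous version of the informal last sentence of Theorem~\ref{thm:pers_necessity}.
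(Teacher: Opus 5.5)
Your proof is correct and follows essentially the paper's route: the paper likewise writes $\max(z,1-z)=\tfrac12+|z-\tfrac12|$, applies Jensen's inequality to this convex function to get $V_{\mathrm{avg}}(q)\le V_{\mathrm{pers}}(q)$, and reads the gain formula off the identity. Your equality-case analysis and the expression $\Delta_{\mathrm{pers}}(q)=2\min(P_+,P_-)$ go beyond the paper, which leaves the final qualitative clause informal, and they are a valid way to make that clause precise.
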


\begin{proof}
Define
\begin{equation}
f(z)=\max(z,1-z)=\frac12+\left|z-\frac12\right|.
\end{equation}
Since the absolute value function is convex, $f(z)$ is also convex. By Jensen's inequality,
\begin{equation}
f(\mathbb{E}_u[z_u]) \le \mathbb{E}_u[f(z_u)].
\end{equation}
Substituting the definition of $f$, we obtain
\begin{equation}
\max\big(\mathbb{E}_u[z_u],\,1-\mathbb{E}_u[z_u]\big)
\le
\mathbb{E}_u[\max(z_u,1-z_u)],
\end{equation}
which gives
\begin{equation}
V_{\mathrm{avg}}(q)\le V_{\mathrm{pers}}(q).
\end{equation}
Furthermore, since $f(z)=\frac12+|z-\frac12|$, we directly obtain
\begin{equation}
V_{\mathrm{pers}}(q)-V_{\mathrm{avg}}(q)
=
\mathbb{E}_u\left[\left|z_u-\frac12\right|\right]
-
\left|\mathbb{E}_u[z_u]-\frac12\right|.
\end{equation}
This completes the proof.
\end{proof}

The above theorem shows not only that personalized decision-making is preferable to user-agnostic decision-making, but also that the gain is explicitly governed by the heterogeneity of user preferences. Therefore, the more diverse the user preferences are, the more necessary personalized optimization becomes.

\subsection{Structural Bias of Standard GRPO}

The previous result establishes the necessity of personalized optimization. We now explain why, even when personalized rewards are included, standard GRPO still incurs structural bias in personalized settings.

The pooled normalized advantage used by standard GRPO is
\begin{equation}
\bar A^{\mathrm{GRPO}}(\tau\mid q)
=
\frac{
R(\tau,p_u,q)-V_{\mathrm{pool}}(q)
}{
\sigma_{\mathrm{pool}}(q)+\epsilon
}.
\end{equation}
In contrast, the true normalized user-specific advantage should be
\begin{equation}
\bar A_u^*(\tau\mid q)
=
\frac{
R(\tau,p_u,q)-V_u(q)
}{
\sigma_u(q)+\epsilon
}.
\end{equation}
Therefore, the estimation error consists of both baseline mismatch and normalization-scale mismatch.

\begin{proposition}
\label{prop:grpo_structural_bias}
For any user $u$, query $q$, and trajectory $\tau$, the normalized advantage estimation error of standard GRPO satisfies
\begin{equation}
\left|
\bar A^{\mathrm{GRPO}}(\tau\mid q)-\bar A_u^*(\tau\mid q)
\right|
\le
\frac{
|V_u(q)-V_{\mathrm{pool}}(q)|
}{
\sigma_{\min}+\epsilon
}
+
\frac{
|R(\tau,p_u,q)-V_u(q)|\cdot |\sigma_u(q)-\sigma_{\mathrm{pool}}(q)|
}{
(\sigma_{\min}+\epsilon)^2
}.
\end{equation}
Moreover, by reward decomposition,
\begin{equation}
|V_u(q)-V_{\mathrm{pool}}(q)|
\le
\alpha |V_u^{\text{base}}(q)-V_{\mathrm{pool}}^{\text{base}}(q)|
+
(1-\alpha)|V_u^{\text{pers}}(q)-V_{\mathrm{pool}}^{\text{pers}}(q)|.
\end{equation}
If the general-quality term varies only mildly across users, then the dominant bias term of standard GRPO is controlled by the personalized heterogeneity $\mathcal H(q)$.
\end{proposition}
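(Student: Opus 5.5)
The plan is to decompose the error by adding and subtracting a hybrid term that shares the user-specific baseline with $\bar A_u^*$ but uses the pooled normalizer of GRPO. Abbreviate $R=R(\tau,p_u,q)$, $a=\sigma_{\mathrm{pool}}(q)+\epsilon$ and $b=\sigma_u(q)+\epsilon$. Then we write
\begin{equation*}
\bar A^{\mathrm{GRPO}}-\bar A_u^*
=\frac{V_u(q)-V_{\mathrm{pool}}(q)}{a}
+\bigl(R-V_u(q)\bigr)\left(\frac1a-\frac1b\right).
\end{equation*}
The first term is the baseline (center) mismatch; the second is the scale mismatch acting on the true centered reward. The triangle inequality bounds the error by the sum of the absolute values of these two pieces.

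For the first piece, the standing assumption $\sigma_{\mathrm{pool}}(q)\ge\sigma_{\min}$ gives $a\ge\sigma_{\min}+\epsilon$. Hence the first piece is at most $|V_u-V_{\mathrm{pool}}|/(\sigma_{\min}+\epsilon)$.

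For the second piece, use $\frac1a-\frac1b=\frac{b-a}{ab}$ together with $b-a=\sigma_u(q)-\sigma_{\mathrm{pool}}(q)$. Since both $a$ and $b$ are at least $\sigma_{\min}+\epsilon$, we have $ab\ge(\sigma_{\min}+\epsilon)^2$. This yields exactly the second term of the claimed bound. This step is the only place where both lower bounds on $\sigma$ are needed, and it is where care is required. One must split with the pooled denominator on the center-mismatch term, not the user one, so that the scale term multiplies $R-V_u$, matching the statement. If one instead split as $(R-V_{\mathrm{pool}})(\ldots)$, the bound would involve $|R-V_{\mathrm{pool}}|$ rather than $|R-V_u|$.

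For the decomposition inequality, linearity of expectation applied to $R=\alpha R_{\text{base}}+(1-\alpha)R_{\text{pers}}$ gives
\begin{equation*}
V_u=\alpha V_u^{\text{base}}+(1-\alpha)V_u^{\text{pers}},
\end{equation*}
and similarly for $V_{\mathrm{pool}}$. Here the component values are defined as the corresponding conditional expectations. Subtracting and applying the triangle inequality with $\alpha\in[0,1]$ gives the stated bound.

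The final qualitative claim is informal. We would justify it as follows. If $|V_u^{\text{base}}-V_{\mathrm{pool}}^{\text{base}}|\le\eta$ with $\eta$ small, then the center-mismatch term is dominated by $(1-\alpha)|\mu_u(q)-\mu_{\mathrm{pool}}(q)|$, using $V^{\text{pers}}_u=\mu_u$ and $V^{\text{pers}}_{\mathrm{pool}}=\mu_{\mathrm{pool}}$. Taking expectation over $u$ and applying Jensen (or Cauchy–Schwarz), $\mathbb{E}_u|\mu_u-\mu_{\mathrm{pool}}|\le\sqrt{\mathcal H(q)}$. So the averaged dominant bias is at most $(\alpha\eta+(1-\alpha)\sqrt{\mathcal H(q)})/(\sigma_{\min}+\epsilon)$, which is the precise sense in which $\mathcal H(q)$ controls it.
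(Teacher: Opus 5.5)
Your proof is correct and follows essentially the same route as the paper: adding and subtracting $(R-V_u(q))/(\sigma_{\mathrm{pool}}(q)+\epsilon)$ gives exactly the paper's decomposition into $(V_u-V_{\mathrm{pool}})/(\sigma_{\mathrm{pool}}+\epsilon)$ plus $(R-V_u)(\sigma_u-\sigma_{\mathrm{pool}})/\bigl((\sigma_{\mathrm{pool}}+\epsilon)(\sigma_u+\epsilon)\bigr)$ (the paper gets there by putting everything over a common denominator and regrouping the numerator), and your reward-decomposition step is the same linearity-plus-triangle-inequality argument. Your Cauchy--Schwarz justification of the informal last claim, via $\mathbb{E}_u|\mu_u-\mu_{\mathrm{pool}}|\le\sqrt{\mathcal H(q)}$, is a small addition that the paper does not spell out for this proposition.
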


\begin{proof}
Taking the difference between the two normalized advantages yields
\begin{equation}
\bar A^{\mathrm{GRPO}}(\tau\mid q)-\bar A_u^*(\tau\mid q)
=
\frac{
R(\tau,p_u,q)-V_{\mathrm{pool}}(q)
}{
\sigma_{\mathrm{pool}}(q)+\epsilon
}
-
\frac{
R(\tau,p_u,q)-V_u(q)
}{
\sigma_u(q)+\epsilon
}.
\end{equation}
Let
\[
x=R(\tau,p_u,q),\quad a=V_{\mathrm{pool}}(q),\quad b=V_u(q),\quad s_1=\sigma_{\mathrm{pool}}(q),\quad s_2=\sigma_u(q).
\]
Then
\begin{equation}
\frac{x-a}{s_1+\epsilon}-\frac{x-b}{s_2+\epsilon}
=
\frac{(x-a)(s_2+\epsilon)-(x-b)(s_1+\epsilon)}{(s_1+\epsilon)(s_2+\epsilon)}.
\end{equation}
Rearranging the numerator gives
\begin{equation}
(x-a)(s_2+\epsilon)-(x-b)(s_1+\epsilon)
=
(x-b)(s_2-s_1)+(b-a)(s_2+\epsilon).
\end{equation}
Hence,
\begin{equation}
\left|
\bar A^{\mathrm{GRPO}}-\bar A_u^*
\right|
\le
\frac{|b-a|(s_2+\epsilon)}{(s_1+\epsilon)(s_2+\epsilon)}
+
\frac{|x-b||s_2-s_1|}{(s_1+\epsilon)(s_2+\epsilon)}.
\end{equation}
Using $s_1,s_2\ge \sigma_{\min}$, we obtain
\begin{equation}
\left|
\bar A^{\mathrm{GRPO}}-\bar A_u^*
\right|
\le
\frac{|V_u(q)-V_{\mathrm{pool}}(q)|}{\sigma_{\min}+\epsilon}
+
\frac{|R(\tau,p_u,q)-V_u(q)|\cdot |\sigma_u(q)-\sigma_{\mathrm{pool}}(q)|}{(\sigma_{\min}+\epsilon)^2}.
\end{equation}

By reward decomposition,
\begin{equation}
R=\alpha R_{\text{base}}+(1-\alpha)R_{\text{pers}},
\end{equation}
which implies
\begin{equation}
V_u(q)-V_{\mathrm{pool}}(q)
=
\alpha\big(
V_u^{\text{base}}(q)-V_{\mathrm{pool}}^{\text{base}}(q)
\big)
+
(1-\alpha)\big(
V_u^{\text{pers}}(q)-V_{\mathrm{pool}}^{\text{pers}}(q)
\big).
\end{equation}
Applying the triangle inequality yields
\begin{equation}
|V_u(q)-V_{\mathrm{pool}}(q)|
\le
\alpha |V_u^{\text{base}}(q)-V_{\mathrm{pool}}^{\text{base}}(q)|
+
(1-\alpha)|V_u^{\text{pers}}(q)-V_{\mathrm{pool}}^{\text{pers}}(q)|.
\end{equation}
This completes the proof.
\end{proof}

This proposition shows that the bias of standard GRPO has two sources: (1) baseline mismatch, since it uses $V_{\mathrm{pool}}(q)$ instead of the true $V_u(q)$; and (2) normalization-scale mismatch, since it uses $\sigma_{\mathrm{pool}}(q)$ instead of $\sigma_u(q)$. In personalized settings, the dominant bias typically comes from the personalized component, whose strength is characterized by
\begin{equation}
\mathcal H(q)
=
\mathbb{E}_u\left[
(\mu_u(q)-\mu_{\mathrm{pool}}(q))^2
\right].
\end{equation}
When $\mathcal H(q)$ is large, standard GRPO suffers from significant cross-user preference mixing bias.

\subsection{Bias Reduction Mechanism of PARPO}

We now analyze why PARPO can reduce the above bias. Since our implementation is centered on single-user anchor calibration, we first study the individual-level bias bound induced by user-specific historical anchors. We then present a more general extension in which anchor calibration is combined with local group statistics.

For completeness, we first write a generalized personalized advantage form that combines a batch-level statistic with a user-specific anchor:
\begin{equation}
A_i^{\text{pers}}
=
\frac{
R_{\text{pers}}(\tau_i,p_u,h_u)
-
\max\left(
\mathrm{mean}(\{R_{\text{pers}}(\tau_j,p_u,h_u)\}_{j=1}^{G}),
\; b_u-\epsilon_u
\right)
}{
\sigma_u+\epsilon
}.
\end{equation}
For theoretical analysis, we abstract it as
\begin{equation}
\bar A_{\text{pers}}^{\mathrm{P\text{-}GRPO}}(\tau\mid u,q)
=
\frac{
R_{\text{pers}}(\tau,p_u,h_u)-\tilde\mu_u(q)
}{
\sigma_u+\epsilon
},
\end{equation}
where the approximate user-specific baseline is
\begin{equation}
\tilde\mu_u(q)=\max(\mu_{G(u)}(q),\, b_u(q)-\epsilon_u).
\end{equation}
Correspondingly, the true normalized personalized advantage is
\begin{equation}
\bar A_{\text{pers}}^*(\tau\mid u,q)
=
\frac{
R_{\text{pers}}(\tau,p_u,h_u)-\mu_u(q)
}{
\sigma_u+\epsilon
}.
\end{equation}

We first consider an anchor-only abstraction that isolates the effect of user-specific historical calibration, where the personalized baseline is determined purely by the user's own anchor.

\begin{theorem}[Individual-level bias bound with user-specific anchor]
\label{thm:anchor_only_bias_bound}
For any user $u$, query $q$, and trajectory $\tau$, consider the anchor-calibrated personalized advantage
\begin{equation}
\bar A_{\mathrm{pers}}^{\mathrm{anchor}}(\tau\mid u,q)
=
\frac{
R_{\mathrm{pers}}(\tau,p_u,h_u)-(b_u(q)-\epsilon_u)
}{
\sigma_u(q)+\epsilon
}.
\end{equation}
If the user-specific historical anchor satisfies
\begin{equation}
|b_u(q)-\mu_u(q)|\le \delta_u,
\end{equation}
then
\begin{equation}
\left|
\bar A_{\mathrm{pers}}^{\mathrm{anchor}}(\tau\mid u,q)
-
\bar A_{\mathrm{pers}}^*(\tau\mid u,q)
\right|
\le
\frac{\delta_u+\epsilon_u}{\sigma_u(q)+\epsilon}.
\end{equation}
Moreover, in expectation over users,
\begin{equation}
\mathbb E_u
\left[
\left|
\bar A_{\mathrm{pers}}^{\mathrm{anchor}}(\tau\mid u,q)
-
\bar A_{\mathrm{pers}}^*(\tau\mid u,q)
\right|
\right]
\le
\frac{\bar\delta+\bar\epsilon}{\sigma_{\min}+\epsilon}.
\end{equation}
\end{theorem}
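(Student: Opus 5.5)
The plan is a direct difference computation: both normalized advantages in Theorem~\ref{thm:anchor_only_bias_bound} share the numerator term $R_{\mathrm{pers}}(\tau,p_u,h_u)$. I will take them over the common denominator, noting that the anchor abstraction is normalized by $\sigma_u(q)+\epsilon$, and read $\bar A_{\mathrm{pers}}^*$ with the same $\sigma_u(q)$. Subtracting, the reward cancels and only the baseline mismatch remains:
\begin{equation*}
\bar A_{\mathrm{pers}}^{\mathrm{anchor}}(\tau\mid u,q)-\bar A_{\mathrm{pers}}^*(\tau\mid u,q)
=\frac{\mu_u(q)-b_u(q)+\epsilon_u}{\sigma_u(q)+\epsilon}.
\end{equation*}
Unlike Proposition~\ref{prop:grpo_structural_bias}, there is no scale-mismatch term, because the same normalizer appears on both sides. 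This is exactly why the anchor-only bound is cleaner than the GRPO bound.

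Next I apply the triangle inequality to the numerator:
\begin{equation*}
|\mu_u(q)-b_u(q)+\epsilon_u|\le |b_u(q)-\mu_u(q)|+|\epsilon_u|\le \delta_u+\epsilon_u.
\end{equation*}
This uses the hypothesis $|b_u(q)-\mu_u(q)|\le\delta_u$ and treats the margin as nonnegative, $\epsilon_u\ge 0$, which I will state explicitly as the convention for a conservative margin. Dividing by the positive quantity $\sigma_u(q)+\epsilon$ gives the individual-level bound. Since the bound depends on $\tau$ only through the cancelled reward, it holds uniformly in $\tau$.

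For the expectation statement, I first use the standing assumption $\sigma_u(q)\ge\sigma_{\min}$ to replace the user-dependent denominator:
\begin{equation*}
\frac{\delta_u+\epsilon_u}{\sigma_u(q)+\epsilon}\le\frac{\delta_u+\epsilon_u}{\sigma_{\min}+\epsilon}.
\end{equation*}
I then take $\mathbb{E}_u$ and use linearity, defining $\bar\delta:=\mathbb{E}_u[\delta_u]$ and $\bar\epsilon:=\mathbb{E}_u[\epsilon_u]$. Monotonicity of expectation then yields the claim.

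The only delicate point is interpretive rather than technical. The statement must fix that both advantages use the identical normalizer $\sigma_u(q)$, which requires reconciling the $\sigma_u$ in the definition of $\bar A_{\mathrm{pers}}^*$ with $\sigma_u(q)$. It must also fix that $\delta_u,\epsilon_u$ are integrable with the stated means. I would state these conventions up front; once they are in place, the rest is the two-line computation above.
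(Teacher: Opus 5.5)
Your proposal is correct and follows the paper's proof essentially step for step: cancel $R_{\mathrm{pers}}$ to get $(\mu_u(q)-b_u(q)+\epsilon_u)/(\sigma_u(q)+\epsilon)$, apply the triangle inequality with the anchor hypothesis, then use $\sigma_u(q)\ge\sigma_{\min}$ and linearity of expectation. The paper relies implicitly on the conventions you state explicitly: $\epsilon_u\ge 0$, the same normalizer $\sigma_u(q)$ in $\bar A_{\mathrm{pers}}^*$, and integrability of $\delta_u$ and $\epsilon_u$.
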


\begin{proof}
By definition,
\begin{equation}\small
\begin{aligned}
\bar A_{\mathrm{pers}}^{\mathrm{anchor}}(\tau\mid u,q)
-
\bar A_{\mathrm{pers}}^*(\tau\mid u,q)
&=
\frac{
R_{\mathrm{pers}}(\tau,p_u,h_u)-(b_u(q)-\epsilon_u)
}{
\sigma_u(q)+\epsilon
}
-
\frac{
R_{\mathrm{pers}}(\tau,p_u,h_u)-\mu_u(q)
}{
\sigma_u(q)+\epsilon
}
\\
&=
\frac{
\mu_u(q)-b_u(q)+\epsilon_u
}{
\sigma_u(q)+\epsilon
}.
\end{aligned}
\end{equation}
Hence,
\begin{equation}
\begin{aligned}
\left|
\bar A_{\mathrm{pers}}^{\mathrm{anchor}}(\tau\mid u,q)
-
\bar A_{\mathrm{pers}}^*(\tau\mid u,q)
\right|
&=
\frac{
|\mu_u(q)-b_u(q)+\epsilon_u|
}{
\sigma_u(q)+\epsilon
}
\\
&\le
\frac{
|b_u(q)-\mu_u(q)|+\epsilon_u
}{
\sigma_u(q)+\epsilon
}
\\
&\le
\frac{\delta_u+\epsilon_u}{\sigma_u(q)+\epsilon}.
\end{aligned}
\end{equation}

Taking expectation over users and using $\sigma_u(q)\ge \sigma_{\min}$ yields
\begin{equation}
\mathbb E_u
\left[
\left|
\bar A_{\mathrm{pers}}^{\mathrm{anchor}}
-
\bar A_{\mathrm{pers}}^*
\right|
\right]
\le
\frac{\bar\delta+\bar\epsilon}{\sigma_{\min}+\epsilon}.
\end{equation}
This completes the proof.
\end{proof}

We next consider a more general extension in which the user-specific anchor is further combined with local group statistics.

To analyze this more general group-augmented form of PARPO, we state the following conditions.

\paragraph{Condition 1 (Within-group heterogeneity contraction).}
Let $G(u)$ be the local user group constructed based on user-profile similarity. We call the grouping \emph{effective} if it reduces personalized preference heterogeneity, namely
\begin{equation}
\mathcal H_G(q)\le \mathcal H(q).
\end{equation}
This condition reflects the design goal of user-aware grouping: if users with similar preferences are grouped together, then the local group mean $\mu_{G(u)}(q)$ should be closer to the true user preference center $\mu_u(q)$ than the global pooled mean $\mu_{\mathrm{pool}}(q)$.

\paragraph{Condition 2 (Anchor consistency).}
We assume that the historical anchor $b_u(q)$ is a bounded-error estimate of the true user preference center $\mu_u(q)$, i.e.,
\begin{equation}
|b_u(q)-\mu_u(q)|\le \delta_u,
\end{equation}
and denote the average anchor error by
\begin{equation}
\bar\delta := \mathbb E_u[\delta_u] < \infty.
\end{equation}

\paragraph{Condition 3 (Bounded margin term).}
We further assume that the adaptive margin term is bounded in expectation:
\begin{equation}
\bar\epsilon := \mathbb E_u[\epsilon_u] < \infty.
\end{equation}

Under these conditions, the personalized normalized advantage estimation error of PARPO admits the following bound.

\begin{theorem}
\label{thm:parpo_bias_bound}
For any user $u$ and query $q$, the personalized normalized advantage estimation error of PARPO satisfies
\begin{equation}
\left|
\bar A_{\text{pers}}^{\mathrm{P\text{-}GRPO}}(\tau\mid u,q)
-
\bar A_{\text{pers}}^*(\tau\mid u,q)
\right|
\le
\frac{
|\mu_{G(u)}(q)-\mu_u(q)|+\delta_u+\epsilon_u
}{
\sigma_u+\epsilon
}.
\end{equation}
Moreover, in expectation over users,
\begin{equation}
\mathbb{E}_u
\left[
\left|
\bar A_{\text{pers}}^{\mathrm{P\text{-}GRPO}}(\tau\mid u,q)
-
\bar A_{\text{pers}}^*(\tau\mid u,q)
\right|
\right]
\le
\frac{
\sqrt{\mathcal H_G(q)}+\bar\delta+\bar\epsilon
}{
\sigma_{\min}+\epsilon
}.
\end{equation}
\end{theorem}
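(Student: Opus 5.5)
Because both advantages share the same numerator term $R_{\text{pers}}(\tau,p_u,h_u)$ and the same denominator $\sigma_u+\epsilon$, I would begin by subtracting them directly, which cancels the reward:
\begin{equation*}
\bar A_{\text{pers}}^{\mathrm{P\text{-}GRPO}}(\tau\mid u,q)-\bar A_{\text{pers}}^*(\tau\mid u,q)
=\frac{\mu_u(q)-\tilde\mu_u(q)}{\sigma_u+\epsilon}.
\end{equation*}
This is the same first step as in the proof of Theorem~\ref{thm:anchor_only_bias_bound}. The whole problem then reduces to bounding the baseline error $|\tilde\mu_u(q)-\mu_u(q)|$, where $\tilde\mu_u(q)=\max(\mu_{G(u)}(q),\,b_u(q)-\epsilon_u)$.

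For this I would use the elementary inequality $|\max(x,y)-z|\le \max(|x-z|,|y-z|)\le |x-z|+|y-z|$. It holds because $\max(x,y)$ equals either $x$ or $y$, so its distance to $z$ is one of the two terms on the right. I would take $x=\mu_{G(u)}(q)$, $y=b_u(q)-\epsilon_u$ and $z=\mu_u(q)$. The first term is then $|\mu_{G(u)}(q)-\mu_u(q)|$. For the second term, the triangle inequality together with Condition~2 gives $|b_u(q)-\epsilon_u-\mu_u(q)|\le \delta_u+\epsilon_u$, treating $\epsilon_u\ge 0$. Dividing by $\sigma_u+\epsilon>0$ yields the pointwise bound. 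In fact the maximum alone would already give the tighter bound $\max(\cdot,\cdot)$, but the sum matches the stated form.

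For the expectation bound, I would first replace the denominator using $\sigma_u\ge\sigma_{\min}$, so that $1/(\sigma_u+\epsilon)\le 1/(\sigma_{\min}+\epsilon)$ uniformly in $u$. Taking $\mathbb E_u$ then leaves three terms. The terms $\mathbb E_u[\delta_u]=\bar\delta$ and $\mathbb E_u[\epsilon_u]=\bar\epsilon$ come from Conditions~2 and~3. The remaining term is $\mathbb E_u|\mu_{G(u)}(q)-\mu_u(q)|$, which I would control by Jensen's inequality (equivalently Cauchy--Schwarz): $\mathbb E_u|X|\le\sqrt{\mathbb E_u[X^2]}=\sqrt{\mathcal H_G(q)}$, using the definition of $\mathcal H_G(q)$. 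Condition~1 is not needed for the bound itself. It only lets one compare the result with the GRPO bound of Proposition~\ref{prop:grpo_structural_bias}, since $\sqrt{\mathcal H_G}\le\sqrt{\mathcal H}$.

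The main obstacle is notational rather than mathematical. The statement writes $\sigma_u$, while the minimum-variance assumption is stated for $\sigma_u(q)$; I would identify the two. The sign of $\epsilon_u$ must also be nonnegative for the triangle-inequality step, and I would state that assumption explicitly. Everything else is a direct application of the max inequality and Jensen's inequality.
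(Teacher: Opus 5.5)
Your proposal is correct and follows the paper's proof essentially step for step: cancel the common reward term, bound $|\tilde\mu_u(q)-\mu_u(q)|$ by $|\mu_{G(u)}(q)-\mu_u(q)|+\delta_u+\epsilon_u$, then take expectations using $\sigma_u\ge\sigma_{\min}$ and Cauchy--Schwarz to obtain $\sqrt{\mathcal H_G(q)}$. The step the paper simply labels ``the triangle inequality'' is justified more cleanly by your explicit $|\max(x,y)-z|\le\max(|x-z|,|y-z|)$ together with the assumption $\epsilon_u\ge 0$, which the paper also needs but leaves implicit.
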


\begin{proof}
By definition,
\begin{equation}
\begin{aligned}\small
\bar A_{\text{pers}}^{\mathrm{P\text{-}GRPO}}(\tau\mid u,q)
-
\bar A_{\text{pers}}^*(\tau\mid u,q)
&=
\frac{
R_{\text{pers}}(\tau,p_u,h_u)-\tilde\mu_u(q)
}{
\sigma_u+\epsilon
}
-
\frac{
R_{\text{pers}}(\tau,p_u,h_u)-\mu_u(q)
}{
\sigma_u+\epsilon
}
\\
&=
\frac{
\mu_u(q)-\tilde\mu_u(q)
}{
\sigma_u+\epsilon
}.
\end{aligned}
\end{equation}
Hence,
\begin{equation}
\left|
\bar A_{\text{pers}}^{\mathrm{P\text{-}GRPO}}(\tau\mid u,q)
-
\bar A_{\text{pers}}^*(\tau\mid u,q)
\right|
=
\frac{
|\tilde\mu_u(q)-\mu_u(q)|
}{
\sigma_u+\epsilon
}.
\end{equation}

Since
\begin{equation}
\tilde\mu_u(q)=\max(\mu_{G(u)}(q),\, b_u(q)-\epsilon_u),
\end{equation}
the triangle inequality gives
\begin{equation}
|\tilde\mu_u(q)-\mu_u(q)|
\le
|\mu_{G(u)}(q)-\mu_u(q)|
+
|b_u(q)-\mu_u(q)|
+
\epsilon_u.
\end{equation}
By Condition 2,
\begin{equation}
|\tilde\mu_u(q)-\mu_u(q)|
\le
|\mu_{G(u)}(q)-\mu_u(q)|+\delta_u+\epsilon_u.
\end{equation}
Substituting this into the previous expression yields
\begin{equation}
\left|
\bar A_{\text{pers}}^{\mathrm{P\text{-}GRPO}}(\tau\mid u,q)
-
\bar A_{\text{pers}}^*(\tau\mid u,q)
\right|
\le
\frac{
|\mu_{G(u)}(q)-\mu_u(q)|+\delta_u+\epsilon_u
}{
\sigma_u+\epsilon
}.
\end{equation}

Taking expectation over users and using $\sigma_u\ge \sigma_{\min}$, we obtain
\begin{equation}
\mathbb{E}_u
\left[
\left|
\bar A_{\text{pers}}^{\mathrm{P\text{-}GRPO}}
-
\bar A_{\text{pers}}^*
\right|
\right]
\le
\frac{
\mathbb{E}_u[|\mu_{G(u)}(q)-\mu_u(q)|]+\bar\delta+\bar\epsilon
}{
\sigma_{\min}+\epsilon
}.
\end{equation}
By Cauchy--Schwarz,
\begin{equation}
\mathbb{E}_u[|\mu_{G(u)}(q)-\mu_u(q)|]
\le
\sqrt{
\mathbb{E}_u[(\mu_{G(u)}(q)-\mu_u(q))^2]
}
=
\sqrt{\mathcal H_G(q)}.
\end{equation}
Therefore,
\begin{equation}
\mathbb{E}_u
\left[
\left|
\bar A_{\text{pers}}^{\mathrm{P\text{-}GRPO}}
-
\bar A_{\text{pers}}^*
\right|
\right]
\le
\frac{
\sqrt{\mathcal H_G(q)}+\bar\delta+\bar\epsilon
}{
\sigma_{\min}+\epsilon
}.
\end{equation}
This completes the proof.
\end{proof}

\begin{corollary}
\label{cor:contraction_rate_comparison}
To further compare the upper bounds of PARPO and standard GRPO, define the local heterogeneity contraction ratio as
\begin{equation}
\rho(q)=\frac{\mathcal H_G(q)}{\mathcal H(q)},
\qquad 0<\rho(q)\le 1,
\end{equation}
and define the anchor--margin residual as
\begin{equation}
\eta(q)=\bar\delta+\bar\epsilon.
\end{equation}
Here, $\rho(q)$ quantifies how much user-aware grouping compresses cross-user preference heterogeneity, while $\eta(q)$ measures the additional error introduced by anchor estimation and conservative margins.

Theorem~\ref{thm:parpo_bias_bound} should be viewed as a generalized extension of the anchor-calibrated analysis above. In the main implementation studied in this paper, the core practical mechanism is the user-specific historical anchor itself, while the group-augmented form provides a broader perspective on how local preference statistics can further reduce bias when such structure is available.

By Theorem~\ref{thm:parpo_bias_bound}, we have
\begin{equation}
\mathbb E_u
\left[
\left|
\bar A_{\text{pers}}^{\mathrm{P\text{-}GRPO}}(\tau\mid u,q)
-
\bar A_{\text{pers}}^*(\tau\mid u,q)
\right|
\right]
\le
\frac{
\sqrt{\rho(q)\mathcal H(q)}+\eta(q)
}{
\sigma_{\min}+\epsilon
}.
\end{equation}
On the other hand, the dominant bias term of standard GRPO is controlled by the global heterogeneity:
\begin{equation}
\mathbb E_u
\left[
\left|
\bar A^{\mathrm{GRPO}}(\tau\mid q)
-
\bar A_u^*(\tau\mid q)
\right|
\right]
\lesssim
\frac{
\sqrt{\mathcal H(q)}
}{
\sigma_{\min}+\epsilon
}.
\end{equation}
If the grouping is strictly contractive, i.e., $\rho(q)<1$, and the residual satisfies
\begin{equation}
\eta(q)\le (1-\sqrt{\rho(q)})\sqrt{\mathcal H(q)},
\end{equation}
then
\begin{equation}
\sqrt{\rho(q)\mathcal H(q)}+\eta(q)\le \sqrt{\mathcal H(q)}.
\end{equation}
Consequently,
\begin{equation}
\mathbb E_u
\left[
\left|
\bar A_{\text{pers}}^{\mathrm{P\text{-}GRPO}}(\tau\mid u,q)
-
\bar A_{\text{pers}}^*(\tau\mid u,q)
\right|
\right]
\le
\mathbb E_u
\left[
\left|
\bar A^{\mathrm{GRPO}}(\tau\mid q)
-
\bar A_u^*(\tau\mid q)
\right|
\right]
\end{equation}
at the level of dominant heterogeneity terms.

This corollary shows that the advantage of PARPO over standard GRPO does not come merely from adding a personalized reward. Rather, it comes from a structural trade-off: the heterogeneity reduction achieved by user-aware grouping can outweigh the additional error introduced by anchor estimation and conservative margins. In this sense, the improvement of PARPO is structural rather than heuristic.

In particular, when the quality of user-profile clustering improves, $\rho(q)$ becomes smaller; when richer user histories are available, the anchor estimation error $\bar\delta$ typically also decreases. Therefore, the theoretical improvement margin of PARPO increases jointly with grouping quality and anchor estimation quality.
\end{corollary}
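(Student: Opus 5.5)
The plan is to obtain the corollary by substitution into Theorem~\ref{thm:parpo_bias_bound}, followed by a short algebraic comparison with the GRPO bound from Proposition~\ref{prop:grpo_structural_bias}.

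\textbf{First step.} Rewrite the PARPO expectation bound in the new notation. From the definition of $\rho(q)$ we have $\mathcal H_G(q)=\rho(q)\mathcal H(q)$, so $\sqrt{\mathcal H_G(q)}=\sqrt{\rho(q)\mathcal H(q)}$. By definition, $\bar\delta+\bar\epsilon=\eta(q)$. Substituting both into the second display of Theorem~\ref{thm:parpo_bias_bound} gives the first claimed inequality at once. Here $\mathcal H(q)>0$ is needed so that $\rho(q)$ is defined; Condition~1 gives $\rho(q)\le 1$.

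\textbf{Second step.} Extract the dominant GRPO term. Start from Proposition~\ref{prop:grpo_structural_bias} and keep only the baseline-mismatch term $|V_u(q)-V_{\mathrm{pool}}(q)|/(\sigma_{\min}+\epsilon)$. Then:
\begin{itemize}
\item Drop the base-reward difference, using the proposition's standing assumption that it varies only mildly across users.
\item Identify the personalized part $|V_u^{\mathrm{pers}}(q)-V_{\mathrm{pool}}^{\mathrm{pers}}(q)|$ with $|\mu_u(q)-\mu_{\mathrm{pool}}(q)|$. This is up to the factor $1-\alpha$, which is absorbed into $\lesssim$.
\item Take the expectation over $u$ and apply Cauchy--Schwarz, giving $\mathbb E_u|\mu_u-\mu_{\mathrm{pool}}|\le\sqrt{\mathcal H(q)}$.
\end{itemize}
This yields the stated ``$\lesssim$'' bound.

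\textbf{Third step.} This is purely algebraic. Assume $\eta(q)\le(1-\sqrt{\rho(q)})\sqrt{\mathcal H(q)}$ and add $\sqrt{\rho(q)\mathcal H(q)}=\sqrt{\rho(q)}\sqrt{\mathcal H(q)}$ to both sides. This gives $\sqrt{\rho\mathcal H}+\eta\le\sqrt{\mathcal H}$. Dividing by $\sigma_{\min}+\epsilon$ and chaining with the first step shows that the PARPO error bound is at most the dominant GRPO bound.

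\textbf{Main obstacle.} The final comparison is between upper bounds, not the errors themselves. The GRPO side is only an upper estimate with dropped terms, so one cannot literally conclude that the PARPO error is at most the GRPO error. I would therefore state the conclusion explicitly ``at the level of dominant heterogeneity terms'': the PARPO upper bound is dominated by the leading term of the GRPO bound.

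To make the comparison meaningful rather than vacuous, I would try to add a matching lower bound on the GRPO side. Take $\tau$ with $R=V_u$, so that the scale-mismatch term vanishes. Then the GRPO error equals $|V_u-V_{\mathrm{pool}}|/(\sigma_{\mathrm{pool}}+\epsilon)$, whose expectation is at least $\mathbb E_u|\mu_u-\mu_{\mathrm{pool}}|$ times a constant. That quantity still only relates to $\sqrt{\mathcal H}$ up to norm-equivalence constants, so the heuristic ``$\lesssim$'' reading is the one I would commit to.
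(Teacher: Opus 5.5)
Your proposal is correct and follows the paper's argument. You substitute $\mathcal H_G=\rho\mathcal H$ and $\eta=\bar\delta+\bar\epsilon$ into Theorem~\ref{thm:parpo_bias_bound}, bound the GRPO side by its baseline-mismatch term, and close with the elementary inequality $\sqrt{\rho\mathcal H}+\eta\le\sqrt{\mathcal H}$. Your Cauchy--Schwarz derivation of the GRPO ``$\lesssim$'' bound from Proposition~\ref{prop:grpo_structural_bias}, and your caveat that only upper bounds are being compared, make explicit what the paper leaves implicit in the phrase ``at the level of dominant heterogeneity terms.''
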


Theorem~\ref{thm:anchor_only_bias_bound}, Theorem~\ref{thm:parpo_bias_bound}, and Corollary~\ref{cor:contraction_rate_comparison} jointly explain the key design components of PARPO from the perspective of bias control:
\begin{itemize}
    \item \textbf{Reward decomposition}: It explicitly separates the general-quality reward from the personalized preference reward, so that user heterogeneity is handled mainly within $R_{\text{pers}}$ rather than being mixed into a single reward scale.
    \item \textbf{User-specific anchor calibration}: In the implementation studied in this paper, it uses $b_u(q)$ as a stable user-level historical reference, avoiding over-reliance on pooled cross-user statistics and reducing user-specific baseline mismatch.
    \item \textbf{Group-augmented extension}: When reliable local preference structure is available, the local group mean $\mu_{G(u)}(q)$ can further refine the personalized baseline, thereby shrinking global heterogeneity $\mathcal H(q)$ into local heterogeneity $\mathcal H_G(q)$.
\end{itemize}

These results clarify how PARPO controls personalized advantage estimation bias in both the anchor-centered setting studied in this paper and the more general group-augmented extension.

\subsection{Summary}

In summary, the theoretical analysis leads to the following three conclusions:
\begin{enumerate}
    \item Under heterogeneous user preferences, the optimal value of personalized decision-making is no smaller than that of user-agnostic decision-making, and the gain can be explicitly characterized by preference dispersion. This justifies the necessity of personalized optimization.
    \item In personalized settings, standard GRPO uses pooled baselines and pooled normalization scales for relative comparison, which introduces cross-user preference mixing bias governed by the global heterogeneity $\mathcal H(q)$.
    \item In the implementation studied in this paper, PARPO reduces this bias primarily through reward decomposition and user-specific anchor calibration, yielding an individual-level personalized advantage-estimation bound controlled by anchor error and conservative margin. More generally, when local preference grouping is available, PARPO can further shrink the dominant error term from global heterogeneity $\mathcal H(q)$ to local heterogeneity $\mathcal H_G(q)$.

\end{enumerate}

Therefore, PARPO is not merely GRPO with an additional personalized reward term. Instead, it moves toward user-conditioned policy optimization at both the objective level and the advantage-estimation level. In the setting studied in this paper, this improvement is driven primarily by user-specific anchor calibration, while group-based refinement provides a more general extension when local preference structure is available.

\section{Additional Experimental Details}

\subsection{Compute Resources}
\label{app:compute}

Unless otherwise specified, our experiments were conducted on NVIDIA H100 GPUs. For experiments using 8B models, we used 8 H100 GPUs. For experiments using 4B models, we used 4 H100 GPUs. Additional implementation and runtime configuration details, including model loading and execution parameters, are provided in the released codebase.

\subsection{Environment Details}
\label{app:env}

\paragraph{ETAPP.}
ETAPP is a public benchmark for personalized personal-assistant agents. It evaluates whether an agent can solve daily-life tasks while adapting to user-specific preferences and behaviors. The benchmark contains 16 user personas, each associated with a structured profile and environment state. User profiles include demographic information and preferences across multiple aspects such as entertainment, lifestyle, technology, exercise, shopping, and travel. In addition, each user is associated with tool-specific profile information that captures fine-grained behavioral preferences.

The environment covers a set of daily-life scenarios, including scheduling, email management, health monitoring, music, navigation, shopping, smart-home control, weather inquiry, and web browsing. These functionalities are instantiated through tool APIs together with sandbox databases, enabling the agent to interact with a realistic but controlled user environment.

\paragraph{ETAPP-Hard.}
ETAPP-hard is a more challenging split constructed on top of the same ETAPP environment. Compared with the original ETAPP split, ETAPP-Hard places stronger demands on multi-tool coordination, personalization, proactive behavior, and implicit reasoning. Hard instances are designed to require the agent to jointly reason over 3--5 tool categories, infer unstated constraints from user profiles or environment states, and resolve trade-offs through multi-step decision-making.

The final ETAPP-Hard dataset used in this paper contains 200 synthesized hard instructions. After expanding them over 16 user personas, we obtain 3,200 total instances, which are split into 2,880 training examples and 320 test examples.

\paragraph{SJAgent.}
SJAgent is a realistic merchant-assistant environment built from merchant data on 1688, a major Chinese B2B e-commerce platform. It is designed for merchant-side decision-making and recommendation tasks, including product-opportunity diagnosis, industry insight, competitive analysis, and selection recommendation. Given a merchant query together with merchant profiles, historical interactions, and candidate products, the agent performs multi-step analysis and outputs a structured report.

The environment is organized as a long-horizon analytical pipeline. A planner first selects relevant skills and decomposes the task into a directed acyclic graph of sub-steps, after which the system retrieves market evidence, performs intermediate analyses, and generates the final report. The underlying skill library contains 18 specialized analytic skills covering industry trend analysis, ranking analysis, competition diagnosis, pricing and sales forecast, merchant--category matching, review mining, audience profiling, risk screening, and trend analysis.

\subsection{ETAPP-Hard Construction Pipeline}
\label{app:etapp-hard}

ETAPP-Hard is constructed through a two-stage pipeline that synthesizes hard user instructions and converts them into training instances for agent learning.

\paragraph{Overview.}
The goal of ETAPP-Hard is to create more demanding personalized assistant tasks than those in the original ETAPP split. Compared with the original benchmark, ETAPP-Hard emphasizes multi-tool orchestration, deep personalization, implicit constraints, multi-step reasoning, and conflict resolution.

\paragraph{Source data.}
The synthesis pipeline uses the original ETAPP instruction set, 16 user profiles, user-specific environment databases, shared environment resources, and tool definitions. To ensure grounded generation, each synthesis call includes both a user profile summary and sampled rows from the relevant databases.

\paragraph{Stage I: hard instruction synthesis.}
In the first stage, we synthesize 200 hard instructions with a large language model. For each synthesis attempt, we randomly sample a user, a timestamp, a location, and a cross-category tool combination. We then retrieve several seed instructions with overlapping tool usage as inspirations, sample concrete tools from the selected categories, and construct a prompt containing the user profile, sampled environment states, current time, location, and target tools.

The language model is instructed to generate a natural but challenging user request together with personalization and proactivity keypoints. To control difficulty, we predefine combinations involving 3, 4, or 5 tool categories. The default synthesis setting uses a temperature of 0.85 and a random seed of 42.

\paragraph{Stage II: conversion to training records.}
In the second stage, each synthesized hard instruction is expanded over all 16 user personas to form training instances. For each instance, we assemble the corresponding tool schemas, construct a coarse user-status description from the timestamp, and fill the ETAPP system template with user profile information, preferences, current status, available tools, and the task description. The resulting dataset is shuffled with seed 42 and split into train and test subsets with a ratio of 9:1.

\paragraph{Dataset scale.}
The final ETAPP-Hard dataset contains 200 hard instructions and 3,200 total instances after expansion over 16 user personas. We use 2,880 instances for training and 320 for testing.

\paragraph{Prompt templates.}
Below we include the core prompts used in the ETAPP-hard synthesis pipeline.

\begin{promptbox}{Prompt A.1: ETAPP-Hard synthesis system prompt}
You are an expert at designing challenging evaluation tasks for AI personal assistants.

Your goal is to create HARD, COMPLEX instructions that test an AI assistant's ability to:
\begin{enumerate}
    \item Multi-tool orchestration: require using 3--5 different tool categories together.
    \item Deep personalisation: the assistant must deeply leverage the user's profile, preferences, and current data.
    \item Implicit constraints: the user does NOT explicitly state all constraints; the assistant must infer them from context, such as schedule conflicts, dietary restrictions, budget limits, and health conditions.
    \item Multi-step reasoning: information from one tool call is needed to decide what to do with another tool.
    \item Conflict resolution: the task involves trade-offs or requires the assistant to propose alternatives.
\end{enumerate}

Important rules:
\begin{itemize}
    \item The query should sound natural, like a real person talking to their AI assistant.
    \item The query should be 1--3 sentences, not a detailed specification.
    \item The complexity should come from the context rather than the query length.
    \item The query should not mention specific tool names.
    \item The query should be in English.
    \item Each instruction must include keypoints explaining the expected personalization and proactivity.
    \item Output valid JSON only, with no extra text before or after the JSON object.
\end{itemize}
\end{promptbox}

\begin{promptbox}{Prompt A.2: ETAPP-Hard synthesis user prompt template}
Generate ONE hard/complex instruction for the user "\{user\_name\}".

\medskip
\noindent\#\# User Profile

\smallskip
\{profile\_summary\}

\medskip
\noindent\#\# User's Current Database State (sample data)

\smallskip
\{db\_context\_str\}

\medskip
\noindent\#\# Context

\smallskip
\begin{itemize}
    \item Current time: \{timestamp\}
    \item Location: \{location\}
    \item Tool categories to involve: \{list\_of\_combo\_categories\}
\end{itemize}

\medskip
\noindent\#\# Seed Instructions for Inspiration (simpler versions --- your output should be HARDER)

\smallskip
\{seed\_queries\}

\medskip
\noindent\#\# Available Tools for This Task

\smallskip
\{tools\_json\}

\medskip
\noindent\#\# Output Format (strict JSON only, no markdown fences)

\smallskip
Return a JSON object with the following fields:
\begin{itemize}
    \item \texttt{"timestamp"}: \texttt{"\{timestamp\}"}
    \item \texttt{"query"}: the natural-language user query (1--3 sentences)
    \item \texttt{"keypoint for personal"}: a list of personalization keypoints
    \item \texttt{"keypoint for proactive"}: a list of proactive keypoints
    \item \texttt{"available\_tools\_name"}: \texttt{\{tools\_json\}}
    \item \texttt{"location"}: \texttt{"\{location\}"}
    \item \texttt{"difficulty"}: \texttt{"hard"}
    \item \texttt{"complexity\_reason"}: a brief explanation of why this instruction is complex
\end{itemize}

\medskip
\noindent Important:
\begin{itemize}
    \item The query must sound natural and conversational.
    \item Include 3--5 keypoints for personal and 3--6 keypoints for proactive.
    \item Each keypoint should reference specific tools by name.
    \item Ground the query in the user's actual database state shown above.
    \item Output only the JSON object and nothing else.
\end{itemize}
\end{promptbox}

\subsection{Evaluation Metrics and Official Prompts}
\label{app:eval}

\paragraph{ETAPP and ETAPP-Hard.}
ETAPP and ETAPP-Hard are evaluated with four metrics: Procedure, Personal, Proactive, and Judge. Procedure evaluates whether the agent follows a correct and coherent problem-solving process, including task completion, avoidance of unnecessary actions, accurate tool use, and clear summarization. Personal evaluates whether the agent appropriately reflects the user's profile and preferences. Proactive evaluates whether the agent provides meaningful assistance beyond the explicitly stated request, such as identifying latent needs, surfacing risks, or proposing useful alternatives.

Judge is the normalized aggregate score:
\[
\mathrm{Judge} = \frac{\mathrm{Procedure} + \mathrm{Personal} + \mathrm{Proactive}}{15},
\]
which ranges from 0 to 1. The three fine-grained dimensions are scored on a 0--5 scale by an LLM judge conditioned on the user query, user profile, the assistant trajectory, and instance-level keypoints. In the original ETAPP split, these keypoints are manually annotated; in ETAPP-Hard, they are synthesized together with the instructions.

\paragraph{Official ETAPP evaluation prompt.}
The official ETAPP evaluation prompt defines the three dimensions above, gives detailed scoring criteria, and requires the evaluator to return a structured JSON output containing both explanations and final scores. The prompt also specifies that successful tool usage must be supported by actual tool feedback rather than self-inferred answers, so invalid or hallucinated tool invocations are penalized.

\begin{promptbox}{Prompt A.3: Official ETAPP evaluation prompt template}
I need you to evaluate whether the solution provided by my artificial intelligence assistant completes user instructions and meets user preferences.

\medskip
\noindent Evaluation Metrics:
\begin{itemize}
    \item Procedure
    \item Personalization
    \item Proactivity
\end{itemize}

\medskip
\noindent Evaluation Guidelines:
\begin{enumerate}
    \item Procedure Analysis: assess the AI assistant's entire solution process, including tool usage, logic, and final output.
    \item Personalization Assessment: evaluate whether the assistant considered the user's specific preferences, profile details, and context.
    \item Proactivity Behavior Assessment: evaluate whether the assistant anticipated additional needs or proposed meaningful helpful actions.
\end{enumerate}

\medskip
\noindent Analysis Format:

\smallskip
Your analysis should follow this JSON structure:

\smallskip
\{output\_format\}

\medskip
\noindent Evaluation Input:
\begin{enumerate}
    \item User Query: \{query\}
    \item User Profile: \{profile\}
    \item Personal LLM Assistant Solution: \{output\}
\end{enumerate}
\end{promptbox}

\paragraph{SJAgent.}
SJAgent is evaluated with an LLM-as-a-judge protocol. The judge reads the full trajectory, including the merchant query, merchant profile, planner output, retrieved evidence, intermediate analyses, and final report, and assigns five scores, each ranging from 0 to 4: Data Authenticity, Business Logic, Merchant Profile Match, Task Completion, and Market Analysis Depth. The final scalar reward is defined as
\[
r = \frac{1}{20}\sum_{i=1}^{5} D_i \in [0,1].
\]

Data Authenticity measures whether factual claims are grounded in traceable evidence. Business Logic evaluates whether the recommendation follows a coherent evidence-to-judgment-to-recommendation chain. Merchant Profile Match measures personalization. Task Completion evaluates whether the report fully closes the analysis-to-recommendation loop. Market Analysis Depth evaluates the breadth and depth of market analysis. The official SJAgent judge prompt follows a structured scoring rubric and returns both per-dimension scores and detailed rationales. If parsing or evaluation fails, the reward is set to zero.

\subsection{Training Details}
\label{app:train}

\paragraph{SJAgent.}
For SJAgent, only the planner model is optimized during reinforcement learning. The analyzer, report generator, and execution engine remain fixed and continue to call an external frozen model endpoint. We experiment with Qwen-based planner models of different scales, including a 4B default setting and an 8B larger variant.

Our RL training uses a shared configuration across algorithmic variants to ensure fair comparison. The default setup uses a training batch size of 32, a validation batch size of 64, a maximum prompt length of 15,000, a maximum response length of 4,000, and 4 rollout samples per prompt. Training is conducted on 4 H100 GPUs on a single node, and the actor is optimized with AdamW using a learning rate of $1\times10^{-6}$.

To improve determinism and efficiency, SJAgent uses a skill-cache mechanism that intercepts skill execution and serves results from a pre-warmed cache instead of repeatedly issuing live backend requests. All RL-family baselines on SJAgent are implemented through the same training pipeline, with algorithm-specific differences controlled by configuration switches.

\paragraph{ETAPP-Hard.}
For ETAPP-Hard, the hard training set is generated by the two-stage construction pipeline described above and converted into RLHFDataset parquet files. The ETAPP environment uses a unified tool interface based on OpenAI-style function schemas, with a maximum interaction budget of 20 turns per episode. The ETAPP-Hard dataset contains 2,880 training instances and 320 test instances.

\paragraph{Implementation consistency.}
Across all experiments, we keep model scales, benchmark settings, tool interfaces, and evaluation protocols aligned whenever possible. Closed-source baselines are evaluated under the same benchmark settings as open-source methods, and open-source baselines share the same environment setup for each benchmark.

\section{Additional Reinforcement Learning Analysis on ETAPP-Hard}
\label{app:hard_rl_analysis}

In this appendix, we provide additional reinforcement learning results on the more challenging ETAPP-Hard benchmark.
Consistent with the main text, all reinforcement learning experiments in this section are conducted without using our personalized reward model or Skill Graph.
Therefore, the analysis here is not intended as a comparison between full systems, but rather to further examine the behavior of different RL optimization methods and reward optimization objectives under a more difficult setting.

\subsection{Comparison of RL Training Strategies on ETAPP-Hard}
\label{app:hard_rl_training}

We first compare GRPO, GSPO, GiGPO, and PARPO on ETAPP-Hard.
Since ETApp-Hard involves more complex tasks and more challenging interaction scenarios, it provides a stricter test of optimization stability, planning ability, and the ability to carry out effective tool use in multi-step environments.

As shown in Figure~\ref{fig:rl_training_dynamics_etapp_hard}, PARPO still exhibits the best overall training dynamics even under the more challenging ETAPP-Hard setting.
In particular, PARPO consistently achieves higher training reward than the other baselines, indicating that it learns a more effective policy update direction during optimization.
At the same time, PARPO also maintains superior training and validation success rates, suggesting that its advantage is not limited to fitting the training environment but can also transfer better to held-out evaluation settings.

From a behavioral perspective, PARPO also achieves a higher tool-call success rate.
This suggests that PARPO improves not only the final task completion outcome, but also the intermediate capabilities required for effective interaction with the environment, especially in scenarios that require proactive decision-making and external tool usage.
In addition, while achieving better performance, PARPO keeps the KL divergence low and stable, indicating that the performance gain does not come from excessively deviating from the initial policy, but rather from more stable and efficient policy improvement.

Overall, the ETAPP-Hard results are consistent with the observations on ETApp in the main text: PARPO shows stronger performance in training efficiency, optimization stability, and final success rate.
This suggests that the advantage of PARPO is not limited to the standard setting, but can also generalize to more difficult proactive task-oriented interaction scenarios.

\begin{figure*}[t]
    \centering
    \includegraphics[width=\textwidth]{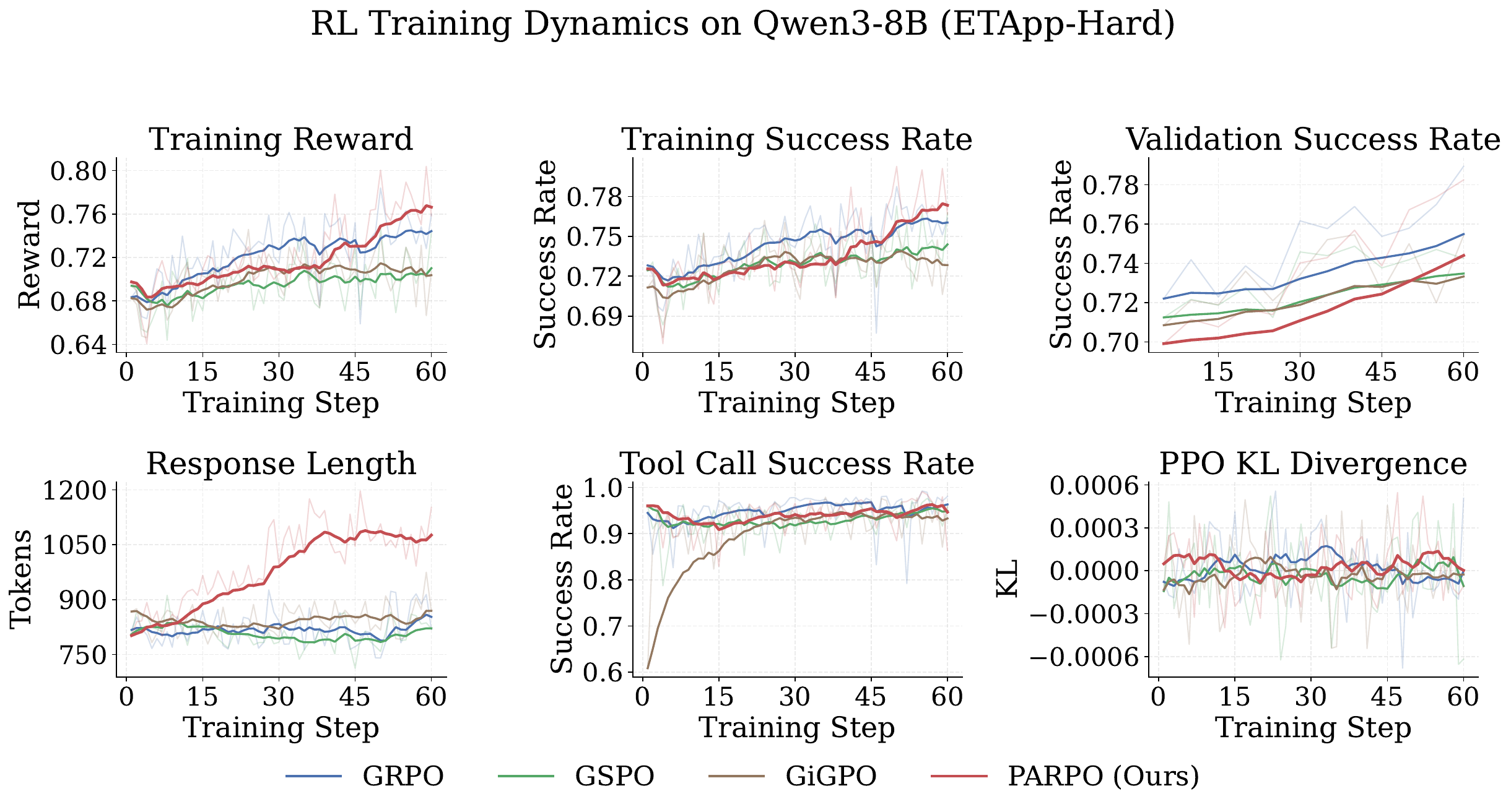}
    \caption{RL training dynamics of Qwen3-8B on ETAPP-Hard, comparing GRPO, GSPO, GiGPO, and PARPO.
    Consistent with the main text, all RL training experiments in this section are conducted without using our personalized reward model or Skill Graph.}
    \label{fig:rl_training_dynamics_etapp_hard}
\end{figure*}

\subsection{Effect of Personalized Reward Optimization on ETAPP-Hard}
\label{app:hard_reward_optimization}

We further analyze the effect of personalized reward optimization on ETAPP-Hard.
As in the main text, we compare GRPO, GSPO, GiGPO, and PARPO, and decompose the training process according to the four reward dimensions defined in ETApp, namely personalization, judge, proactivity, and procedure.
Here, \textit{judge} is a weighted aggregate metric computed from multiple evaluation criteria, and is therefore better interpreted as a summary of overall response quality rather than a single atomic capability.

As shown in Figure~\ref{fig:reward_decomposition_etapp_hard}, PARPO continues to show consistent advantages across all four reward dimensions under the more difficult ETAPP-Hard setting.
Although the absolute scores of all methods are lower than those on standard ETApp due to the increased difficulty, PARPO still achieves higher personalization, proactivity, and procedure scores throughout training, while the judge metric also shows a better overall trend.
These observations suggest that the benefits of PARPO are not confined to one isolated aspect, but instead lead to more robust improvements across multiple complementary dimensions.

More specifically, the improvement on personalization remains the most pronounced, indicating that PARPO is still better able to learn behaviors aligned with user preferences and contextual needs under challenging scenarios.
In addition, the sustained gain on proactivity suggests that PARPO is more effective at helping the model learn when to take initiative, when to invoke tools, and how to advance the interaction process more appropriately.
PARPO also achieves better results on the procedure dimension, indicating that it improves not only the final output quality, but also the organization and execution quality of the intermediate steps used to complete the task.

Figure~\ref{fig:reward_dimension_comparison_etapp_hard} further summarizes the final EMA scores at the last training step.
PARPO achieves the best results across all four reward dimensions, with the most notable advantage on personalization.
Meanwhile, the improvement on the judge score, as a weighted aggregate metric, also reflects a consistent gain in overall response quality.
Taken together, both the training curves and the final summary results show that the benefits of personalized reward optimization extend beyond the standard setting and remain effective in the more complex ETAPP-Hard scenario.

Overall, these results indicate that explicitly optimizing decomposed personalized rewards provides a more effective learning signal in difficult interactive environments.
Its advantage is reflected not only in the final scores, but also in more stable training, more balanced improvements across reward dimensions, and stronger proactive behaviors, further demonstrating the effectiveness and generalization ability of PARPO.

\begin{figure*}[t]
    \centering
    \includegraphics[width=0.92\textwidth]{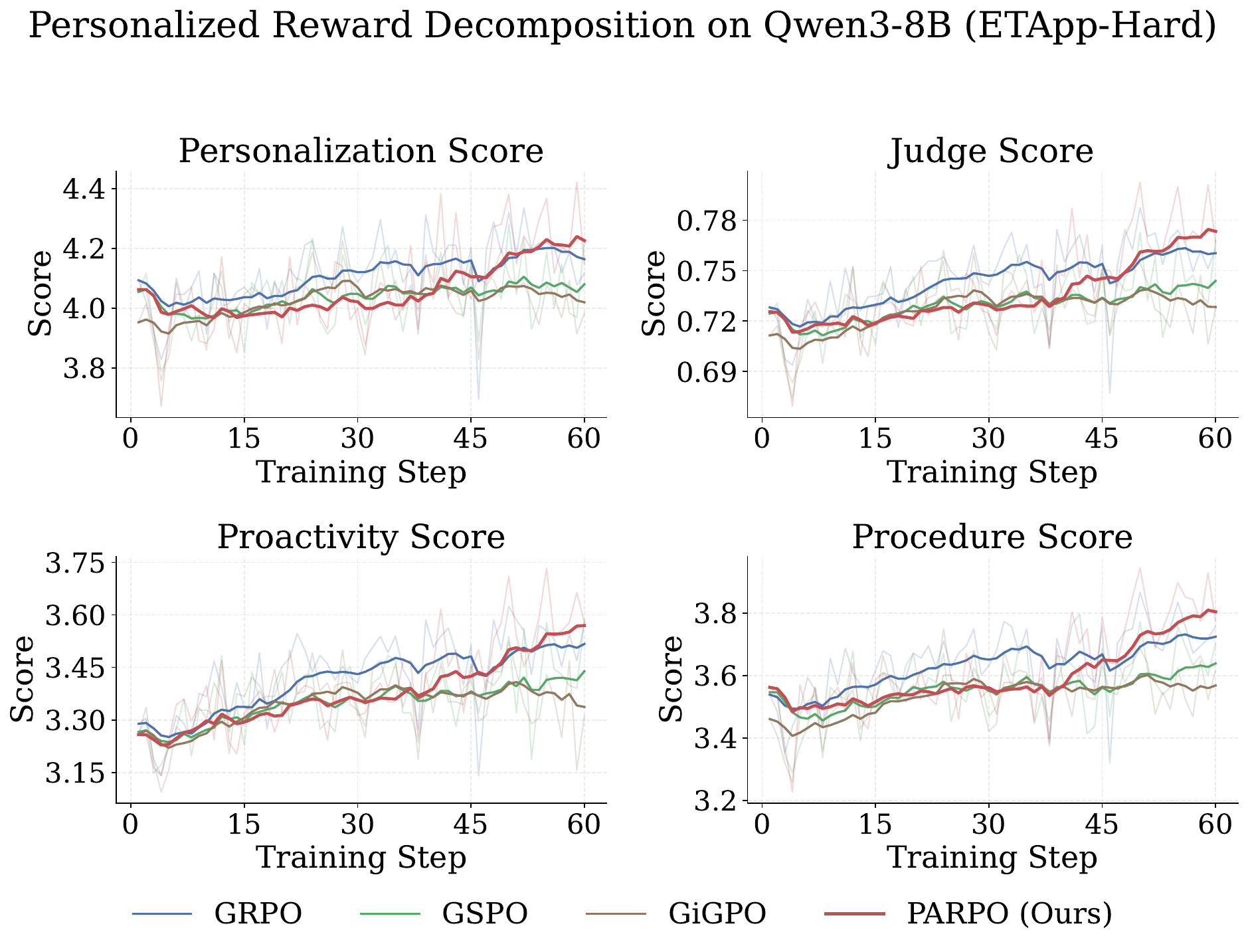}
    \caption{Personalized reward decomposition of Qwen3-8B on ETAPP-Hard, comparing GRPO, GSPO, GiGPO, and PARPO.}
    \label{fig:reward_decomposition_etapp_hard}
\end{figure*}

\begin{figure}[t]
    \centering
    \includegraphics[width=0.8\linewidth]{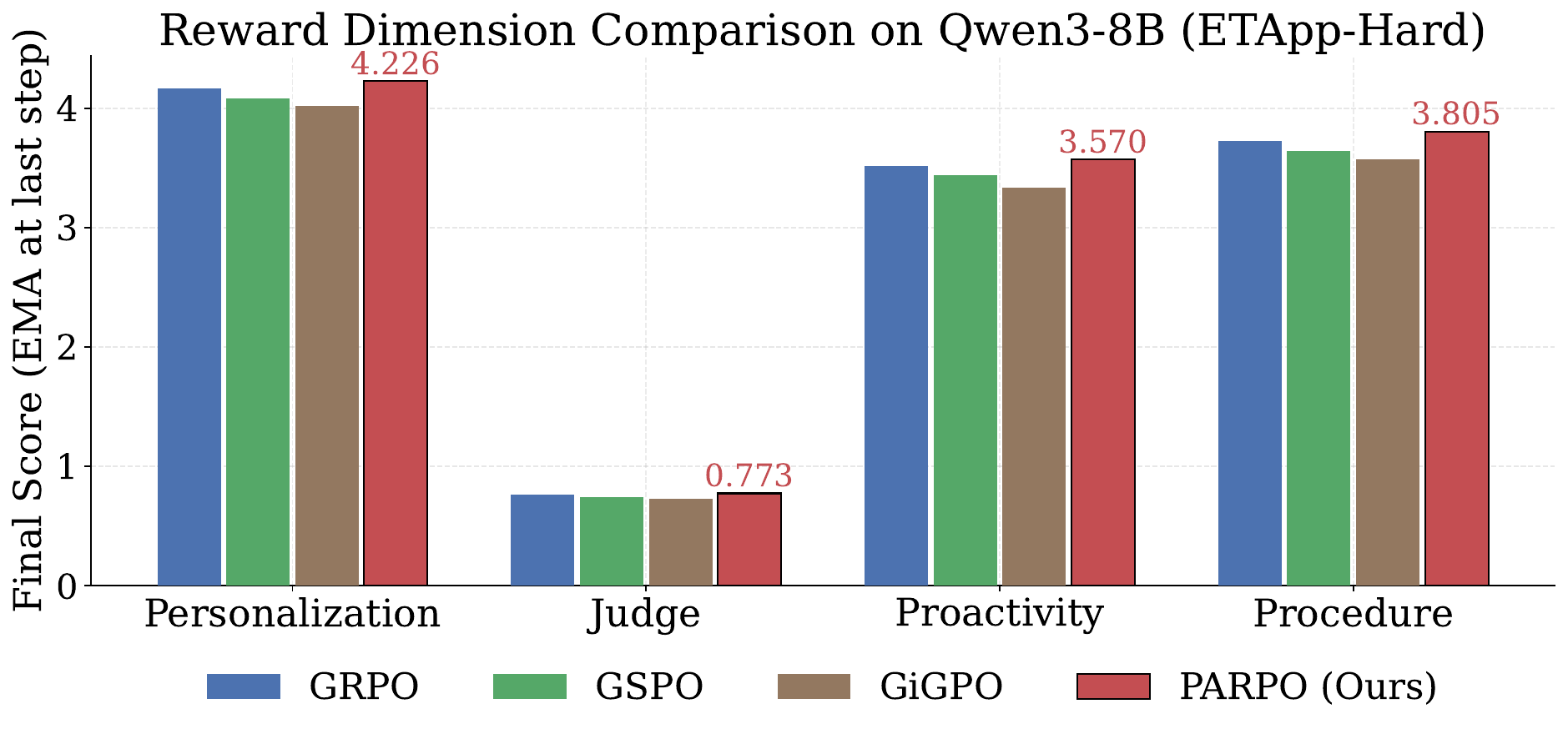}
    \caption{Final EMA scores at the last training step across different reward dimensions on ETAPP-Hard.}
    \label{fig:reward_dimension_comparison_etapp_hard}
\end{figure}

\section{Broader Impacts}
\label{app:broader_impacts}
This work studies personalized Agentic RL for user-conditioned tasks, aiming to improve how agents adapt to diverse user preferences in applications such as e-commerce assistance, travel planning, and daily scheduling. A potential positive societal impact is that personalized agents may provide more relevant and user-aligned recommendations, improving usability and decision support quality.

At the same time, this work may introduce risks. Personalized optimization depends on modeling user preferences and interaction histories, which may raise privacy concerns. Inaccurate or biased preference modeling may also reinforce stereotypes, amplify filter bubbles, or produce unfair recommendations across users. In addition, in domains such as e-commerce, personalized agents could be misused to optimize for engagement in manipulative ways rather than users' long-term interests.

To mitigate these risks, future deployments should incorporate privacy protection, user control over preference data, bias auditing across user groups, and monitoring for harmful or manipulative behavior. We hope this work encourages future research on personalized agents that are not only more effective, but also safer, fairer, and more transparent.

\section{Case Study: Shopping-Cart Alternatives}
\label{app:case_shopping_cart}

\tcbset{
    enhanced,
    boxrule=0.8pt,
    arc=2mm,
    boxsep=2pt,
    top=3pt,
    bottom=3pt,
    left=4pt,
    right=4pt,
    before skip=4pt,
    after skip=4pt,
}

\begin{tcolorbox}[
    colback=orange!8,
    colframe=orange!70!black,
    title=\textbf{Question},
    fonttitle=\bfseries
]
\textbf{User Query.}
\emph{Take a look at my shopping cart and recommend some cost-effective alternatives for this type of product.}

\smallskip
\textbf{User Context.}
Jordan Carter is a 28-year-old social media blogger in Los Angeles. Her profile suggests preferences for trendy and aesthetically appealing products, vegan-friendly lifestyle choices, travel gear, fashion, and tech accessories.
\end{tcolorbox}

\begin{tcolorbox}[
    colback=red!4,
    colframe=red!60!black,
    title=\textbf{PARPO (Ours)},
    fonttitle=\bfseries
]
Here is your personalized cost-effective alternatives guide.

\textbf{Current cart total:} \$374.95. \textbf{Potential savings:} up to \$117.12.

\begin{itemize}[leftmargin=1.4em, itemsep=0.15em, topsep=0.2em, parsep=0pt, partopsep=0pt]
    \item \textbf{Smart Home Speaker (\$129.99):} recommends Amazon Echo alternatives priced at \$49.99 and \$54.99, highlighting both savings and aesthetic fit for Jordan's home office.
    \item \textbf{Organic Cotton Bed Sheets (\$59.99):} suggests lower-cost cotton or microfiber options such as UTOPIA Bedding and Mellanni, with estimated savings of \$25--\$37.
    \item \textbf{The Midnight Library (\$14.99):} suggests Kindle, audiobook, or similar literary alternatives with lower prices and better portability.
\end{itemize}

The response explicitly connects recommendations to Jordan's lifestyle, including home aesthetics, travel convenience, and day-to-day content creation.
\end{tcolorbox}

\begin{tcolorbox}[
    colback=green!4,
    colframe=green!50!black,
    title=\textbf{SkillRL},
    fonttitle=\bfseries
]
The system reports that tool-based search does not return valid results, then falls back to practical recommendations based on Jordan's profile.

\begin{itemize}[leftmargin=1.4em, itemsep=0.15em, topsep=0.2em, parsep=0pt, partopsep=0pt]
    \item Confirms the shopping cart contents.
    \item Recommends Amazon Echo Dot as a lower-cost smart-speaker alternative.
    \item Suggests vegan sneaker alternatives from Reformation or Pact.
    \item Suggests lower-cost travel backpack alternatives from Columbia or Patagonia.
\end{itemize}

The response also states that these options remain compatible with Jordan's preference for sustainable, trendy, and content-creation-friendly products.
\end{tcolorbox}

\begin{tcolorbox}[
    colback=blue!4,
    colframe=blue!55!black,
    title=\textbf{GPT-4o},
    fonttitle=\bfseries
]
Here is a breakdown of the items in your cart and some lower-cost alternatives.

\begin{itemize}[leftmargin=1.4em, itemsep=0.15em, topsep=0.2em, parsep=0pt, partopsep=0pt]
    \item \textbf{Smart speaker:} suggests Anker or Echo Dot alternatives in the \$50--80 range.
    \item \textbf{Bed sheets:} suggests Mellanni or Amazon Basics in the \$30--40 range.
    \item \textbf{Book:} suggests eBook or second-hand versions at lower prices.
    \item \textbf{Vegan sneakers:} suggests NAE Vegan Shoes or Will's Vegan Store.
    \item \textbf{Travel backpack:} suggests Lowepro or Amazon Basics options in the \$40--60 range.
\end{itemize}

The response ends by asking whether the user wants more specific alternatives for any item.
\end{tcolorbox}

\begin{tcolorbox}[
    colback=gray!8,
    colframe=black!60,
    title=\textbf{Claude 3.5 Sonnet},
    fonttitle=\bfseries
]
The assistant says that the current shopping cart has not been viewed or accessed in the provided context. It asks the user to clarify:

\begin{itemize}[leftmargin=1.4em, itemsep=0.15em, topsep=0.2em, parsep=0pt, partopsep=0pt]
    \item what product or category is in the cart,
    \item the desired budget range,
    \item and preferred brand or quality level.
\end{itemize}

It then states that, once such information is provided, it can recommend lower-cost options aligned with Jordan's preferences.
\end{tcolorbox}

\newpage

\end{document}